\newcommand{\nop}[1]{}
\begin{document}
\title{Scalable Kernel Clustering: Approximate Kernel \textit{k}-means}
\author{Radha~Chitta,
        Rong~Jin,
        Timothy~C.~Havens
        and~Anil~K.~Jain
\IEEEcompsocitemizethanks{\IEEEcompsocthanksitem R. Chitta, R. Jin, and A.K. Jain are with the Department
of Computer Science and Engineering, Michigan State University, East Lansing,
MI, 48824. T.C. Havens is with the Department of Electrical and Computer Engineering, Michigan Tech University, Houghton, MI, 49931.
E-mail: chittara@msu.edu, \{rongjin,jain\}@cse.msu.edu, thavens@mtu.edu
}
\thanks{A previous version of this paper appeared as~\cite{chitta2011approx}. In this version, we extend the proposed method to use ensemble clustering techniques and 
further enhance its performance. We also provide a tighter bound on the kernel approximation error when compared to that of the naive Nystrom 
approximation method. Empirical results on two additional large data sets along with more baseline techniques are presented to demonstrate the accuracy and scalability of the proposed algorithm.}}

\date{}

\IEEEcompsoctitleabstractindextext{
\begin{abstract}
Kernel-based clustering algorithms have the ability to capture the non-linear
structure in real world data. Among various kernel-based clustering algorithms,
kernel \textit{k}-means has gained popularity due to its simple iterative nature
and ease of
implementation. However, its run-time
complexity and memory footprint increase quadratically in terms of the size of
the data set, and hence, large data sets cannot be clustered efficiently. In
this
paper, we propose an approximation scheme based on randomization, called the
\emph{Approximate Kernel \textit{k}-means}. We approximate the cluster centers using the kernel similarity between a few sampled points and all the points in the data set. 
We show that the proposed method achieves better clustering performance than the traditional low rank kernel approximation based clustering schemes. We also demonstrate 
that it's running time and memory
requirements are significantly lower than
those of
kernel \textit{k}-means, with only a small reduction in the clustering quality on several public domain large data sets.  
We
then employ ensemble clustering techniques to further enhance the performance of
our algorithm.
\end{abstract}

\begin{keywords}
Clustering, Large Scale Clustering, Kernel Clustering, \textit{k}-means, scalability, ensemble clustering, meta-clustering.
\end{keywords}}

\maketitle
\IEEEpeerreviewmaketitle
\IEEEdisplaynotcompsoctitleabstractindextext
\IEEEpeerreviewmaketitle
\section{Introduction}
Recent advances in data generation, collection and storage technologies have
resulted in a \emph{digital data explosion}.  A study by IDC and
EMC Corp\footnote{Refer \url{http://idcdocserv.com/1142}} predicted the creation of 8 trillion gigabytes of
digital data by the year 2015. Massive amounts
of data are generated
through online services like blogs, e-mails and social networks in the form of
text, images, audio and video. Clustering is one
of the principal tools to efficiently organize such 
large amounts of data and to enable convenient access. It has
found use in a multitude of applications such as web search, social network
analysis, image retrieval, medical imaging, gene
expression analysis, recommendation systems and market
analysis~\cite{jain2010data}.

Most algorithms capable of clustering large data sets 
assume that the clusters in the data set are linearly separable and group the
objects based on their pairwise Euclidean distances. On the other hand,
kernel-based clustering algorithms employ a non-linear distance measure, defined
in terms of a positive-definite kernel, to compute the similarity. The kernel
function embeds the objects in a
high-dimensional feature space, in which the clusters are more likely to be
separable.
Kernel clustering algorithms, therefore, have the ability to
capture the non-linear structure in real world data sets and, thus, usually perform
better than the
Euclidean distance based clustering algorithms~\cite{kim2005evaluation}.

A number of kernel-based clustering
methods such as spectral clustering~\cite{von2007tutorial},
kernel Self-Organizing Maps (SOM)~\cite{macdonald2002kernel} and kernel
neural gas~\cite{qin2004kng} have been proposed. In this study, we focus on
kernel \textit{k}-means~\cite{girolami2002mercer,scholkopf1996nonlinear} due to
its
simplicity and efficiency. 
In addition, several studies have
established the equivalence of kernel \textit{k}-means and other kernel-based
clustering methods, suggesting that they yield
similar results~\cite{dhillon2004unified,ding2005equivalence,zha2001spectral}.

Kernel \textit{k}-means~\cite{girolami2002mercer}, a non-linear extension of the classical
\textit{k}-means algorithm, replaces the Euclidean distance function
$d^{2}(x_a,x_b) = \| x_a - x_b\|^2$ employed in the \textit{k}-means algorithm
with a non-linear kernel distance function defined as
\[
 d_{\kappa}^{2}(x_a,x_b) = \kappa(x_a,x_a) + \kappa(x_b,x_b) - 2\kappa(x_a,x_b),
\]
where $x_a \in \Re^d$ and $x_b \in \Re^d$ are two data points and 
$\kappa(.,.) : \Re^d \times \Re^d \rightarrow \Re $ is the kernel function.
While the kernel distance function enables the clustering algorithm to capture
the non-linear structure in data, it requires computation and storage of an $n
\times n$ kernel matrix in memory, where $n$ is the number of data points to be clustered. 
This renders kernel \textit{k}-means non-scalable to data sets with
more than a few thousands of data points, on a standard workstation.
In this paper, we address the challenge posed by the large kernel matrix.

The Nystrom method for kernel approximation has been successfully employed in several learning problems~\cite{belabbas2009spectral,drineas2005nystrom,fowlkes2004spectral,williams2001using}. 
The naive Nystrom approximation method~\cite{gittens2011spectral} randomly samples a small number of points from the data set and 
computes a low rank approximation of the kernel matrix using the similarity between all the points and the sampled points.  
The spectral clustering algorithm was adapted to use this low rank approximate kernel~\cite{5444877,fowlkes2004spectral,liu2010segmentation,Yan:EECS-2009-45}. 
The clusters are found by approximating the top eigenvectors of the kernel using the similarity between a subset of randomly selected data points.  
The proposed algorithm, named \emph{Approximate kernel \textit{k}-means (aKKm)}, follows along the idea of the Nystrom approximation and avoids computing the full $n \times n$ kernel
matrix. We randomly select a subset of $m$ data points ($m \ll n$), and approximate the
cluster centers using vectors in the subspace spanned by this subset. This
approximation requires the computation and storage of only the $n \times m$  portion of
the kernel matrix, leading to a significant speedup of kernel
\textit{k}-means. \textbf{We demonstrate, both theoretically and empirically, that aKKm yields similar
clustering performance as kernel \textit{k}-means using the full kernel matrix.} Unlike the spectral clustering algorithm based on the naive Nystrom extension~\cite{fowlkes2004spectral}, 
our method 
uses information from all the eigenvectors of the approximate kernel matrix (without explicitly computing them), thereby yielding more accurate clustering results. 
We further improve the
efficacy of our algorithm 
through ensemble clustering methods.

\section{Background}
\label{sec:background}
We first briefly describe some of the related work on large scale
clustering and kernel-based clustering, and then outline
the kernel \textit{k}-means algorithm. 

\subsection{Large scale clustering} A number of methods have been developed to 
efficiently cluster large data sets. Incremental
clustering~\cite{can1993incremental,can1995incremental} and divide-and-conquer
based clustering algorithms~\cite{aggarwal2003framework,guha2003clustering} were
designed to operate in a single pass over the data points, thereby reducing the
time required for clustering. Sampling based methods, such as
CLARA~\cite{kaufmanfinding} and CURE~\cite{guha2001cure}, reduce the computation
time by finding the cluster centers based on a small number of randomly selected
data points. The coreset algorithms~\cite{har2004coresets} represent the
data set using a small set of \emph{core} data points and find the cluster
centers using only these core data points. Clustering algorithms such as
BIRCH~\cite{zhang1996birch} and CLARANS~\cite{ng2002clarans} improve the
clustering efficiency by summarizing the data set into data structures like
trees and graphs, thus enabling efficient data access. 

With the evolution of cloud computing, parallel processing techniques for 
clustering are gaining popularity~\cite{das2007google,ene2011fast}. These
techniques
speedup the clustering process by first dividing the task into a number of
independent sub-tasks that can be performed simultaneously, and then efficiently
merging these solutions into the final solution. For instance, in
\cite{ene2011fast}, the 
MapReduce framework~\cite{ranger2007evaluating} is employed to speedup 
the \textit{k}-means and the \textit{k}-medians clustering algorithms. The data
set is
split among many processors 
and a small representative data sample is obtained from each of the processors.  
These representative data points are then clustered to obtain the cluster
centers or medians.

\subsection{Kernel-based clustering} Most of the existing methods for large 
scale clustering compute the pairwise dissimilarities between the data points
using the Euclidean distance measure. As a result, 
they cannot accurately cluster data sets that are not linearly separable. Kernel
based
clustering techniques address this limitation by employing a non-linear kernel
distance
function to capture the non-linear structure in data~\cite{kim2005evaluation}.
Various kernel-based clustering algorithms have been developed, including kernel
\textit{k}-means~\cite{scholkopf1996nonlinear,girolami2002mercer}, spectral
clustering~\cite{von2007tutorial}, kernel SOM~\cite{macdonald2002kernel} and
kernel neural
gas~\cite{qin2004kng}.

Scalability is a major challenge faced by all the kernel-based algorithms, as
they 
require computation of the full kernel matrix whose size is quadratic in the
number
of data points. To the best of our knowledge,
only a few attempts have been
made to scale kernel clustering algorithms to large data sets. In~\cite{zhang2002large}, the memory requirement is reduced by dividing the
kernel
matrix into blocks and using one block of the kernel matrix at a time. Although
this technique handles the memory complexity, it still requires the computation
of the full kernel matrix. The leaders clustering algorithm is integrated with kernel \textit{k}-means to reduce its computational complexity in~\cite{sarma2012speeding}. 
However, this method is data order-dependent and does not always produce accurate results. 
Sampling methods, such as the Nystrom
method~\cite{williams2001using}, have been employed to obtain low rank
approximation of the kernel matrix to
address this challenge~\cite{fowlkes2004spectral, liu2010segmentation}. 
In~\cite{sakai2009fast}, 
random projection is combined with sampling to further improve the clustering
efficiency. However, these methods rely on the approximation of the top eigenvectors of the kernel matrix. 
In our method, we propose to use the approximate kernel matrix directly to find the clusters and show that this leads to more accurate clustering of the data. 
\subsection{Kernel \textit{k}-means}
Let $X = \{x_1,x_2,...,x_n\}$ be the input data set consisting of $n$ data 
points, where $x_i \in \Re^d$, $C$ be the number of clusters and $K \in
\Re^{n\times n}$ be the kernel matrix with $K_{ij} = \kappa(x_i, x_j)$, where
$\kappa(\cdot, \cdot)$ is the kernel function. Let $\Hk$ be the Reproducing
Kernel Hilbert Space (RKHS) endowed by the kernel function $\kappa(\cdot,
\cdot)$, and $|\cdot|_{\Hk}$ be the functional norm for $\Hk$. The objective of
kernel \textit{k}-means is to minimize the \textit{clustering error}, defined as
the sum of squared distances between the data points and the center of the
cluster to which each point is assigned. Hence, the kernel \textit{k}-means
problem can be cast as the following optimization problem~\cite{zha2001spectral}:
\begin{eqnarray}
\min\limits_{U  \in \P} \max\limits_{\{c_k(\cdot) \in \Hk\}_{k=1}^C}
\sum_{k=1}^C \sum_{i=1}^n U _{ki} |c_k(\cdot) - \kappa(x_i, \cdot)|_{\Hk}^2,
\label{eqn:kk-1}
\end{eqnarray}
where $U  = (\u_1, \ldots, \u_C)^{\top}$ is the cluster membership matrix, 
$c_k(\cdot) \in \Hk, k \in [C]$ are the cluster centers, and domain $\P = \{ U
\in \{0, 1\}^{C\times n}: U^{\top}\mathbf{1} = \mathbf{1} \}$, where
$\mathbf{1}$ is a vector of all ones. Let $n_k = \u_k^{\top}\mathbf{1}$ be
the number of data points assigned to the $k^{th}$ cluster, and 
\begin{eqnarray*}
\Uh& = & (\uh_1, \ldots, \uh_C)^{\top} = [\diag(n_1, \ldots, n_C)]^{-1} U , \\
\Ut & = & (\ut_1, \ldots, \ut_C)^{\top} = [\diag(\sqrt{n_1}, \ldots,
\sqrt{n_C})]^{-1}U ,
\end{eqnarray*}
denote the $\ell_1$ and $\ell_2$ normalized membership matrices, respectively.

The problem in \eqref{eqn:kk-1} can be relaxed to the following optimization problem over
$U$~\cite{zha2001spectral}:
\begin{eqnarray}
\min\limits_{U } \tr(K) - \tr(\Ut K \Ut^{\top}), \label{eqn:kk-1-a}
\end{eqnarray}
and the optimal cluster centers found using 
\begin{eqnarray}
c_k(\cdot) = \sum_{i=1}^n \Uh_{ki} \kappa(x_i, \cdot), k \in [C].
\label{eqn:c-1}
\end{eqnarray}
As indicated in \eqref{eqn:kk-1-a}, a naive implementation of kernel
\textit{k}-means requires computation and storage of the full $n \times n$
kernel matrix $K$, restricting its scalability. The objective of our work is to reduce the
computational complexity and the memory requirements of kernel \textit{k}-means.
\section{Approximate Kernel \textit{k}-means} 
\label{sec:approxkkmeans}
A simple and naive approach for reducing the complexity of kernel 
\textit{k}-means is to randomly sample $m$ points from the data set to be
clustered, and find the cluster centers based only on the sampled
points; then assign every unsampled data point to the cluster whose center is
nearest. We refer to this two-step process as the \emph{two-step kernel
\textit{k}-means} (tKKm), detailed in Algorithm~\ref{alg:2step_kkmeans}. Though this
approach has reduced run-time complexity and memory requirements, its
performance does not match that of the kernel \textit{k}-means algorithm,
unless it
is provided with a sufficiently large sample of data points.

We propose a superior approach for reducing the complexity of kernel 
\textit{k}-means based on the fact that kernel \textit{k}-means requires the full $n \times n$ kernel matrix $K$ only because the the cluster centers 
$\{c_k(\cdot), k \in [C]\}$ are represented as linear combinations of
\textit{all} the data points to be clustered (see \eqref{eqn:c-1})~\cite{scholkopf2001generalized}. In other words, the cluster centers lie in the space spanned
by \textit{all} the data points, i.e., $c_k(\cdot) \in \H_a = \spana(\kappa(x_1,
\cdot), \ldots, \kappa(x_n, \cdot)), k \in [C]$. We can avoid computing the full kernel
matrix if we restrict the cluster centers to a smaller subspace
$\H_b \subset \H_a$. $\H_b$ should be constructed such that (i) $\H_b$ is small
enough to allow efficient computation, and (ii) $\H_b$ is rich enough to yield
similar clustering results as those obtained using $\H_a$. We employ a simple 
approach of randomly sampling $m$ data points ($m \ll
n$), denoted by $\Xh = \{\xh_1, \ldots, \xh_m \}$, and construct the subspace
$\H_b = \spana(\xh_1, \ldots, \xh_m)$. Given the subspace $\H_b$, we modify
\eqref{eqn:kk-1} as
\begin{eqnarray}
\min\limits_{U  \in \P} \max\limits_{\{c_k(\cdot) \in \H_b\}_{k=1}^C} 
\sum_{k=1}^C \sum_{i=1}^n U _{ki} |c_k(\cdot) - \kappa(x_i, \cdot)|_{\Hk}^2.
\label{eqn:kk-2}
\end{eqnarray}
Let $K_B \in \Re^{n\times m}$ represent the kernel similarity matrix between
data points in $X$ and the sampled data points in $\Xh$, and $\Kh \in \Re^{m\times
m}$ represent the kernel similarity between the sampled data points. The
following lemma allows us to reduce \eqref{eqn:kk-2} to an optimization problem
involving only the cluster membership matrix $U$.
\begin{lemma} \label{lemma:1}
Given the cluster membership matrix $U $, the optimal cluster centers in
\eqref{eqn:kk-2} are given by
\begin{eqnarray}
c_k(\cdot) = \sum_{i=1}^m \alpha_{ki} \kappa(\xh_i, \cdot),
\end{eqnarray}
where $\alpha = \Uh K_B\Kh^{-1}$. The optimization problem for $U $ is given by
\begin{eqnarray}
\min\limits_{U } \tr(K) - \tr(\Ut K_B \Kh^{-1} K_B^{\top} \Ut^{\top}).
\label{eqn:kk-3}
\end{eqnarray}
\end{lemma}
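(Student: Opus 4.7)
The plan is to exploit the finite-dimensional structure imposed by restricting $c_k(\cdot) \in \H_b$, thereby reducing the inner optimization over cluster centers to a quadratic problem in a coefficient matrix $\alpha \in \Re^{C\times m}$, solve that problem in closed form, and substitute back to obtain an objective that depends only on $U$.

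First, I would parameterize $c_k(\cdot) = \sum_{i=1}^m \alpha_{ki}\,\kappa(\xh_i, \cdot)$ for each $k \in [C]$. Using the reproducing property of $\Hk$, a direct expansion gives
\begin{equation*}
|c_k(\cdot) - \kappa(x_i, \cdot)|_{\Hk}^2 \;=\; \alpha_k^\top \Kh \alpha_k \;-\; 2\,(K_B \alpha_k)_i \;+\; K_{ii},
\end{equation*}
where $\alpha_k \in \Re^m$ denotes the $k$-th row of $\alpha$ viewed as a column vector. Summing with weights $U_{ki}$ and using $\sum_i U_{ki} = n_k$ together with $U^\top \mathbf{1} = \mathbf{1}$, the objective in \eqref{eqn:kk-2} reduces to
\begin{equation*}
\tr(K) \;+\; \sum_{k=1}^C \bigl[\, n_k\,\alpha_k^\top \Kh \alpha_k \;-\; 2\,\u_k^\top K_B \alpha_k \,\bigr].
\end{equation*}

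Next, for fixed $U$ I would optimize each $\alpha_k$ by setting the gradient with respect to $\alpha_k$ to zero, which, assuming $\Kh$ is invertible, yields $\alpha_k = n_k^{-1}\,\Kh^{-1} K_B^\top \u_k$. Stacking these rows, this becomes $\alpha = \Uh K_B \Kh^{-1}$, establishing the first assertion of the lemma.

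Finally, I would substitute this optimum back into the objective. The quadratic and linear contributions combine into $-\,n_k^{-1}\,\u_k^\top K_B \Kh^{-1} K_B^\top \u_k$ per cluster; recognizing that $\u_k/\sqrt{n_k}$ is exactly the $k$-th row of $\Ut$, the sum over $k$ collapses to $\tr(\Ut K_B \Kh^{-1} K_B^\top \Ut^\top)$, which yields \eqref{eqn:kk-3}. The main care required is the bookkeeping that keeps the three normalized membership matrices $U$, $\Uh$, $\Ut$ properly distinguished and verifies that the vector/matrix orientations in the quadratic form match their definitions; the only nontrivial hypothesis is the invertibility of $\Kh$, which is generic for sampled points under a strictly positive-definite kernel and can otherwise be handled by replacing $\Kh^{-1}$ with a pseudoinverse without altering the argument.
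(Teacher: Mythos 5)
Your argument is correct and follows essentially the same route as the paper's own proof: parameterize each center by its coefficients in the span of the sampled points, reduce the inner problem to the quadratic form $\tr(K)+\sum_k\bigl(n_k\alpha_k^{\top}\Kh\alpha_k-2\u_k^{\top}K_B\alpha_k\bigr)$, solve for $\alpha_k=\Kh^{-1}K_B^{\top}\uh_k$, and substitute back to obtain \eqref{eqn:kk-3}. Your added remarks on the role of $U^{\top}\mathbf{1}=\mathbf{1}$ in producing the $\tr(K)$ term and on handling a singular $\Kh$ via the pseudoinverse are sound refinements of details the paper leaves implicit.
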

\begin{proof}
 Let $\varphi_i = (\kappa(x_i, \xh_1), \ldots, \kappa(x_i, \xh_m))$ and 
\newline $\alpha_i  =  (\alpha_{i1}, \ldots , \alpha_{im}) $  be the $i$-th rows
of matrices $K_B$ and $\alpha$ respectively. As $c_k(\cdot) \in \H_b =
\spana(\xh_1, \ldots, \xh_m)$, we can write $c_k(\cdot)$ as
\[
    c_k(\cdot) = \sum_{i=1}^m \alpha_{ki} \kappa(\xh_i, \cdot).
\]
and write the objective function in (9) as
\begin{eqnarray}
& & \sum_{k=1}^C \sum_{i=1}^n U _{ki}|c_k(\cdot) - \kappa(x_i, \cdot)|_{\Hk}^2
\nonumber \\
& = & \tr(K) + \sum_{k=1}^C\left( n_k\alpha_k^{\top} \Kh\alpha_k - 2
\u_k^{\top}K_B\alpha_k\right). \label{eqn:temp-1}
\end{eqnarray}
By minimizing over $\alpha_k$, we have
\[
    \alpha_k = \Kh^{-1}K_B^{\top} \uh_k, k \in [C]
\]
and therefore, $\alpha = \Uh K_B\Kh^{-1}$. We complete the proof by substituting
the expression for $\alpha$ into \eqref{eqn:temp-1}.
\end{proof}

As indicated by Lemma~\ref{lemma:1}, we need to compute only
$K_B$ for finding the cluster memberships\footnote{$\Kh$ is part of $K_B$ and
therefore 
does not need to be computed separately.}. When $m \ll n$,
this computational cost is significantly smaller than that of computing
the full kernel matrix. 
On the other hand, when $m = n$, i.e., all the data points are selected for 
constructing the subspace $\H_b$, we have $\Kh = K_B = K$ and the problem
in \eqref{eqn:kk-3} reduces to \eqref{eqn:kk-1-a}.  We refer to
the proposed algorithm as \emph{Approximate Kernel \textit{k}-means} (aKKm), outlined
in Algorithm~\ref{alg:approx_kkmeans}.
Fig.~\ref{fig:synth_data} illustrates and compares this algorithm with tKKm 
on a $2$-dimensional synthetic data set. Note that the problem in \eqref{eqn:kk-3} 
can also be viewed as approximating 
the kernel matrix $K$ in \eqref{eqn:kk-2} by $K_B \Kh^{-1} K_B^{\top}$, which is
essentially the Nystrom method for low rank matrix approximation. 
However, our method offers two advantages over previous learning methods which employ 
the Nystrom approximation. Firstly, we do not need to explicitly compute the top eigenvectors of the approximate kernel matrix, 
resulting in a higher speedup over kernel \textit{k}-means. Secondly, our method uses the approximate kernel matrix directly to estimate the clusters instead 
of the top eigenvectors. We demonstrate through our analysis that this leads to a more accurate solution than that obtained by the earlier methods.
\begin{figure*}
\centering
\begin{tabular}{cc}
\subfigure[]{\label{fig:synth1}\includegraphics[width=3cm,height=2.3cm]{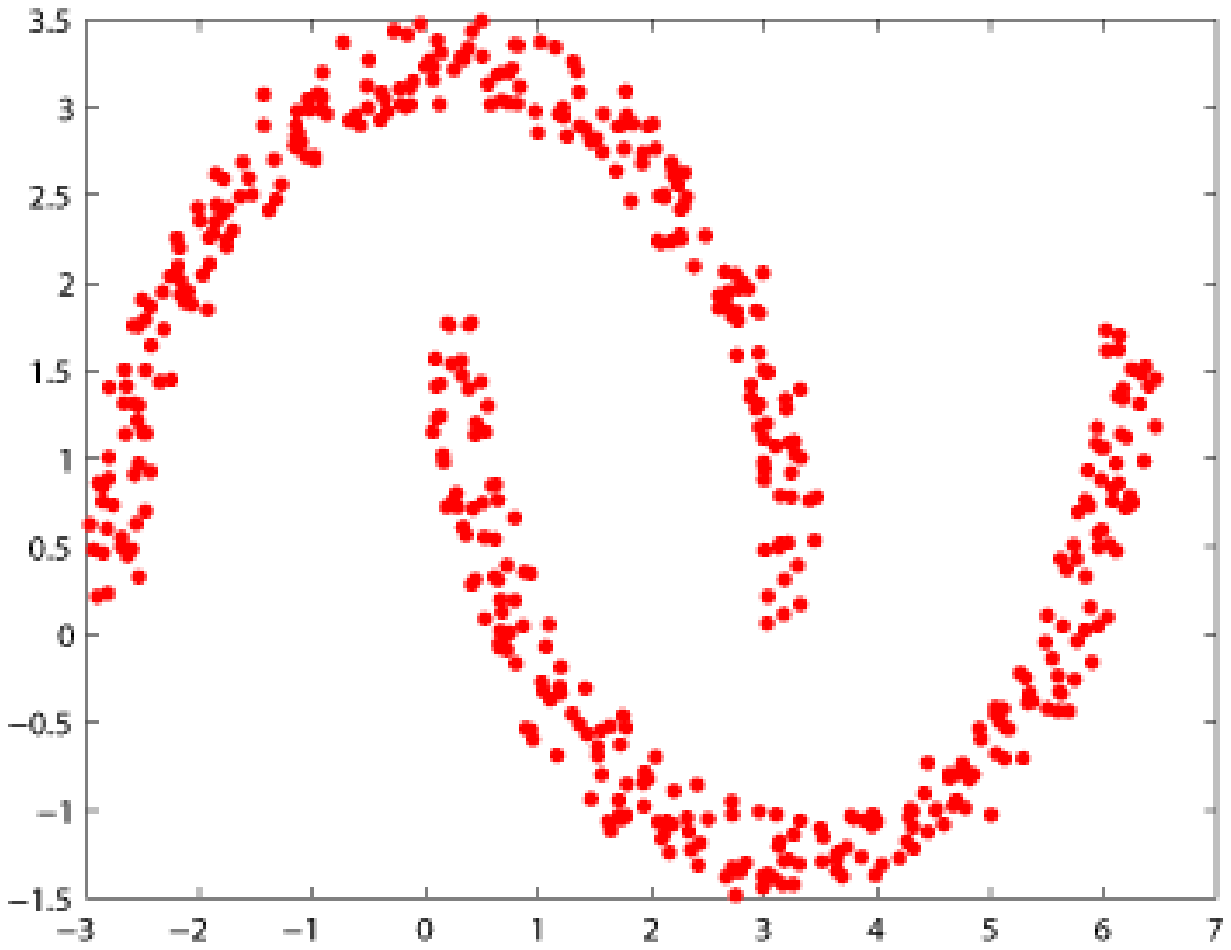}} &
\subfigure[]{\label{fig:synth2}\includegraphics[width=3cm,height=2.3cm]{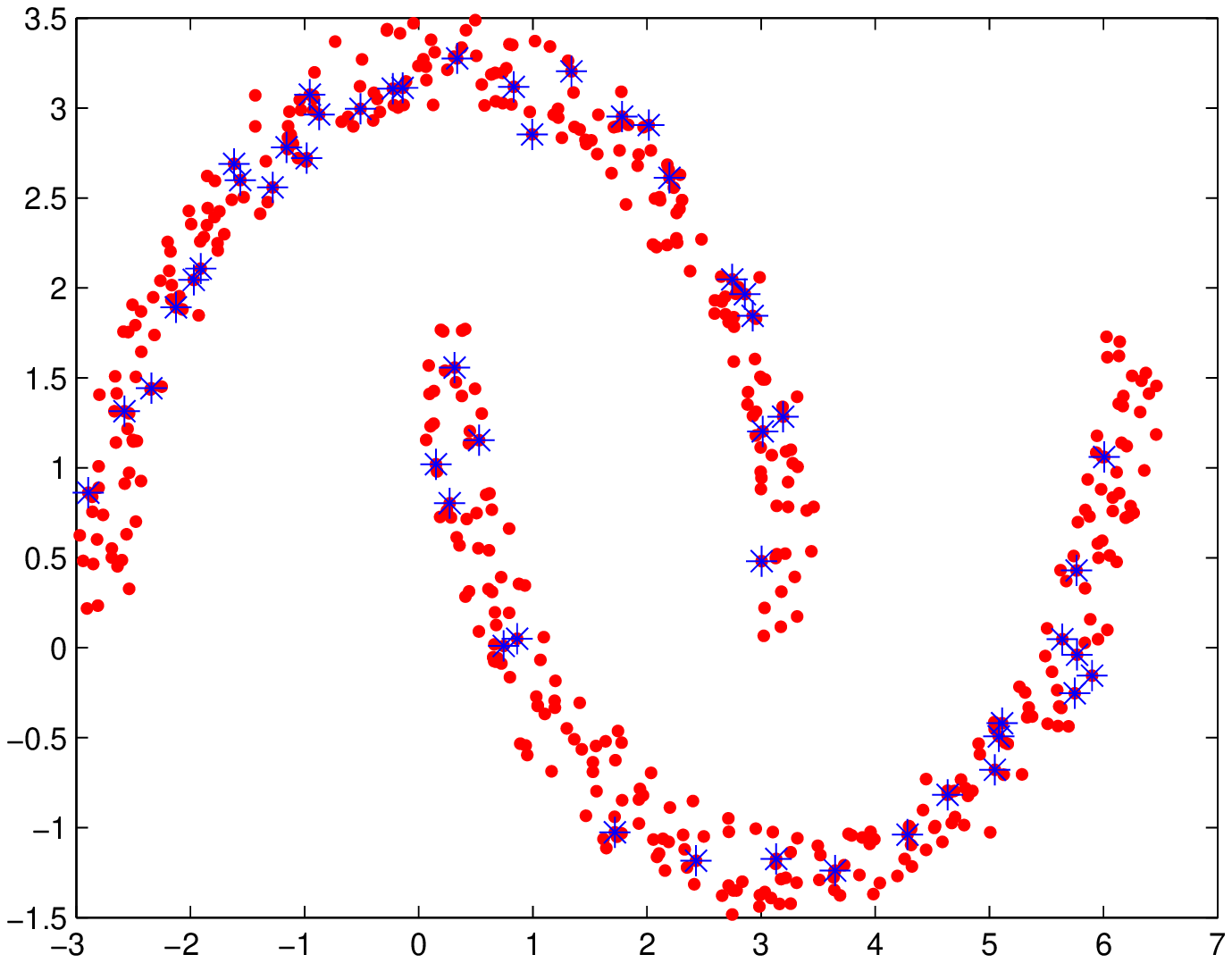}}
\end{tabular}
\begin{tabular}{ccc}
\subfigure[]{\label{fig:synth3}\includegraphics[width=3cm,height=2.3cm]{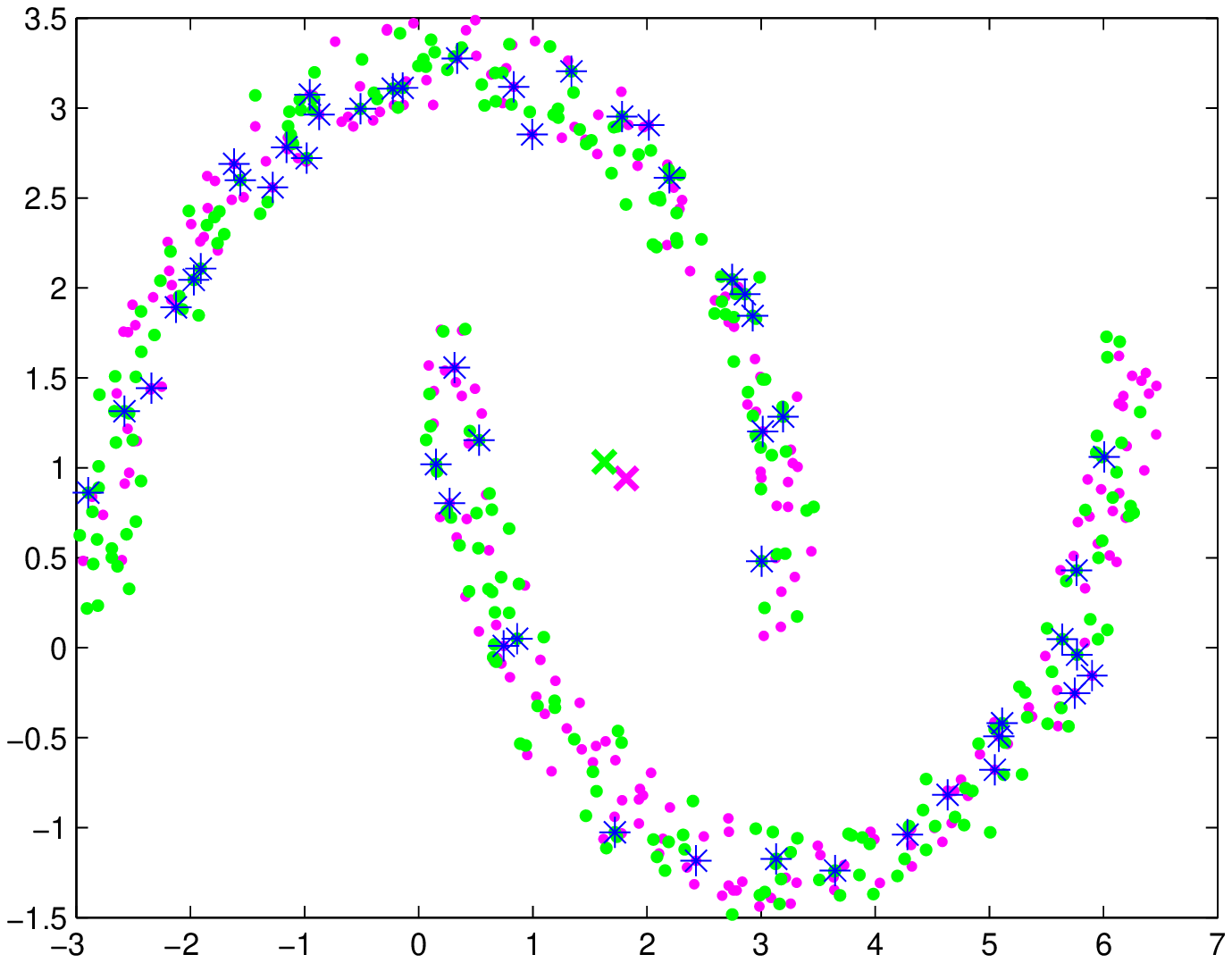}} & 
\subfigure[]{\label{fig:synth4}\includegraphics[width=3cm,height=2.3cm]{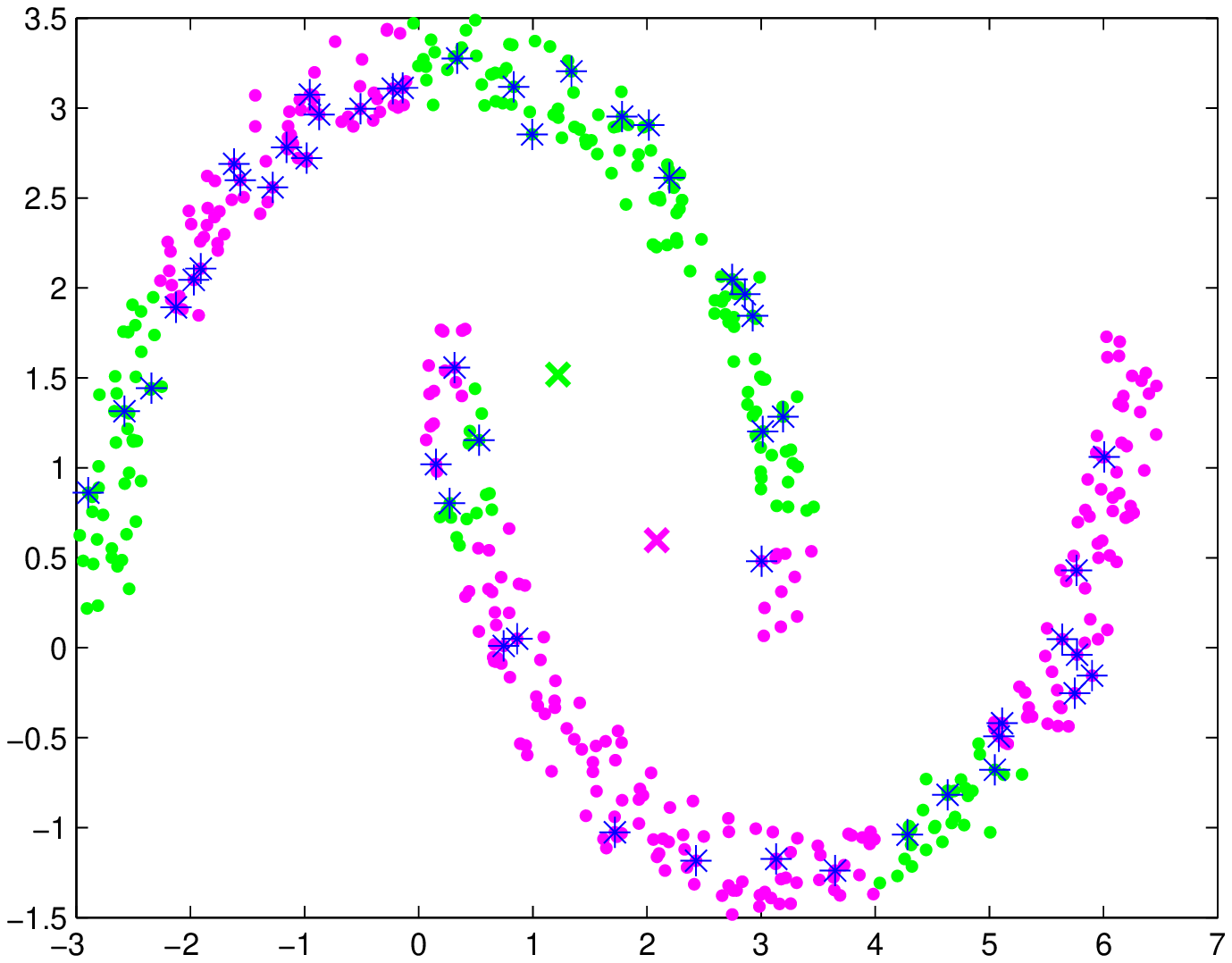}} &
\subfigure[]{\label{fig:synth5}\includegraphics[width=3cm,height=2.3cm]{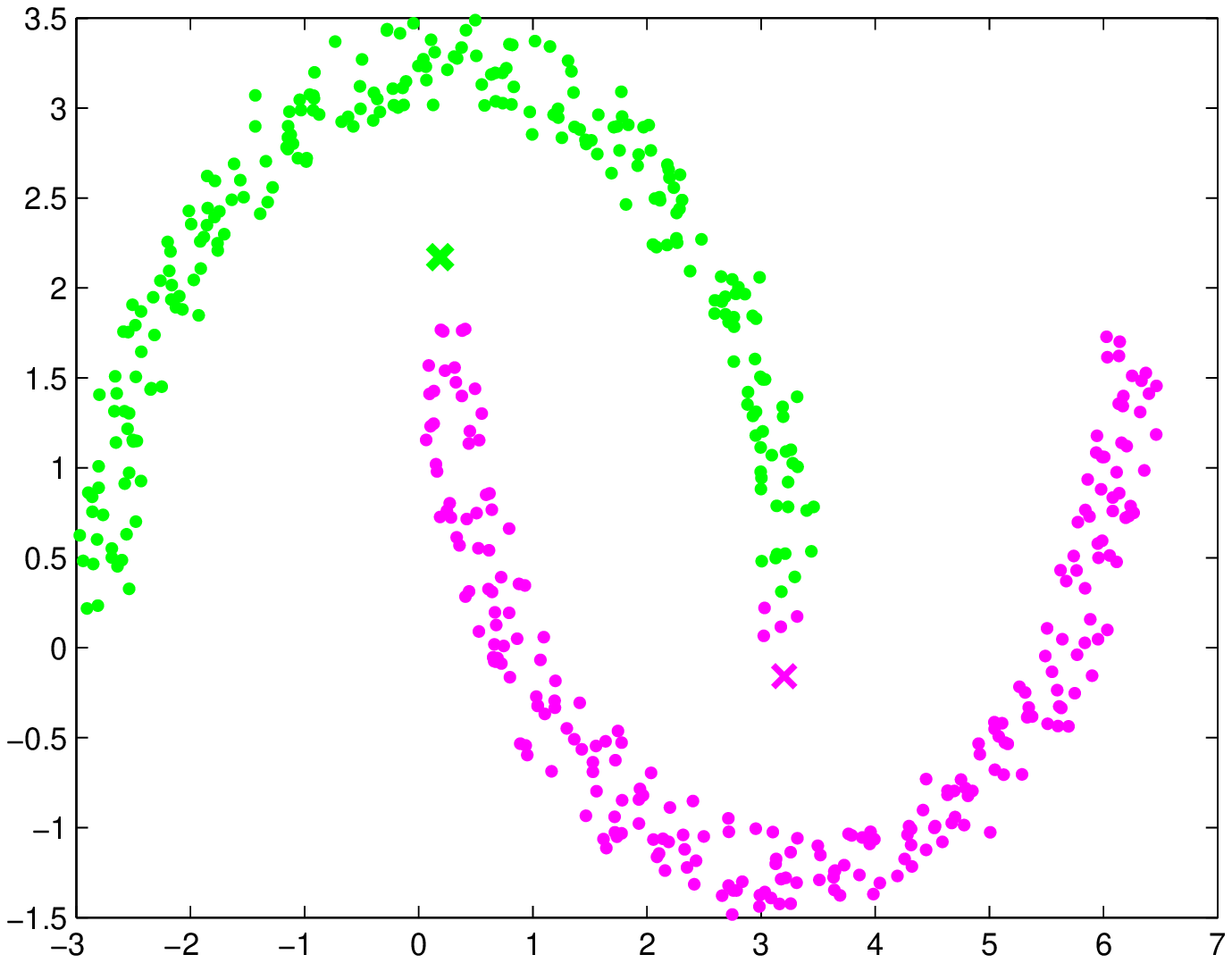}} \\
\subfigure[]{\label{fig:synth6}\includegraphics[width=3cm,height=2.3cm]{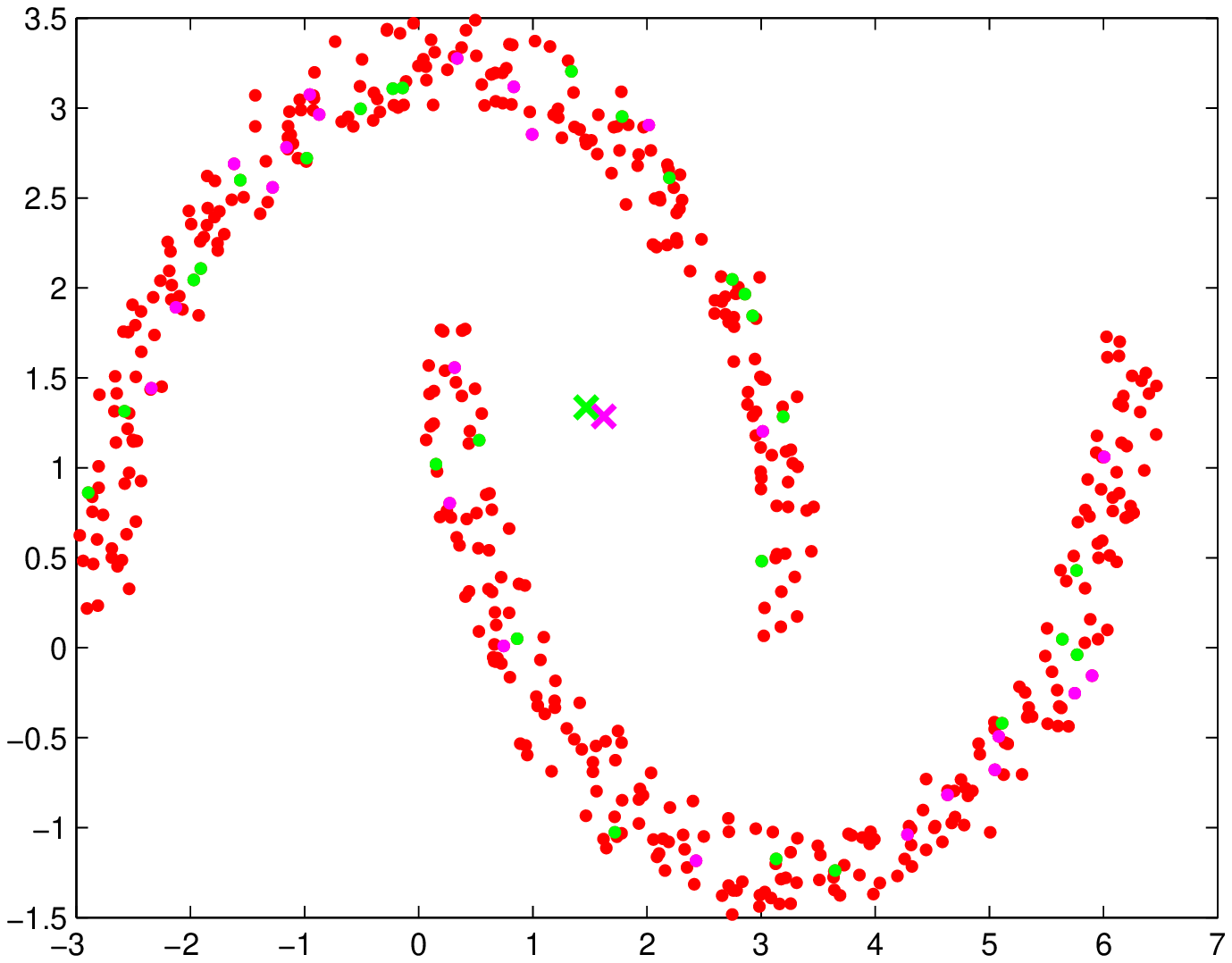}} &
\subfigure[]{\label{fig:synth7}\includegraphics[width=3cm,height=2.3cm]{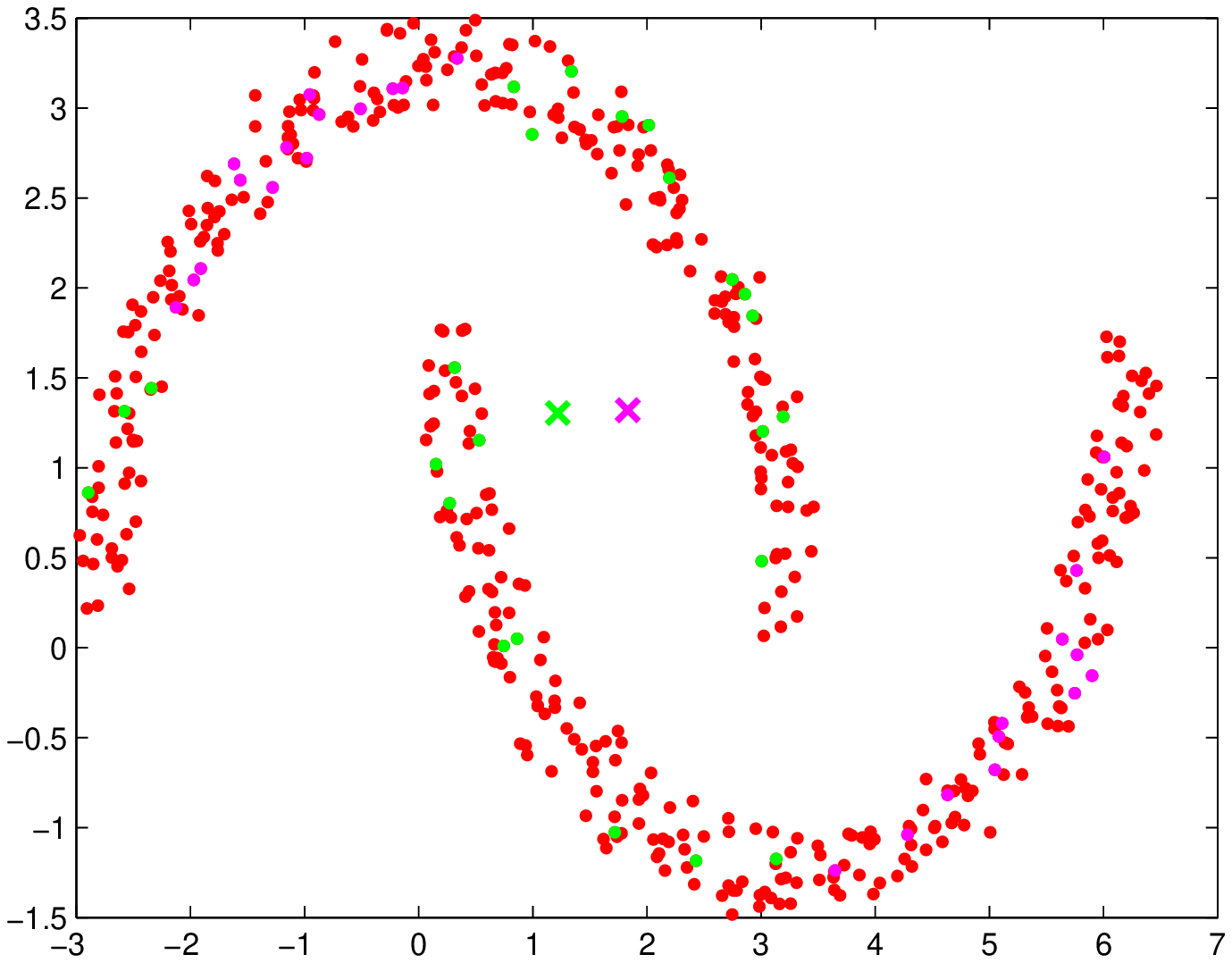}} &
\subfigure[]{\label{fig:synth8}\includegraphics[width=3cm,height=2.3cm]{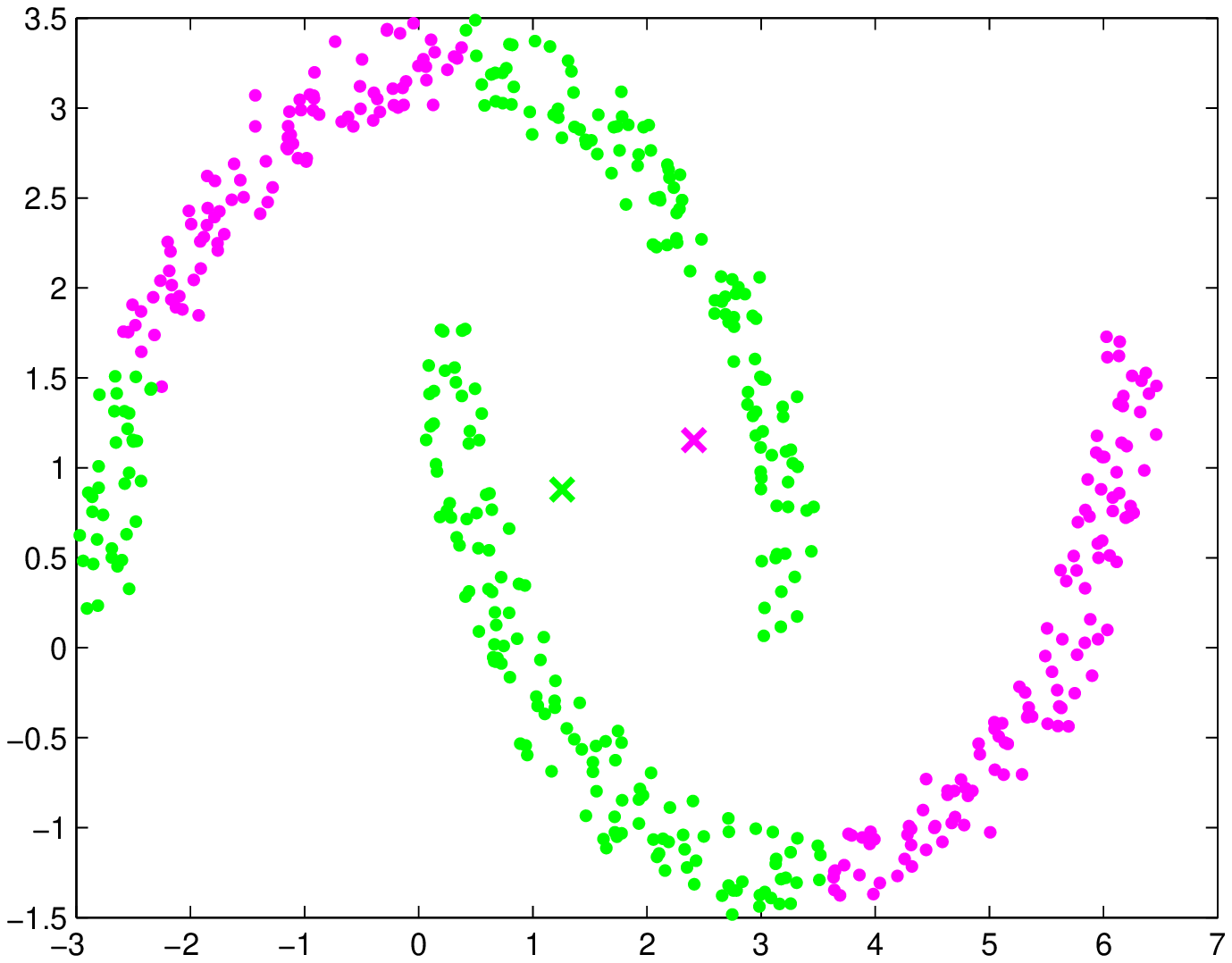}}
\end{tabular}
\caption{\footnotesize Illustration of aKKm and tKKm algorithms on a 2-D data set. Fig.~(a)
shows the 2-dimensional data containing 500 points from two clusters. Fig.~(b)
shows $m=50$ points sampled randomly from the data set (in blue).
Figs.~(c) - (e) and (f) - (h) show the process of finding two
clusters in the data set and their centers (represented by x) by the aKKm and tKKm algorithms, respectively. aKKm assigns labels to all the data points during each
iteration while constraining the centers to the subspace spanned by the sampled
points. tKKm clusters only the sampled points and then assigns
labels to the remaining points based on the cluster centers identified in the first step.}
\label{fig:synth_data}
\end{figure*}

\begin{algorithm}
\footnotesize
\center \caption{\footnotesize Two-step Kernel \textit{k}-means (tKKm)}
\begin{algorithmic}[1] \label{alg:2step_kkmeans}
    \REQUIRE {$ $}
    \begin{compactitem}
        \item $X = (x_1, \ldots, x_n)$: the set of $n$ data points to be
clustered
        \item $\kappa(\cdot, \cdot):\Re^d\times \Re^d \mapsto \Re$: kernel
function
        \item $m$: the number of randomly sampled data points ($m\ll n$)
        \item $C$: the number of clusters
    \end{compactitem}
    \ENSURE Cluster membership matrix $U \in \{0, 1\}^{C\times n}$ 
    \STATE Randomly select $m$ data points from $X$, denoted by $\Xh = (\xh_1,
\ldots, \xh_m)$.
    \STATE Compute the cluster centers, denoted by $c_k(\cdot), k \in [C]$, by
applying kernel \textit{k}-means to $\Xh$.
    \FOR{$i = 1, \ldots, n$}
        \STATE Update the $i^{th}$ column of $U $ by $U _{k_*i} = 1$ where
\[k_*=\mathop{\arg\min}\limits_{k \in [C]} |c_k(\cdot) - \kappa(x_i,
\cdot)|_{\Hk}.\]
    \ENDFOR
\end{algorithmic}
\end{algorithm}
\begin{algorithm}
\footnotesize
\center \caption{\footnotesize Approximate Kernel \textit{k}-means (aKKm)}
\begin{algorithmic}[1] \label{alg:approx_kkmeans}
    \REQUIRE {$ $}
    \begin{compactitem}
        \item $X = (x_1, \ldots, x_n)$: the set of $n$ data points to be
clustered
        \item $\kappa(\cdot, \cdot):\Re^d\times \Re^d \mapsto \Re$: kernel
function
        \item $m$: the number of randomly sampled data points ($m\ll n$)
        \item $C$: the number of clusters
	\item $MAXITER$: maximum number of iterations
    \end{compactitem}
    \ENSURE Cluster membership matrix $U \in \{0, 1\}^{C\times n}$ 
    \STATE Randomly sample $m$ data points from $X$, denoted by $\Xh = (\xh_1,
\ldots, \xh_m)$.
    \STATE Compute $K_B = [\kappa(x_i, \xh_j)]_{n\times m}$ and $\Kh =
[\kappa(\xh_i, \xh_j)]_{m\times m}$.
    \STATE Compute $T = K_B\Kh^{-1}$.
    \STATE Randomly initialize the membership matrix $U $. 
    \STATE Set $t=0$.
    \REPEAT
	\STATE Set $t = t + 1$.
        \STATE Compute the $\ell_1$ normalized membership matrix $\Uh$ by $\Uh =
[\diag(U \mathbf{1})]^{-1} U $.
        \STATE Calculate $\alpha = \Uh T$.
        \FOR{$i = 1, \ldots, n$}
            \STATE Update the $i^{th}$ column of $U $ by $U _{k_*i} = 1$ where
\[k_*=\mathop{\arg\min}\limits_{k \in [C]} \alpha^{\top}_k \Kh
\alpha_k - 2\varphi_i^{\top}\alpha_k.\]
where $\alpha_j$ and $\varphi_j$ are the $j^{th}$ rows of matrices $\alpha$ and $K_B$, respectively.
        \ENDFOR
    \UNTIL the membership matrix $U$ does not change or $t > MAXITER$
\end{algorithmic}
\end{algorithm}
\begin{figure}
\centering
\includegraphics[width = 4.6cm,height=3cm]{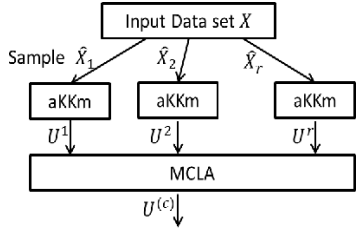}
\caption{\footnotesize Ensemble Approximate Kernel
\textit{k}-means}
\label{fig:ensemble_approx_kkmeans}
\end{figure}

\section{Ensemble Approximate Kernel \textit{k}-means}
\label{sec:ensemble_approxkkmeans}
We improve the quality of the aKKm solution by
using ensemble clustering. 

The objective of ensemble clustering~\cite{vega2010survey} is to 
combine multiple partitions of the given data set. A popular ensemble 
clustering algorithm is the Meta-Clustering algorithm (MCLA)~\cite{strehl2003cluster}, which maximizes
the average normalized mutual information. It is based on 
hypergraph partitioning. Given $r$ cluster membership matrices, $\{U^1,\ldots,
U^r\}$, 
where $U^{q}  = (\u_1^{q}, \ldots, \u_C^{q})^{\top}$, the objective 
of this algorithm is to find a consensus membership matrix $U^{(c)}$ that
maximizes the 
Average Normalized Mutual Information, defined as 
\begin{equation}\label{eqn:anmi}
 ANMI = \frac{1}{r}\mathop{\sum}\limits_{q=1}^{r}NMI(U^{(c)},U^{q}), 
\end{equation}
where $NMI(U^{a},U^{b})$, the Normalized Mutual Information
(NMI)~\cite{kvalseth1987entropy} between two partitions $a$ and $b$, represented
by the membership matrices $U^{a}$ and $U^{b}$ respectively, is defined by
\begin{equation}
 NMI(U^{a},U^{b}) =
\frac{\mathop{\sum}\limits_{i=1}^{C}\mathop{\sum}\limits_{j=1}^{C}n^{a,b}_{i,j}
\log\left(\frac{n.n^{a,b}_{i,j}}{n_{i}^{a}n_{j}^{b}}\right)}{\sqrt{\left(\mathop
{\sum}\limits_{i=1}^{C}n_{i}^{a}\log\frac{n_{i}^{a}}{n}\right)\left(\mathop{\sum
}\limits_{j=1}^{C}n_{j}^{b}\log\frac{n_{j}^{b}}{n}\right)}}.
\label{eqn:nmi}
\end{equation}
In equation \eqref{eqn:nmi}, $n_{i}^{a}$ represents the number of data points
that have been assigned label $i$ in partition $a$, and $n^{a,b}_{i,j}$
represents the number of data points that have been assigned label $i$ in
partition $a$ and label $j$ in partition $b$. 
NMI values lie in the range $[0,1]$. An NMI value of 1 indicates perfect
matching between the two partitions whereas 0 indicates perfect mismatch.

Maximizing \eqref{eqn:anmi} is a combinatorial optimization problem and solving
it exhaustively is 
computationally infeasible. MCLA obtains an approximate
consensus solution by representing the set of partitions as a hypergraph. 
Each vector $\u_k^{q}, k \in [C], q \in [r]$ represents a vertex in a regular
undirected graph, called the \emph{meta-graph}. 
Vertex $\u_i$ is connected to vertex $\u_j$ by an edge whose weight is
proportional to the Jaccard similarity between the 
two vectors $\u_i$ and $\u_j$:
\begin{equation}
 s_{i,j} = \frac{\u_i^{\top}\u_j}{\|\u_i\|^2 + \|\u_j\|^2 - \u_i^{\top}\u_j}.
\label{eqn:jacc_sim}
\end{equation}
This meta-graph is partitioned using a graph partitioning algorithm such as
METIS~\cite{karypis1998software} 
to obtain $C$ balanced meta-clusters $\pi_1, \pi_2, \ldots \pi_C$. Each
meta-cluster $\pi_k = \left\lbrace \u_k^{(1)}, \u_k^{(2)}, \ldots
\u_k^{(s_k)}\right \rbrace$,
 containing $s_k$ vertices, is represented by the mean vector 
\begin{equation}
\mu_k = \frac{1}{s_k}\mathop{\sum}\limits_{i=1}^{s_k}\u_k^{(i)}.
\label{eqn:ensemble_mean}
\end{equation}
The value $\mu_{ki}$
represents 
the association between data point $x_i$ and the $k^{th}$ cluster. Each data
point $x_i$ is assigned to the meta-cluster 
with which it is associated the most, breaking ties randomly, i.e 
\begin{eqnarray}
 U^{(c)}_{k_{*}i}
= \left\{ \begin{array}{rl}
1 & \mbox{ if } k_{*} = \mathop{\arg\max}\limits_{k \in [C]} \mu_{ki}\\
0 & \mbox{ otherwise}
\end{array} \right.
\label{eqn:ensembleU}
\end{eqnarray} 

The aKKm algorithm is combined with MCLA to enhance its accuracy. We execute the aKKm algorithm $r$ times with
different samples from the data set 
and MCLA is used to integrate the partitions obtained from each execution into
a 
consensus partition.

More specifically, we independently draw $r$ samples $\{\Xh^1,\ldots,\Xh^r\}$,
where each 
$\Xh^i = \{\xh_1^i,\ldots,\xh_m^i\}$ contains $m$ data points. Each $\xh_j^i$ is uniformly sampled
without replacement. After the sampling 
of $\Xh^i$ is performed, the samples are replaced and the next sample $\Xh^j$ is
obtained. For each sample $\Xh^i$, 
we first compute the kernel matrices $K_B^i = [\kappa(x_a,\xh_b^i)]_{n\times m}$ and 
$\Kh^i = [\kappa(\xh_a^i, \xh_b^i)]_{m\times m}$, and then execute
aKKm to obtain the cluster membership matrix
$U^i$. We then combine the partitions $\{U^i\}_{i=1}^r$ 
using MCLA to
obtain the consensus cluster 
membership $U^{(c)}$.

This ensemble clustering algorithm is described in
Algorithm~\ref{alg:ensemble_approx_kkmeans} and illustrated in
Fig.~\ref{fig:ensemble_approx_kkmeans}. On the synthetic data set in Fig.~\ref{fig:synth1}, we obtain the partitions
similar to Fig.~\ref{fig:synth5} by using a sample 
of $20$ data points instead of $50$ data points. This illustrates that the
efficiency of the algorithm is improved by using ensemble clustering.

\begin{algorithm}
\footnotesize
\center \caption{\footnotesize Ensemble aKKm}
\begin{algorithmic}[1] \label{alg:ensemble_approx_kkmeans}
    \REQUIRE {$ $}
    \begin{compactitem}
	\item $X = (x_1, \ldots, x_n)$: the set of $n$ data points to be
clustered
        \item $\kappa(\cdot, \cdot):\Re^d\times \Re^d \mapsto \Re$: kernel
function
        \item $m$: the number of randomly sampled data points ($m\ll n$)
        \item $C$: number of clusters
	\item $r$: number of ensemble partitions
	\item $MAXITER$: maximum number of iterations
    \end{compactitem}
    \ENSURE Consensus cluster membership matrix $U^{(c)} \in \{0, 1\}^{C\times
n}$ 
    
    \FOR {$i = 1, \ldots, r$}
	\STATE Randomly select $m$ data points from $X$, denoted by $\Xh^{i}$.
	\STATE Run Algorithm~\ref{alg:approx_kkmeans} using  $\Xh^{i}$ as
the sampled points and obtain the 
	  cluster membership matrix $U^{i}$.
    \ENDFOR
    \newline\textbf{MCLA:}
    \STATE Concatenate the membership matrices $\{U^{i}\}_{i=1}^{r}$ to obtain
an $rC \times n$ matrix 
      $\mathcal{U} = (\u_1, \u_2, \ldots, \u_{rC})^{\top}$.

    \STATE Compute the Jaccard similarity $s_{i,j}$ between the vectors $\u_i$
and $\u_j$,
$i,j \in [rC]$ using \eqref{eqn:jacc_sim}.

    \STATE Construct a complete weighted meta-graph $G=(V,E)$, where vertex set
$V = \{\u_1, \u_2, \ldots, \u_{rC}\}$ and each edge $(\u_i, \u_j)$ is weighted
by $s_{i,j}$.
    \STATE Partition $G$ into $C$ meta-clusters $\{\pi_k\}_{k=1}^{C}$ and
compute the mean vectors 
$\{\mu_k\}_{k=1}^{C}$ using \eqref{eqn:ensemble_mean}.
    \FOR {$i = 1, \ldots, n$}
      \STATE Update the $i^{th}$ column of $U^{(c)}$ in accordance with
\eqref{eqn:ensembleU}.
    \ENDFOR
\end{algorithmic}
\end{algorithm}
\section{Analysis of Approximate Kernel \textit{k}-means}
In this section, we first show that the computational complexity of the aKKm 
algorithm is less than that of the kernel \textit{k}-means algorithm and then
we derive bounds on
the difference in the clustering error achieved by our algorithm and the kernel
\textit{k}-means algorithm. 
\subsection{Computational complexity}
\emph{Approximate kernel \textit{k}-means:} The aKKm algorithm consists of two parts: kernel computation and clustering. As only an $n \times m$ portion of the 
kernel needs to be computed and stored, the cost of kernel computation is $O(ndm)$, a dramatic reduction over the $O(n^2d)$ complexity of classical kernel \textit{k}-means. 
The memory requirement also reduces to $O(mn)$.
 The most expensive clustering operation is the matrix inversion
$\Kh^{-1}$ and calculation of $T =  K_B\Kh^{-1}$, which has a computational cost
of $O(m^3 + m^2n)$. The cost of computing $\alpha$ and updating
the membership matrix $U$ is $O(mnCl)$, where $l$ is the number of iterations
needed for convergence. Hence, the overall cost of clustering is $O(m^3 + m^2 n
+ mnCl)$. We can further reduce this cost by avoiding the
matrix inversion $\Kh^{-1}$ and formulating the calculation of $\alpha = \Uh T =
\Uh K_B\Kh^{-1}$ as the following optimization problem:
\begin{eqnarray}
        \min\limits_{\alpha \in \Re^{C\times m}} \frac{1}{2}\tr(\alpha \Kh
\alpha) - \tr(\Uh K_B\alpha^{\top}) \label{eqn:update-alpha}
\end{eqnarray}
If $\Kh$ is well conditioned (i.e., the minimum eigenvalue of $\Kh$ is
significantly larger than zero), we can solve the optimization problem in
\eqref{eqn:update-alpha} using the simple gradient descent method with a convergence
rate of $O\left(\log(1/\varepsilon)\right)$, where $1 - \varepsilon$ is the
desired
accuracy. As the cost of each step in the gradient descent method
is $O(m^2 C)$, the overall computational cost is only $O(m^2 C l
\log(1/\varepsilon)) \ll O(m^3)$ when $Cl \ll m$. Using this approximation, we
can reduce
the overall computational cost to $O(m^2Cl + mnCl + m^2n) \sim O(n)$ when $m \ll n$. \newline\newline
\emph{Ensemble aKKm:} In the ensemble aKKm algorithm, an additional cost of $O(nC^2r^2)$ is incurred for combining the partitions using MCLA~\cite{strehl2003cluster}. 
However, we empirically observed that the sample size $m$ required to achieve a satisfactory 
clustering accuracy is reduced considerably when compared to aKKm. This leads to a further reduction in the running time.
\newline\newline

\subsection{Clustering error}
Let binary random variables $\xi = (\xi_1,\xi_2,...,\xi_n)^\top \in
\{0,1\}^n$ represent the random sampling process, where $\xi_i = 1$ if $x_i
\in \Xh$ and $0$ otherwise.
The following proposition allows us to write the clustering error in terms of 
the random variable $\xi$. 
\begin{prop}
Given the cluster membership matrix $U  = (\u_1, \ldots, \u_C)^{\top}$, the
clustering error can be expressed as
\begin{equation}
\label{sample_kkmeans_loss}
\L(U ,\xi) = \tr(K) + \sum_{k=1}^C \L_k(U , \xi),
\end{equation}
where $\L_k(U , \xi)$ is
\[
\L_k(U , \xi) = \min\limits_{\alpha_k \in \Re^{n}} - 2 \u_k^{\top}K(\alpha_k
\circ \xi) + n_k(\alpha_k \circ \xi)^{\top} K (\alpha_k \circ \xi).
\]
\end{prop}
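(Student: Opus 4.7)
The plan is to start from the aKKm objective in \eqref{eqn:kk-2} and reparametrize the inner minimization over cluster centers $c_k \in \H_b$ as an unconstrained minimization over coefficient vectors $\alpha_k \in \Re^n$ that are masked by the sampling indicator $\xi$. Under this parametrization, the partial kernel matrices $K_B$ and $\Kh$ used in Lemma~\ref{lemma:1} are replaced uniformly by the full $n\times n$ kernel matrix $K$, which is exactly the form required by \eqref{sample_kkmeans_loss}.

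The key observation is that $\H_b = \spana(\{\kappa(x_i,\cdot) : \xi_i = 1\})$, so every admissible center admits the representation $c_k(\cdot) = \sum_{i=1}^n (\alpha_k \circ \xi)_i\, \kappa(x_i,\cdot)$ for some $\alpha_k \in \Re^n$; the Hadamard product with $\xi$ zeros out the coordinates corresponding to non-sampled points, so this $n$-dimensional representation spans the same set of centers as the $m$-dimensional one used in the proof of Lemma~\ref{lemma:1}. Expanding $|c_k(\cdot) - \kappa(x_i,\cdot)|_{\Hk}^2$ via the reproducing property and plugging in this representation produces a quadratic term equal to $(\alpha_k \circ \xi)^\top K (\alpha_k \circ \xi)$, a cross term equal to $-2[K(\alpha_k \circ \xi)]_i$, and the self-similarity constant $\kappa(x_i,x_i)$.

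Weighting by $U_{ki}$ and summing over $i$ and $k$, the self-similarity terms collapse into $\sum_{i=1}^n \kappa(x_i,x_i) = \tr(K)$ using $U^\top \mathbf{1} = \mathbf{1}$; the quadratic terms aggregate into $n_k\,(\alpha_k \circ \xi)^\top K (\alpha_k \circ \xi)$ since $\u_k^\top \mathbf{1} = n_k$; and the cross terms aggregate into $-2\,\u_k^\top K (\alpha_k \circ \xi)$. Since the resulting expression decouples across $k$, minimizing the aKKm objective over cluster centers reduces to $C$ independent minimizations over $\alpha_k \in \Re^n$, each of which yields precisely $\L_k(U,\xi)$ as stated.

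The only subtlety worth mentioning is that the domain is enlarged from coefficient vectors supported on $\{i : \xi_i = 1\}$ to all of $\Re^n$; this does not change the optimal value because $\xi$-masking annihilates any component outside the support, so the extra coordinates contribute nothing to the objective. With that verified, the statement follows. The argument is essentially a relabeling of the calculation in the proof of Lemma~\ref{lemma:1} — indexing over all $n$ points with the $\xi$ mask instead of over the $m$ sampled points — so I do not anticipate any genuine technical obstacle beyond the careful bookkeeping described above.
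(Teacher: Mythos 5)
Your proposal is correct and follows essentially the same route the paper intends: the Proposition is stated without a separate proof precisely because it is the computation in the proof of Lemma~\ref{lemma:1} redone with the $n$-dimensional $\xi$-masked parametrization of the centers, so that $K_B$ and $\Kh$ become $K$ and the constraint $c_k \in \H_b$ becomes the support restriction absorbed by $\alpha_k \circ \xi$. Your bookkeeping of the three terms (trace, cross, quadratic) and the observation that enlarging the domain to all of $\Re^n$ is harmless are both accurate.
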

Note that $\xi = \mathbf{1}$, where $\mathbf{1}$ is a vector of all ones,
implies that all the data points are chosen for constructing the subspace
$\H_b$, which is equivalent to kernel \textit{k}-means using the full kernel
matrix. As a result, $\L(U , \mathbf{1})$ is the clustering error of the
standard kernel \textit{k}-means algorithm.

The following theorem bounds the expectation of the clustering error.
\begin{thm} \label{thm:err_bound}
Given the membership matrix $U $, we have the expectation of $\L(U , \xi)$
bounded as follows
\begin{eqnarray*}
    \E_{\xi}[\L(U , \xi)] &\leq& \L(U , \mathbf{1}) \\
                    &+& \tr\left(\Ut\left[K^{-1} +
\frac{m}{n}[\diag(K)]^{-1}\right]^{-1}\Ut^{\top}\right),
\end{eqnarray*}
where $\L(U , \mathbf{1}) = \tr(K) - \tr(\Ut K\Ut^{\top})$.
\end{thm}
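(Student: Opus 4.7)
The plan is to exploit the fact that $\L_k(U,\xi)$ in the Proposition is defined as a \emph{minimum} over $\alpha_k$, so substituting any deterministic choice $\alpha_k^*$ (independent of the random sampling $\xi$) yields a valid pointwise upper bound that can be freely optimized \emph{after} taking expectations.

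First I would adopt the independent Bernoulli model for the sampling, $\Pr(\xi_i=1) = p := m/n$, so that $\E[\xi_i\xi_j] = p^2$ for $i\neq j$ and $\E[\xi_i^2] = p$. For any deterministic $\alpha_k^*$, expanding the quadratic form gives
\[
\E_\xi\!\left[(\alpha_k^*\!\circ\xi)^{\!\top} K(\alpha_k^*\!\circ\xi)\right]
= p^2\,\alpha_k^{*\top} K\alpha_k^* + p(1-p)\,\alpha_k^{*\top} D\alpha_k^*,
\]
where $D = \diag(K)$. Writing $B = pK + (1-p)D$, the expected objective becomes $-2p\,\u_k^{\!\top}K\alpha_k^* + n_k p\,\alpha_k^{*\top} B \alpha_k^*$, which is a strictly convex quadratic in $\alpha_k^*$. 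Minimizing in closed form gives $\alpha_k^* = \tfrac{1}{n_k} B^{-1} K\u_k$, and substituting back yields $\E_\xi[\L_k(U,\xi)] \leq -p\,\ut_k^{\!\top} K B^{-1} K \ut_k$. Since $\L_k(U,\mathbf{1}) = -\ut_k^{\!\top} K\ut_k$, rearranging produces the per-cluster bound
\[
\E_\xi[\L_k(U,\xi)] - \L_k(U,\mathbf{1}) \;\leq\; \ut_k^{\!\top}\!\bigl(K - pKB^{-1}K\bigr)\ut_k.
\]

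The main step is then the PSD comparison
\[
K - pKB^{-1}K \;\preceq\; \bigl[K^{-1} + p\,D^{-1}\bigr]^{-1}.
\]
By the Woodbury identity, $[K^{-1}+pD^{-1}]^{-1} = K - pK(pK+D)^{-1}K$, so the claim reduces to $K(pK+D)^{-1}K \preceq KB^{-1}K$. This follows from $B = pK + (1-p)D \preceq pK + D$ and operator monotonicity of matrix inversion, which gives $B^{-1} \succeq (pK+D)^{-1}$; conjugating by $K$ preserves the ordering. Summing the per-cluster bound over $k$ and collapsing into trace form gives the theorem.

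The principal obstacle is this last PSD step: the operator arising naturally from the variance calculation is $B = pK + (1-p)D$, whereas the theorem states its bound in terms of $K^{-1}+p\,D^{-1}$ (equivalently $pK+D$), and reconciling them requires the monotonicity argument above together with the Woodbury rewriting. A secondary technicality is that the paper actually samples $m$ out of $n$ points without replacement, inducing a small negative correlation among the $\xi_i$'s; because this correlation only decreases the quadratic term $\E[\xi_i\xi_j]$ for $i\neq j$, the Bernoulli-based upper bound continues to be valid under uniform subsampling.
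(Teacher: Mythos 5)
Your proposal is correct and follows essentially the same route as the paper's proof: both exchange the minimum over $\alpha_k$ with the expectation, compute $\E_{\xi}\left[(\alpha\circ\xi)^{\top}K(\alpha\circ\xi)\right] = p^2\alpha^{\top}K\alpha + p(1-p)\alpha^{\top}\diag(K)\alpha$ with $p=m/n$, and reach the stated bound via the Woodbury identity; the only difference is that the paper relaxes $pK+(1-p)\diag(K)\preceq pK+\diag(K)$ \emph{before} minimizing over $\alpha$, whereas you minimize first and then invoke antitonicity of matrix inversion --- two equivalent ways of expressing the same inequality. One caveat: your closing remark that the negative correlation under without-replacement sampling ``only decreases the quadratic term'' actually requires $\alpha^{\top}\left(K-\diag(K)\right)\alpha\geq 0$, which need not hold for a general PSD kernel; however, the paper's own proof silently computes the expectation under the independent Bernoulli model, so this is not a gap relative to the paper's argument.
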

\begin{proof}
We first bound $\E_{\xi}[\L_k(U , \xi)]$ as
\begin{eqnarray*}
& & \frac{1}{n_k}\E_{\xi}[\L_k(U , \xi)] \\
& = & \E_\xi\left[\min_{\alpha} - 2\uh_k^{\top}K (\alpha  \circ \xi) + (\alpha
\circ \xi)^{\top}K(\alpha \circ \xi)\right] \\
& \leq & \min_\alpha  \E_\xi\left[- 2\uh_k^{\top}K(\alpha \circ \xi) + (\alpha
\circ \xi)^{\top} K (\alpha \circ \xi)\right] \\
& = & \min_\alpha - 2\frac{m}{n}\uh_k^{\top} K \alpha +
\frac{m^2}{n^2}\alpha^{\top}K\alpha \\
&+& \frac{m}{n}\left(1 -
\frac{m}{n}\right)\alpha^{\top} \diag(K) \alpha \\
& \leq & \min_\alpha - 2\frac{m}{n}\uh_k^{\top} K \alpha +
\frac{m}{n}\alpha^{\top}\left( \frac{m}{n} K + \diag(K)\right) \alpha.
\end{eqnarray*}
By minimizing over $\alpha$, we obtain
\[
    \alpha_* = \left({\dfrac{m}{n}}K + \diag(K)\right)^{-1} K \uh_k.
\]
Thus, $\E_{\xi}[\L_k(U , \xi)]$ is bounded as
\begin{eqnarray*}
& & \E_\xi[\L_k(U , \xi)] + n_k \uh_k^{\top} K \uh_k \\
& \leq & n_k\uh_k^{\top}\left(K - K\left[K +
{\dfrac{n}{m}}\diag(K)\right]^{-1}K\right)\uh_k \\
& = & \ut_k^{\top}\left(K^{-1} + {\dfrac{m}{n}}[\diag(K)]^{-1}\right)^{-1}\ut_k.
\end{eqnarray*}
We complete the proof by adding up $\E_{\xi}[\L_k(U , \xi)]$ and using the fact
that
\begin{eqnarray*}
\L_k(U , \mathbf{1}) = \min\limits_{\alpha} - 2\u_k^{\top}K\alpha +
n_k\alpha^{\top}K\alpha = - \ut_k^{\top} K \ut_k.
\end{eqnarray*}
\end{proof}
The following corollary interprets the result of the above theorem in terms of
the eigenvalues of $K$. Let $\lambda_1 \geq \lambda_2 \geq
\ldots \geq \lambda_n \geq 0$ and $Z=(z_1, z_2,\ldots,z_n)$ be the eigenvalues and the corresponding eigenvectors of $K$.
\setcounter{thm}{0}
\begin{cor}
Given the
membership matrix $U $, we have
\begin{eqnarray*}
\frac{\E_{\xi}[\L(U , \xi)]}{\L(U , \mathbf{1})} &  \leq & 1 +
\frac{\sum_{i=1}^C \lambda_i /[1 + \lambda_im/n]}{\tr(K) - \sum_{i=1}^C
\lambda_i} \\
& \leq & 1 + \frac{C/m}{\sum_{i=C+1}^n \lambda_{i}/n}.
\end{eqnarray*}
\end{cor}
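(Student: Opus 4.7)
The plan is to start from Theorem~\ref{thm:err_bound} and simplify the trace term on the right-hand side into an eigenvalue expression, then lower-bound $\L(U,\mathbf{1})$ similarly, giving the first inequality. The second inequality then follows from an elementwise bound on the summand.

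First, I would note that by construction of $\Ut$ the rows $\ut_1,\ldots,\ut_C$ are orthonormal, so $\Ut\Ut^\top = I_C$. This means that for any symmetric positive-semidefinite matrix $M \in \Re^{n\times n}$, the quantity $\tr(\Ut M \Ut^\top)$ equals the trace of $M$ restricted to the $C$-dimensional subspace spanned by these orthonormal rows. By the Ky Fan (Poincaré) inequality, this is bounded by the sum of the top $C$ eigenvalues of $M$.

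Next, working in the standard setting where $\diag(K)=I$ (e.g.\ a Gaussian kernel with unit self-similarity; more generally one can absorb $[\diag(K)]^{-1}$ by the PSD ordering whenever $K_{ii}\le 1$), the matrix
\[
A := \left[K^{-1} + \tfrac{m}{n}[\diag(K)]^{-1}\right]^{-1} = \left[K^{-1} + \tfrac{m}{n}I\right]^{-1}
\]
is simultaneously diagonalized by the eigenbasis $Z$ of $K$, with eigenvalues $\lambda_i/(1+\lambda_i m/n)$. Since $x\mapsto x/(1+xm/n)$ is increasing in $x\ge 0$, the top $C$ eigenvalues of $A$ correspond to $\lambda_1,\ldots,\lambda_C$. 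Applying the Ky Fan step from the previous paragraph to $M=A$ gives
\[
\tr(\Ut A \Ut^{\top}) \;\le\; \sum_{i=1}^C \frac{\lambda_i}{1+\lambda_i m/n}.
\]
Applying it again to $M=K$ yields $\tr(\Ut K \Ut^\top) \le \sum_{i=1}^C \lambda_i$, hence $\L(U,\mathbf{1}) = \tr(K) - \tr(\Ut K \Ut^\top) \ge \tr(K) - \sum_{i=1}^C \lambda_i = \sum_{i=C+1}^n \lambda_i$. Plugging both bounds into Theorem~\ref{thm:err_bound} and dividing through by $\L(U,\mathbf{1})$ gives the first inequality of the corollary.

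For the second inequality, I would simply observe that for each $i$,
\[
\frac{\lambda_i}{1+\lambda_i m/n} \;\le\; \frac{\lambda_i}{\lambda_i m/n} \;=\; \frac{n}{m},
\]
so the numerator in the first bound is at most $Cn/m$. Rewriting $\tr(K)-\sum_{i=1}^C\lambda_i = \sum_{i=C+1}^n\lambda_i$ in the denominator, multiplying numerator and denominator by $1/n$, yields the stated form $1 + (C/m)\big/\bigl(\sum_{i=C+1}^n\lambda_i/n\bigr)$. The main subtlety is the handling of $[\diag(K)]^{-1}$: the clean eigenvalue form $\lambda_i/(1+\lambda_i m/n)$ is exact only when $\diag(K)=I$, so one either states the corollary under that normalization (as is standard for RBF kernels) or invokes $\diag(K)\preceq \kappa_{\max}I$ to absorb a constant $\kappa_{\max}$ into the bound. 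Everything else is a routine eigenvalue calculation plus the orthonormality of $\Ut$.
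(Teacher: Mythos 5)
Your proof is correct and follows essentially the same route as the paper's: both arguments use $\kappa(x,x)\le 1$ to get $\diag(K)\preceq I$ and hence replace $[\diag(K)]^{-1}$ by $I$ via the PSD ordering, then exploit the orthonormal rows of $\Ut$ (Ky Fan) to bound the trace term by $\sum_{i=1}^C \lambda_i/(1+\lambda_i m/n)$ and to get $\L(U,\mathbf{1})\ge \tr(K)-\sum_{i=1}^C\lambda_i$, and finally bound each summand by $n/m$. Your write-up is just more explicit about naming the Ky Fan step and the monotonicity of $x\mapsto x/(1+xm/n)$, which the paper leaves implicit.
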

\begin{proof}
As $\kappa(x, x) \leq 1$ for any $x$, we have $\diag(K) \preceq I$, where $I$ is
an identity matrix. As $\Ut$ is an $\ell_2$ normalized matrix, we have
\begin{eqnarray*}
& & \tr\left(\Ut\left[K^{-1} +
\frac{m}{n}[\diag(K)]^{-1}\right]^{-1}\Ut^{\top}\right) \\
& \leq & \tr\left(\Ut\left[K^{-1} + \frac{m}{n}I\right]^{-1}\Ut^{\top}\right) \\
& \leq & \sum_{i=1}^C \frac{\lambda_i}{1 + m\lambda_i/n} \leq \frac{Cn}{m}
\end{eqnarray*}
and
\[
\L(U , \mathbf{1}) = \tr(K - U  K U ^{\top}) \geq \tr(K) - \sum_{i=1}^C
\lambda_i.
\]
We complete the proof by combining the above inequalities.
\end{proof}

As an illustration of the result of Corollary 1, consider a special kernel
matrix $K$
that has its first $a$ eigenvalues equal to $n/a$ and the remaining eigenvalues
equal to zero; i.e. $\lambda_1= \ldots = \lambda_a = n/a$ and $\lambda_{a+1} =
\ldots = \lambda_{n} = 0$. Assuming $a > 2C$, i.e. the number of
non-zero eigenvalues of $K$ is larger than twice the number of clusters, we have
\begin{equation}
    \frac{\E_{\xi}[L(U , \xi)] - \L(U , \mathbf{1})}{\L(U , \mathbf{1})} \leq 1
+ \frac{C a}{m(a - C)} \leq 1 + \frac{2C}{m}.
\label{eqn:err_bound}
\end{equation}
This indicates that when the number of non-zero eigenvalues of $K$ is
significantly
larger than the number of clusters, the difference in the clustering errors
between the standard kernel \textit{k}-means and our approximation scheme decreases at the rate of $O(1/m)$.
\subsection{Parameter sensitivity}
One of the important factors that determines the performance of the approximate
kernel \textit{k}-means 
algorithm is the sample size $m$. Sampling introduces a trade-off between 
clustering quality and efficiency. As $m$ increases, the clustering quality
improves but the speedup achieved by the algorithm suffers. The following theorem gives an estimate of $m$. 

\begin{thm}
\label{thm:m_est}
Let $\Sigma_1 = \diag(\lambda_1,\ldots,\lambda_C)$, $\Sigma_2 = \diag(\lambda_{C+1},\ldots,\lambda_n)$, $Z_1=(z_1, \ldots,z_C)$ and $Z=(z_{C+1},\ldots,z_n)$.
Let \begin{equation}\label{eqn:coherence}\tau=n\max \limits_{1 \leq i \leq n} |Z_{(i)}|^2\end{equation} denote the coherence of the kernel matrix $K$ (adapted from ~\cite{gittens2011spectral}).
For any $\epsilon \in (0,1)$, the spectral norm of the approximation error is bounded above, with probability $1 - \delta$, as 
\begin{equation*}
 \|K - K_B \Kh^{-1} K_B^{\top}\|_2 \leq \lambda_{C+1}\left(1 + 8 \tau \ln\frac{2}{\delta}\sqrt{\frac{Cn}{m}}\right),
\end{equation*}
provided $m \geq \tau C \max(C_1 \ln p , C_2 \ln (3/\delta))$, for some positive constants $C_1$ and $C_2$.
\end{thm}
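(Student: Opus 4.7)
The plan is to follow the matrix-concentration analysis of the Nystr\"om approximation developed in~\cite{gittens2011spectral}, specialized to uniform without-replacement sampling with coherence $\tau$. The starting point is the eigendecomposition $K = Z_1\Sigma_1 Z_1^\top + Z_2\Sigma_2 Z_2^\top$ split into its top-$C$ and tail parts, so that $\|\Sigma_2\|_2 = \lambda_{C+1}$. Let $S \in \{0,1\}^{n\times m}$ be the column-sampling matrix with $K_B = KS$ and $\widehat K = S^\top K S$. The standard Nystr\"om ``quasi-projection'' identity gives
\[
\bigl\|K - K_B \widehat K^{-1} K_B^\top\bigr\|_2 \;\le\; \|\Sigma_2\|_2 + \bigl\|\Sigma_2^{1/2} Z_2^\top S\,(Z_1^\top S)^{\dagger}\bigr\|_2^2 ,
\]
so the task reduces to showing that the second term is bounded by $8\lambda_{C+1}\tau\ln(2/\delta)\sqrt{Cn/m}$ with probability at least $1-\delta$.

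I would then bound the two factors on the right separately. For the pseudo-inverse factor, the rows of the top-eigenvector matrix $Z_1 \in \Re^{n\times C}$ have squared norm at most $\tau/n$ by the coherence hypothesis, so $(n/m)(Z_1^\top S)(Z_1^\top S)^\top$ is an empirical average of $m$ rank-one outer products of sampled rows of $Z_1$ with expectation $I_C$. A matrix Chernoff bound for sampling without replacement (as in~\cite{gittens2011spectral}) yields $\sigma_{\min}^2(Z_1^\top S) \ge m/(2n)$ with probability at least $1-\delta/2$, hence $\|(Z_1^\top S)^{\dagger}\|_2^2 \le 2n/m$, provided $m \ge C_1 \tau C \ln p$. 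For the tail factor, $\Sigma_2^{1/2} Z_2^\top SS^\top Z_2\Sigma_2^{1/2}$ is a sum of $m$ independent rank-one terms $\Sigma_2^{1/2} Z_2(i_j,:)^\top Z_2(i_j,:)\Sigma_2^{1/2}$, each of operator norm at most $\tau\lambda_{C+1}/n$, whose variance proxy is controlled jointly by $\mathrm{tr}(\Sigma_2)$ and $\tau\lambda_{C+1}/n$. A matrix Bernstein inequality then yields, with probability at least $1-\delta/2$ and under the hypothesis $m \ge C_2 \tau C \ln(3/\delta)$,
\[
\bigl\|\Sigma_2^{1/2} Z_2^\top S\bigr\|_2^2 \;\le\; 4\lambda_{C+1}\tau\ln(2/\delta)\sqrt{mC/n}.
\]

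Combining the two estimates via a union bound gives
\[
\bigl\|\Sigma_2^{1/2} Z_2^\top S\,(Z_1^\top S)^{\dagger}\bigr\|_2^2 \;\le\; \frac{2n}{m}\cdot 4\lambda_{C+1}\tau\ln\frac{2}{\delta}\sqrt{\frac{mC}{n}} \;=\; 8\lambda_{C+1}\tau\ln\frac{2}{\delta}\sqrt{\frac{Cn}{m}},
\]
and adding $\|\Sigma_2\|_2 = \lambda_{C+1}$ yields the claim. I expect the main obstacle to be the Bernstein step: the variance computation must be arranged so that the intrinsic quantity $\mathrm{tr}(\Sigma_2)$ appears (rather than the looser $n\lambda_{C+1}$) in order for the $\sqrt{mC/n}$ scaling and the precise constant $8$ to come out. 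A secondary technicality is the without-replacement sampling, which needs either a dedicated matrix concentration inequality or a comparison with the with-replacement case; both approaches are available in~\cite{gittens2011spectral}, and once they are in place the rest of the argument is routine bookkeeping.
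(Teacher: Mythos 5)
Your overall architecture matches the paper's: both arguments start from Gittens' deterministic bound for the Nystr\"om extension in terms of $\|\Sigma_2\|_2$ and the sampled eigenvector blocks $\Omega_1 = Z_1^{\top}S$ and $\Omega_2 = Z_2^{\top}S$, both control $\|\Omega_1^{\dag}\|_2^2 = O(n/m)$ via a coherence-based concentration result under the stated lower bound on $m$ (your normalization $\sigma_{\min}^2(Z_1^{\top}S)\geq m/(2n)$ is in fact the correct one for a $0/1$ column-sampling matrix), and both finish with a union bound. The divergence --- and the genuine gap --- is in how the remaining random factor is treated. The paper writes $\Omega_2\Omega_1^{\dag} = (Z_2^{\top}SS^{\top}Z_1)(Z_1^{\top}SS^{\top}Z_1)^{-1}$ and applies a Hilbert-space Bernstein inequality (the Smale--Zhou lemma) to the cross term $Z_2^{\top}SS^{\top}Z_1 = \sum_{k\in\Delta} b_k a_k^{\top}$, where $a_k$ and $b_k$ are the sampled rows of $Z_1$ and $Z_2$. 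The crucial point is that this sum is \emph{zero-mean}, because $Z_2^{\top}Z_1 = 0$, so its norm is a pure fluctuation of order $\sqrt{m\sigma^2\ln(2/\delta)}$ with $M$ and $\sigma^2$ controlled by the coherence of the rows of \emph{both} $Z_1$ and $Z_2$; this is precisely where the factor $\sqrt{C}$ (from the $C$ columns of $Z_1$), the $\ln(2/\delta)$, and ultimately the $\sqrt{Cn/m}$ rate in the theorem originate.

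Your decomposition $\|\Sigma_2^{1/2}\Omega_2\Omega_1^{\dag}\|_2 \leq \|\Sigma_2^{1/2}\Omega_2\|_2\,\|\Omega_1^{\dag}\|_2$ discards that cancellation. The matrix $\Sigma_2^{1/2}Z_2^{\top}SS^{\top}Z_2\Sigma_2^{1/2}$ is a sum of PSD rank-one terms with nonzero mean $(m/n)\Sigma_2$, so a matrix Bernstein bound on it concentrates around $(m/n)\lambda_{C+1}$; there is no mechanism by which a factor $\sqrt{C}$ or a rate $\sqrt{mC/n}$ can emerge, because $Z_1$ --- the only $C$-dimensional object in the problem --- no longer participates in the concentration step. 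The intermediate inequality $\|\Sigma_2^{1/2}Z_2^{\top}S\|_2^2 \leq 4\lambda_{C+1}\tau\ln(2/\delta)\sqrt{mC/n}$ that you posit is therefore not something your Bernstein step can produce as described; it happens to be numerically weaker than the true order $(m/n)\lambda_{C+1}$ whenever $m \lesssim \tau^2 C n\ln^2(2/\delta)$, but establishing it that way would require an explicit a posteriori comparison that you do not make, and it would mean the $\sqrt{Cn/m}$ form of the final bound is inserted by hand rather than derived. You flagged this step yourself as the main obstacle; it is indeed where the proof as proposed fails. To close the gap, either restore the cross-product $Z_2^{\top}SS^{\top}Z_1$ before applying concentration, as the paper does, or prove directly that $\|\Sigma_2^{1/2}\Omega_2\|_2^2\|\Omega_1^{\dag}\|_2^2 = O(\lambda_{C+1})$ with high probability and then verify that this quantity is dominated by the theorem's right-hand side under the stated condition on $m$.
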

A small sample size $m$ suffices if the coherence measure $\tau$ is low and there is a large gap in the eigenspectrum. The approximation error reduces at a rate 
of $O(1/\sqrt{m})$\footnote{This improves over the approximation error bound of the naive Nystrom method presented in~\cite{gittens2011spectral}.}. The reader is referred to the appendix for the proof of this theorem.

In our experiments, we examined the performance of our algorithm for
different sample sizes ($m$) ranging from $0.001\%$ to $15\%$ of the data set size $n$,
and observed that setting $m$ equal to $0.01\%$ to $0.05\%$ of $n$ leads to 
a satisfactory performance.
\section{Experimental Results}
In this section, we show that aKKm is an efficient and
scalable variant of the kernel \textit{k}-means algorithm. It has lower run-time
and memory
requirements but is on par with kernel \textit{k}-means in terms of the
clustering error and clustering quality. We tested our algorithm on four data sets with varying
sizes:  Imagenet, MNIST, Forest Cover Type, and Network Intrusion data sets
(Table~\ref{tbl:datasets}). Using
small
and medium-sized data sets (Imagenet and MNIST) for which the full kernel
calculation is feasible on a single processor, we demonstrate that our algorithm's clustering
performance is similar to that of the kernel \textit{k}-means algorithm. We then
demonstrate
scalability using the large Forest Cover Type and Network Intrusion
data sets. 
Finally, we show that the performance can be improved by using the ensemble aKKm algorithm. 

All algorithms were implemented in MATLAB\footnote{We used the 
\textit{k}-means implementation in the MATLAB Statistics Toolbox and 
the Nystrom approximation based spectral clustering implementation~\cite{chen2011parallel} available at 
\url{http://alumni.cs.ucsb.edu/~wychen/sc.html}. The remaining 
algorithms were implemented in-house.} and run on an 2.8 GHz processor. The
memory used was explicitly limited to
$40$ GB.

\begin{table}
\centering
 \scriptsize
    \begin{tabular}{|c|c|c|c|}
   \hline
   \textbf{Data set} & \textbf{Size} & \textbf{Dimensionality} & \textbf{Kernel function} \\
   \hline
   Imagenet & 20,000 & 17,000 & Pyramid \\
   \hline
   MNIST & 70,000 & 784 & Neural \\
  \hline
  Forest Cover Type & 581,012 & 54 & RBF \\
  \hline
  Network Intrusion & 4,898,431 & 50 & Polynomial \\
\hline
\end{tabular}
\caption{\footnotesize Data set summary}
\label{tbl:datasets}
\end{table}

\subsection{Performance comparison with kernel \textit{k}-means}
We use the Imagenet and MNIST data sets to demonstrate that the approximate
kernel \textit{k}-means algorithm's 
clustering performance is similar to that of the kernel \textit{k}-means algorithm.
\subsubsection{Datasets}
\begin{compactitem}
\item \textbf{Imagenet:} The Imagenet data set~\cite{deng2009imagenet} 
consists of over 1.2 million images that are organized according to the WordNet
hierarchy. Each node in this hierarchy represents a concept (known as the
``synset''). We chose $20,000$ images
from $12$ synsets.
We extracted
keypoints from each image
using the VLFeat library~\cite{vedaldi08vlfeat} and represented each keypoint as
a $128$ dimensional SIFT descriptor; an average of $3,055$ keypoints were
extracted from each image. 
\item \textbf{MNIST:} The MNIST data set~\cite{lecun1998gradient} is a subset of
the database of handwritten digits available from NIST. It contains $60,000$
training images and $10,000$ test images from $10$ classes.
Each image is represented using a $784$-dimensional feature vector. For the
purpose of
clustering, we combined the training and test images to
form a data set with $70,000$ images. 
\end{compactitem}
\subsubsection{Experimental setup} 
We first compare the approximate kernel \textit{k}-means (aKKm) algorithm with the kernel \textit{k}-means
algorithm 
to show that they achieve similar performance. We then compare
it to the 
two-step kernel \textit{k}-means (tKKm) algorithm and demonstrate that aKKm is superior. We
also gauge it's performance against that of (i) the Nystrom spectral clustering
algorithm (nysSC)~\cite{fowlkes2004spectral}, which 
clusters the top $C$ eigenvectors of a low rank approximate kernel
obtained through the Nystrom approximation technique, (ii) the leaders based kernel 
\textit{k}-means algorithm (lKKm)~\cite{sarma2012speeding} which finds a few representative patterns (called leaders) 
based on a user-defined distance threshold and then runs kernel \textit{k}-means on the leaders, and (iii) the 
\textit{k}-means algorithm to show that it achieves a better 
clustering accuracy.

For the Imagenet data set, we employ the spatial pyramid
kernel~\cite{lazebnik2006beyond} to calculate the pairwise similarity with the
number of pyramid levels set to be $4$. This has been shown to be effective for
object recognition and image retrieval. It took $24,236$ seconds to compute
the 
multi-resolution histograms and the pyramid representation of the data.   

On the MNIST data set, we use the neural
kernel defined as $\kappa(x,y)=tanh(ax^{\top}y + b)$, with the parameters $a$
and 
$b$ set to $0.0045$ and $0.11$ respectively, as suggested
in~\cite{zhang2002large}.

We evaluate the efficiency of the aKKm algorithm for
different sample sizes 
ranging from $100$ to $5,000$. We directly compute $\Kh^{-1}$ instead of using the
approximation method in 
\eqref{eqn:update-alpha} to demonstrate that our algorithm is efficient in spite of 
a naive implementation. 
The number of clusters $C$ is set equal to the number of true classes 
in the data set. 

We measure the time taken for computing the
kernel matrix (when applicable) and clustering the data points. We measure the
clustering
performance using \textit{error reduction}, defined as the ratio of the
difference between the initial clustering error (on random initialization) and
the final clustering error (after running the clustering algorithm) to the
initial clustering error. The larger the error reduction, the greater is the cluster
compactness. To evaluate the difference between the clustering results of the aKKm and tKKm algorithms, and the kernel \textit{k}-means, we calculate the
\textit{Adjusted Rand Index} (ARI)~\cite{hubert1985comparing}, a measure of
similarity between two data partitions. The adjusted Rand index value lies in
$[0,1]$. A value close to $1$ indicates better matching between the two
partitions than a value close to $0$. Finally, we measure the clustering accuracy in terms of the NMI
with respect to the true class labels. All the results are averaged over
10 runs of the algorithms.

\subsubsection{Experimental results}
\textbf{Imagenet:} 
Tables~\ref{tbl:time_imagenet} and~\ref{tbl:ari}, and
Figs.~\ref{fig:error_red_imagenet} and~\ref{fig:nmi_imagenet} compare the
performance
of the aKKm algorithm on the Imagenet data set with the kernel
\textit{k}-means, \textit{k}-means, lKKm, tKKm, and nysSC clustering algorithms.

Table~\ref{tbl:time_imagenet} lists the running time of all the algorithms.
The kernel computation time is common to both the aKKm
and tKKm algorithms. We observe
that a significant speedup (over $90\%$) is achieved by both the algorithms 
in kernel computation when compared to kernel \textit{k}-means. tKKm is the most efficient in terms of the
clustering speedup. aKKm takes longer as it needs to compute the inverse matrix $\Kh^{-1}$. However, when the
algorithms are compared with the requirement that they yield the same clustering
performance, we will see later that aKKm is more efficient. For $m \leq 1,000$, 
aKKm is even faster than the \textit{k}-means algorithm executed on the
pyramid features. This is due to the high dimensionality of the pyramid features, which plays an important role in the complexity of \textit{k}-means.

In Table~\ref{tbl:ari} (columns 2-3), we observe that the ARI values of the aKKm algorithm are much higher than those of the tKKm algorithm. This 
shows that aKKm obtains partitions that are similar 
to the kernel \textit{k}-means partitions. 
When $m=5,000$, the partitions obtained by aKKm are, on an average,
$84\%$ similar 
to the partitions generated by kernel \textit{k}-means.  

Figs.~\ref{fig:error_red_imagenet} and~\ref{fig:nmi_imagenet} show the error
reduction and NMI plots, respectively. 
We again observe that the aKKm algorithm achieves performance similar to that of the
kernel \textit{k}-means. The tKKm algorithm achieves much lower error reduction.
The tKKm, lKKm and nysSC algorithms yield lower NMI values. 
With just $100$ sampled points, aKKm significantly
outperforms tKKm provided with $1,000$ sampled points, and achieves
the same performance as the nysSC algorithm. This
observation indicates that it is
insufficient to estimate the cluster centers using only the randomly sampled
data points as in the tKKm method, further justifying the design of the aKKm algorithm. As expected, 
all the kernel-based algorithms perform better than the \textit{k}-means
algorithm.


\textbf{MNIST:} Table~\ref{tbl:time_mnist} shows the results for the MNIST
data set using the neural kernel. Unlike the Imagenet data set, more time is
spent in 
clustering than in kernel calculation due to the simplicity of the kernel
function. 
As observed in the Imagenet data set, a significant amount of
time was saved by the aKKm algorithm as well as by the tKKm
algorithm, when compared to the kernel
\textit{k}-means algorithm. When the sample size is small ($m < 5,000$), 
the aKKm algorithm is also faster than the \textit{k}-means algorithm. 

Though the tKKm algorithm is more efficient than the aKKm algorithm, the
ARI values in 
Table~\ref{tbl:ari} (columns 4-5) indicate that the tKKm algorithm produces inferior partitions. The partitions generated by the aKKm 
algorithm are more 
similar to those generated by the kernel \textit{k}-means even for small
sample sizes. 
The aKKm algorithm achieves similar performance
as kernel \textit{k}-means when $m=500$, whereas the tKKm algorithm cannot 
achieve this until $m \geq 5,000$. 

As seen in Fig.~\ref{fig:error_red_mnist}, approximately equal amounts of error
reduction are achieved by
both the kernel \textit{k}-means and the aKKm algorithm for $m \geq
500$. In the NMI plot shown in Fig.~\ref{fig:nmi_mnist}, we first observe that all the
kernel-based algorithms, 
except tKKm and lKKm, perform better than the \textit{k}-means
algorithm. The aKKm algorithm's performance is better than that of 
the lKKm and nysSC algorithms, and comparable to that of kernel
\textit{k}-means, when $m \geq 500$.

\begin{table}
\centering
\subfigure[\scriptsize Imagenet]{
\label{tbl:time_imagenet}
  \scriptsize
    \begin{tabular}{|c|c|c|c|c|}
   \hline
\multicolumn{5}{|c|}{\textbf{\textit{k}-means clustering time:} 3,446.74 ($\pm793.57$)} \\
   \hline
\multicolumn{5}{|c|}{\textbf{lKKm clustering time:} 452.27 ($\pm85.10$)} \\
   \hline
\multicolumn{2}{|c|}{\textbf{Full Kernel}}&\multicolumn{3}{|c|}{\textbf{Kernel \textit{k}-means}}\\
\multicolumn{2}{|c|}{\textbf{calculation time:}}&\multicolumn{3}{|c|}{\textbf{clustering time:}}\\
\multicolumn{2}{|c|}{ 33,132.06}&\multicolumn{3}{|c|}{81.23 ($\pm43.66$)}\\
\hline
\textbf{Sample}&\textbf{Kernel} &\multicolumn{2}{|c|}{\textbf{Clustering time }}&\textbf{nysSC}\\
\cline{3-4}
\textbf{size}&\textbf{calculation}&\textbf{aKKm}&\textbf{tKKm}&\\
\hline
100 & 203.65&  4.33&0.02&18.27\\
&($\pm20.87$)&($\pm1.32$)&($\pm0.005$)&($\pm0.67$)\\
\hline
200 & 527.11& 7.67 & 0.69&14.59\\
&($\pm20.08$)&($\pm2.23$)&($\pm0.36$)&($\pm1.45$)\\
\hline
500 & 810.03 & 10.81&0.01&63.99\\
&($\pm24.36$)&($\pm2.31$)&($\pm0.001$)&($\pm0.79$)\\
\hline
1,000 & \textbf{1,564.26}& \textbf{34.56}&1.05&185.6\\
&\textbf{($\mathbf{\pm63.66}$)}&\textbf{($\mathbf{\pm4.20}$)}&($\pm0.28$)&($\pm43.13$)\\
\hline
2,000 & 3,867.10& 53.20&2.69 &565.84\\
&($\pm101.28$)&($\pm10.66$)&($\pm1.29$)&($\pm95.50$)\\
\hline
5,000 &14,165.98 & 450.42& 3.44&4,989.21\\
&($\pm899.49$)&($\pm97.89$)&($\pm0.79$)&($\pm403.06$)\\
\hline
 \end{tabular}}
\qquad
\subfigure[\scriptsize MNIST]{
\label{tbl:time_mnist}
\scriptsize
    \begin{tabular}{|c|c|c|c|c|}
\hline
\multicolumn{5}{|c|}{\textbf{\textit{k}-means clustering time:} 448.69 ($\pm177.24$)} \\
   \hline
\multicolumn{5}{|c|}{\textbf{lKKm clustering time:} 445.11 ($\pm26.25$)} \\
   \hline
\multicolumn{2}{|c|}{\textbf{Full Kernel}}&\multicolumn{3}{|c|}{\textbf{Kernel \textit{k}-means}}\\
\multicolumn{2}{|c|}{\textbf{calculation time:}}&\multicolumn{3}{|c|}{\textbf{clustering time:}}\\
\multicolumn{2}{|c|}{ 514.54}&\multicolumn{3}{|c|}{3,953.62 ($\pm2,157.69$)}\\
\hline
\textbf{Sample}&\textbf{Kernel} &\multicolumn{2}{|c|}{\textbf{Clustering time }}&\textbf{nysSC}\\
\cline{3-4}
\textbf{size}&\textbf{calculation}&\textbf{aKKm}&\textbf{tKKm}&\\
\hline
100&2.02&22.7&0.06&7.20\\
&($\pm0.09$)&($\pm13.30$)&($\pm0.002$)&($\pm1.00$)\\
\hline
200&5.34&53.48&0.18&49.56\\
&($\pm0.33$)&($\pm30.17$)&($\pm0.03$)&($\pm9.19$)\\
\hline
500&\textbf{5.37}&\textbf{72.24}& 0.25&348.86\\
&\textbf{($\mathbf{\pm0.53}$)}&\textbf{($\mathbf{\pm29.44}$)}&($\pm0.02$)&($\pm107.43$)\\
\hline
1,000& 7.88&75.26& 0.59 &920.34\\
&($\pm0.18$)&($\pm22.42$)&($\pm0.08$)&($\pm219.62$)\\
\hline
2,000 &6.87 & 85.20&1.56 &4,180.21\\
&($\pm0.40$)&($\pm25.19$)&($\pm0.08$)&($\pm385.82$)\\
\hline
5,000 &  41.07  & 1,478.85    &   14.63   &26,743.83\\
&($\pm5.12$)&($\pm178.97$)&($\pm6.58$)&($\pm12.32$)\\
\hline
\end{tabular}}
\qquad
\subfigure[\scriptsize Forest Cover Type]{
\label{tbl:time_covertype}
\scriptsize
\begin{tabular}{|c|c|c|c|c|}
 \hline
\multicolumn{5}{|c|}{\textit{k}-means clustering time: 29.28 ($\pm9.12$)} \\
   \hline
\textbf{Sample}&\textbf{Kernel} &\multicolumn{2}{|c|}{\textbf{Clustering time }}&\textbf{nysSC}\\
\cline{3-4}
\textbf{size}&\textbf{calculation}&\textbf{aKKm}&\textbf{tKKm}&\\
\hline
100	&1.40&17.70	 &0.18&10.35\\
	&($\pm0.29$)&	($\pm6.06$)	&($\pm0.05$)&($\pm1.44$)\\
\hline
200	&1.64&	22.57 &0.36&16.83\\
	&($\pm0.09$)&	($\pm12.39$)	&($\pm0.11$)&($\pm2.38$)\\
\hline
500	&\textbf{3.82}&	\textbf{28.56} &0.52&50.11\\
	&\textbf{($\mathbf{\pm0.03}$)}&	\textbf{($\mathbf{\pm11.61}$)}	&($\pm0.13$)&($\pm10.83$)\\
\hline
1,000	&11.14&	55.01 &0.76&137.26\\
	&($\pm0.68$)&	($\pm18.57$)	&($\pm0.15$)&($\pm40.88$)\\
\hline
2,000	&22.80&	134.68 &1.22&550.75\\
	&($\pm1.27$)&	($\pm26.10$)	&($\pm0.29$)&($\pm326.22$)\\
\hline
5,000	&64.11&	333.31 &2.48&6,806.52\\
	&($\pm6.66$)&	($\pm39.35$)	&($\pm0.33$)&($\pm3,966.61$)\\
\hline
\end{tabular}}
\qquad
\subfigure[\scriptsize Network Intrusion]{
\label{tbl:time_kddcup}
   \scriptsize
\begin{tabular}{|c|c|c|c|c|}
   \hline
\multicolumn{5}{|c|}{\textbf{\textit{k}-means clustering time:} 1,725.76 ($\pm544.28$)} \\
   \hline
\textbf{Sample}&\textbf{Kernel} &\multicolumn{2}{|c|}{\textbf{Clustering time }}&\textbf{nysSC}\\
\cline{3-4}
\textbf{size}&\textbf{calculation}&\textbf{aKKm}&\textbf{tKKm}&\\
\hline
100&	6.48	&	151.66&5.47&4,300.39\\
	& ($\pm0.84$) &	($\pm5.15$)	& ($\pm2.76$)&($\pm248.32$)\\
\hline
200&	11.63	&	231.55&16.17&16,349.15\\
	& ($\pm0.79$) &	($\pm7.09$)	& ($\pm7.33$)&($\pm5,392.63$)\\
\hline
500&	27.03	&	283.19& 35.05&56,174.24\\
	& ($\pm2.19$) &	($\pm15.30$)	& ($\pm12.27$)&($\pm874.61$)\\
\hline
1,000&	\textbf{52.14}	&\textbf{433.51}	&33.87&123,928.09\\
	& \textbf{($\mathbf{\pm2.68}$)} &	\textbf{($\mathbf{\pm16.12}$)}	& ($\pm11.82$)&($\pm2,449.48$)\\
\hline
2,000&	77.13	&	2,231.37&65.94&156,661.34\\
	& ($\pm14.21$) &	($\pm637.87$)	& ($\pm7.94$)&($\pm105.16$)\\
\hline
\end{tabular}}
\caption{\footnotesize Comparison of running time in seconds on the four data sets}
\label{tbl:time}
\end{table}
\begin{table}
\centering
\scriptsize
\begin{tabular}{|c|c|c|c|c|}
\hline
Data set & \multicolumn{2}{|c|}{Imagenet} & \multicolumn{2}{|c|}{MNIST} \\ 
\hline
\textbf{Sample size}  & \textbf{aKKm}  & \textbf{tKKm} & \textbf{aKKm}  & \textbf{tKKm} \\
\hline
100 &0.71&0.28&0.47&0.37\\
&($\pm0.05$)&($\pm0.06$)&($\pm0.14$)&($\pm0.05$)\\
\hline
200 &0.71&0.34&0.61&0.36\\
&($\pm0.09$)&($\pm0.04$)&($\pm0.24$)&($\pm0.05$)\\
\hline
500 &0.75&0.51&\textbf{0.69}&0.48\\
&($\pm0.11$)&($\pm0.04$)&\textbf{($\mathbf{\pm0.11}$)}&($\pm0.07$)\\
\hline
1,000 & 0.77&0.58 &0.70&0.58\\
&($\pm0.18$)&($\pm0.06$)&($\pm0.12$)&($\pm0.09$)\\
\hline
2,000 &0.78&0.59&0.71&0.61\\
&($\pm0.13$)&($\pm0.08$)&($\pm0.08$)&($\pm0.04$)\\
\hline
5,000 &\textbf{0.84}& 0.68& 0.71   &  0.70 \\
&\textbf{($\mathbf{\pm0.12}$)}&($\pm0.11$)&($\pm0.12$)&($\pm0.12$)\\
\hline
\end{tabular}
\caption{\footnotesize Adjusted Rand Index (ARI) with respect to the kernel \textit{k}-means partitions}
\label{tbl:ari}
\end{table}
\subsection{Performance of Approximate kernel \textit{k}-means using different sampling strategies}
Table~\ref{tbl:sampling_strategies_imagenet} and
Figs.~\ref{fig:error_red_imagenet_sampling_comparison}
and~\ref{fig:nmi_imagenet_sampling_comparison} compare the diagonal sampling, 
column norm sampling, and \textit{k}-means sampling strategies with the uniform
random sampling technique. In
Table~\ref{tbl:sampling_strategies_imagenet}, we assume that the $n
\times n$ kernel matrix is pre-computed and only include the time taken for
sorting the diagonal entries (for diagonal sampling), or computing the column
norms (for column norm sampling), and the time taken for choosing the first $m$
indices, in the sampling time. For the \textit{k}-means sampling, we show the
time taken to 
execute \textit{k}-means and find the representative sample. As expected, the
sampling time for all the
non-uniform sampling techniques is greater than the time required for random
sampling. Because of the high complexity of column norm sampling and
\textit{k}-means sampling, their sampling
time is significantly greater than those of the other two methods. 
Figs.~\ref{fig:error_red_imagenet_sampling_comparison}
and~\ref{fig:nmi_imagenet_sampling_comparison} 
show that the column-norm sampling produces inferior error reduction and NMI results, 
compared to the random sampling.  The diagonal and k-means sampling achieve similar 
results as random sampling. These results show that uniform random sampling is not 
only the most efficient way to use the approximate kernel \textit{k}-means, but it 
also produces partitions that are as good as or better than the more ``intelligent'' 
sampling schemes.

In Table~\ref{tbl:sampling_strategies_mnist} and
Figs.~\ref{fig:error_red_mnist_sampling_comparison} 
and~\ref{fig:nmi_mnist_sampling_comparison}, we gauge the various non-uniform
sampling techniques 
against the uniform random sampling strategy. As expected, they take much longer
than
uniform sampling. 
Similar error reduction and NMI values are achieved by all the 
sampling strategies when provided with a sufficiently large sample ($m \geq
2,000$). 
Therefore, the additional time spent for non-uniform sampling does not lead to
any significant 
improvement in the performance.
\begin{table}
\centering
\subfigure[\scriptsize Imagenet]{\label{tbl:sampling_strategies_imagenet}
\scriptsize
\begin{tabular}{|c|c|c|c|c|}
\hline
\textbf{Sample} & \multicolumn{4}{|c|}{\textbf{Sampling time (milliseconds)}}\\
\cline{2-5}
\textbf{size}  & \textbf{Uniform} & \textbf{Column norm}& \textbf{Diagonal} & \textbf{\textit{k}-means} \\
&\textbf{Random}  & \textbf{($\mathbf{\times e03}$)} & &\textbf{($\mathbf{\times e06}$)}\\
\hline
100 & 1.40 & 3.38 &2.93 &5.95 \\
&($\pm0.23$)&($\pm0.08$)&($\pm0.58$)&($\pm0.99$)\\
\hline
200 &1.14 & 5.83& 4.13& 9.75\\
&($\pm0.22$)&($\pm0.43$)&($\pm0.30$)&($\pm1.45$)\\
\hline
500 & 1.34 & 3.44 &2.76& 6.33\\
&($\pm0.08$)&($\pm0.05$)&($\pm0.46$)&($\pm0.47$)\\
\hline
1,000 & \textbf{1.54} & 3.51 &2.95&11.40\\
&\textbf{($\mathbf{\pm0.17}$)}&($\pm0.03$)&($\pm0.24$)&($\pm1.18$)\\
\hline
2,000 & 1.59& 3.76& 4.73&23.30\\
&($\pm0.21$)&($\pm0.16$)&($\pm0.75$)&($\pm1.18$)\\
\hline
5,000 & 1.67& 6.43& 6.34&41.30\\
&($\pm0.17$)&($\pm0.64$)&($\pm0.78$)&($\pm4.49$)\\
\hline
\end{tabular}}
\qquad
\subfigure[\scriptsize MNIST]{ \label{tbl:sampling_strategies_mnist}
\scriptsize
\begin{tabular}{|c|c|c|c|c|}
\hline
\textbf{Sample} & \multicolumn{4}{|c|}{\textbf{Sampling time (milliseconds)}}\\
\cline{2-5}
\textbf{size}  & \textbf{Uniform} & \textbf{Column norm}& \textbf{Diagonal} & \textbf{\textit{k}-means}\\
& \textbf{Random}  &\textbf{($\mathbf{\times e03}$)} &  & \textbf{($\mathbf{\times e06}$)}\\
\hline
100  &9.41&94.22&20.72&$3.83$\\
&($\pm1.74$)&($\pm3.97$)&($\pm7.19$)&($\pm0.542$)\\
\hline
200  &9.34&88.92&18.70&$2.62$\\
&($\pm1.16$)&($\pm4.44$)&($\pm6.10$)&($\pm0.254$)\\
\hline
500  &11.10&86.27&18.53&$7.82$\\
&($\pm3.81$)&($\pm0.94$)&($\pm3.99$)&($\pm3.42$)\\
\hline
1,000  &\textbf{8.41}&86.15&19.59&$5.88$\\
&\textbf{($\mathbf{\pm1.38}$)}&($\pm0.70$)&($\pm9.30$)&($\pm1.78$)\\
\hline
2,000&9.53&86.66&22.03&$4.91$\\
&($\pm1.94$)&($\pm0.85$)&($\pm6.44$)&($\pm0.207$)\\
\hline
5,000 &8.87&85.87&26.18&$14.01$ \\
&($\pm0.99$)&($\pm0.85$)&($\pm22.49$)&($\pm1.01$)\\
\hline
\end{tabular}}
\caption{\footnotesize Comparison of sampling time for different sampling
strategies}
\label{tbl:sampling_strategies}
\end{table}

\begin{figure*}
\centering
\begin{tabular}{cccc}
\multicolumn{4}{c}{\includegraphics[width=10cm,height=0.5cm]{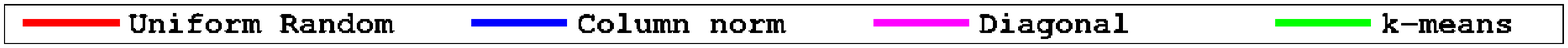}} \\
\subfigure[\scriptsize Error reduction]{\label{fig:error_red_imagenet_sampling_comparison}\includegraphics[width=4.2cm,height=3.2cm] {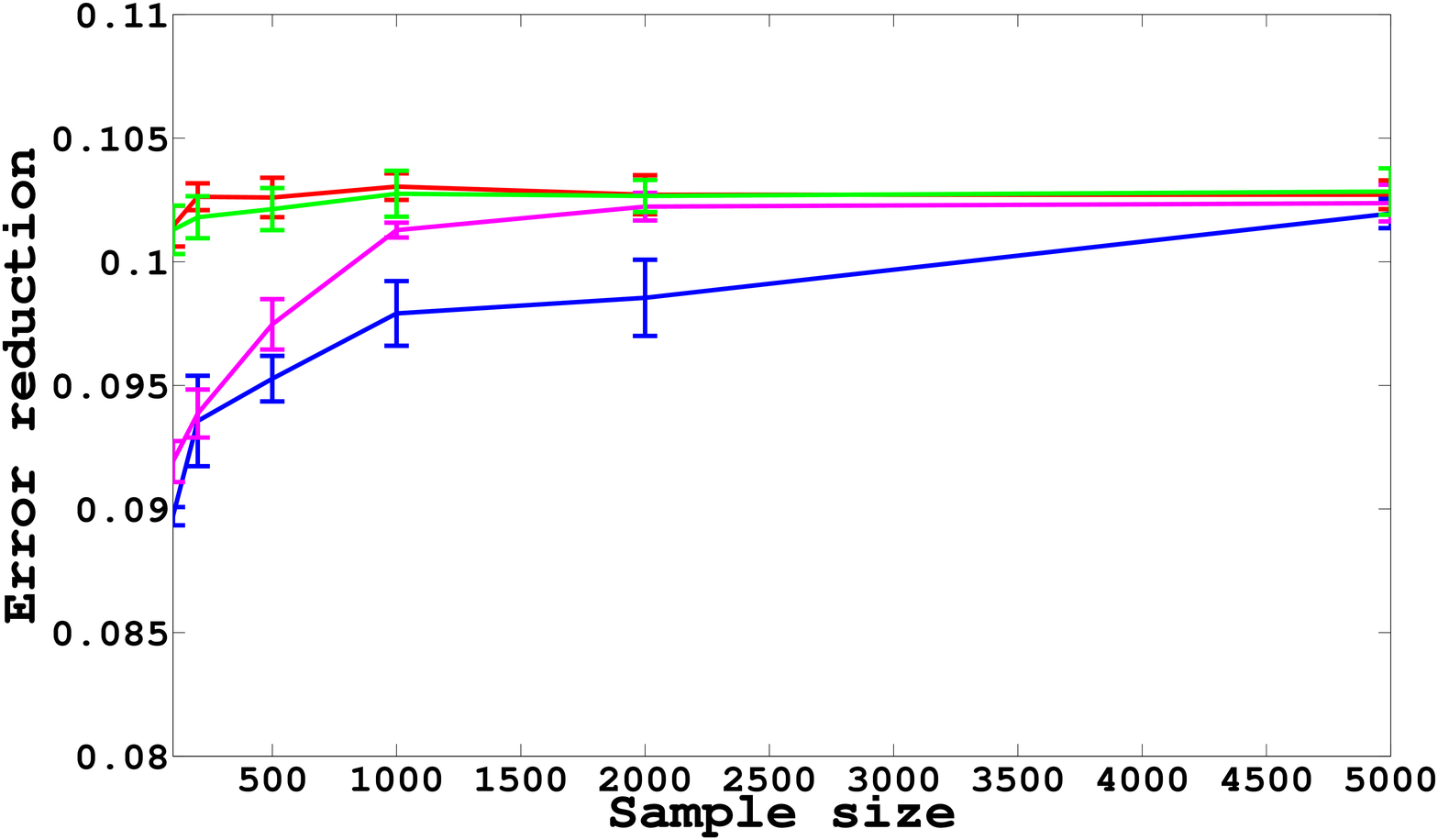}} &
\subfigure[\scriptsize NMI]{\label{fig:nmi_imagenet_sampling_comparison}\includegraphics[width=4.2cm,height=3.2cm]{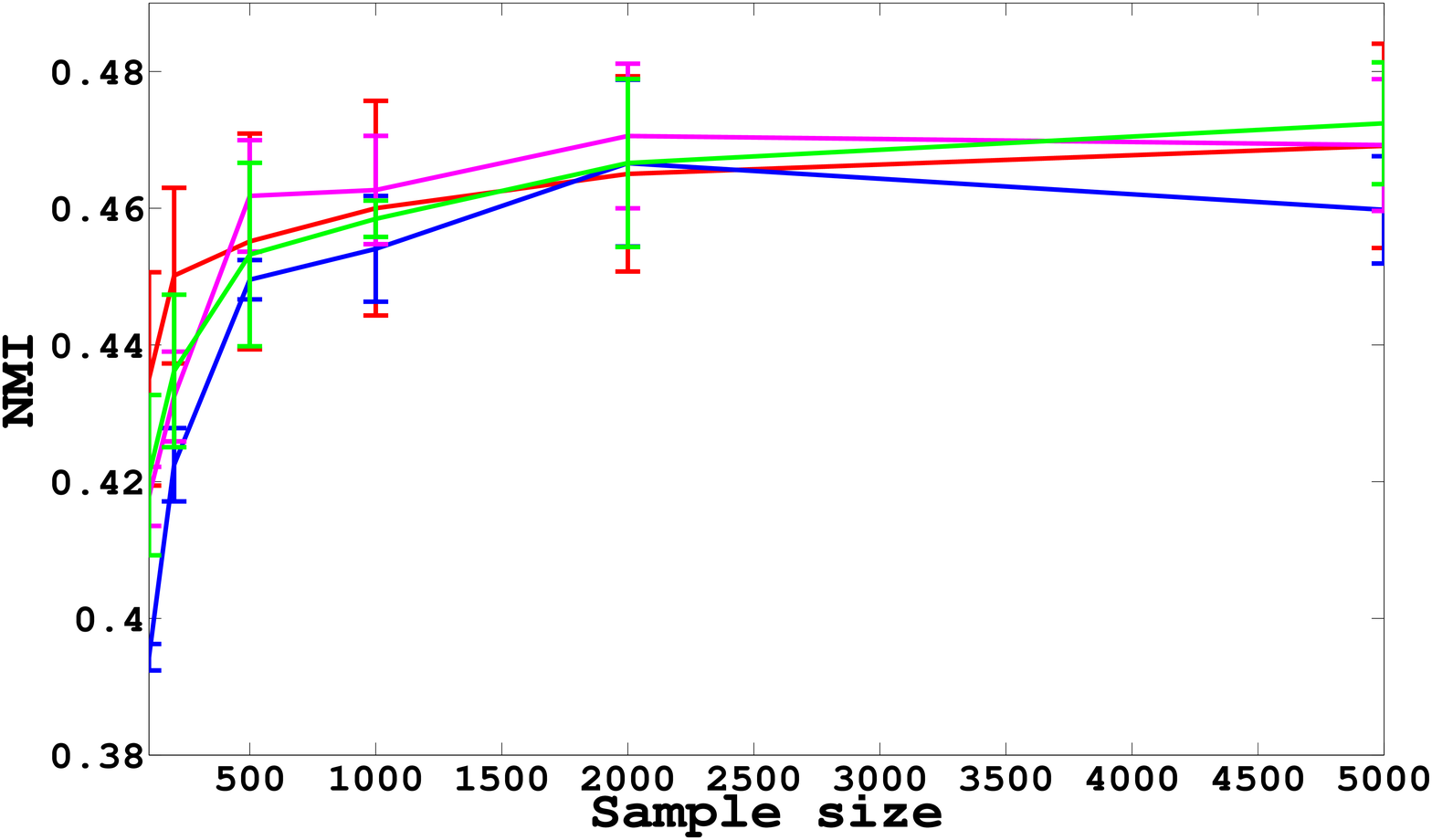}} &
\subfigure[\scriptsize Error reduction]{\label{fig:error_red_mnist_sampling_comparison}\includegraphics[width=4.2cm,height=3.2cm]{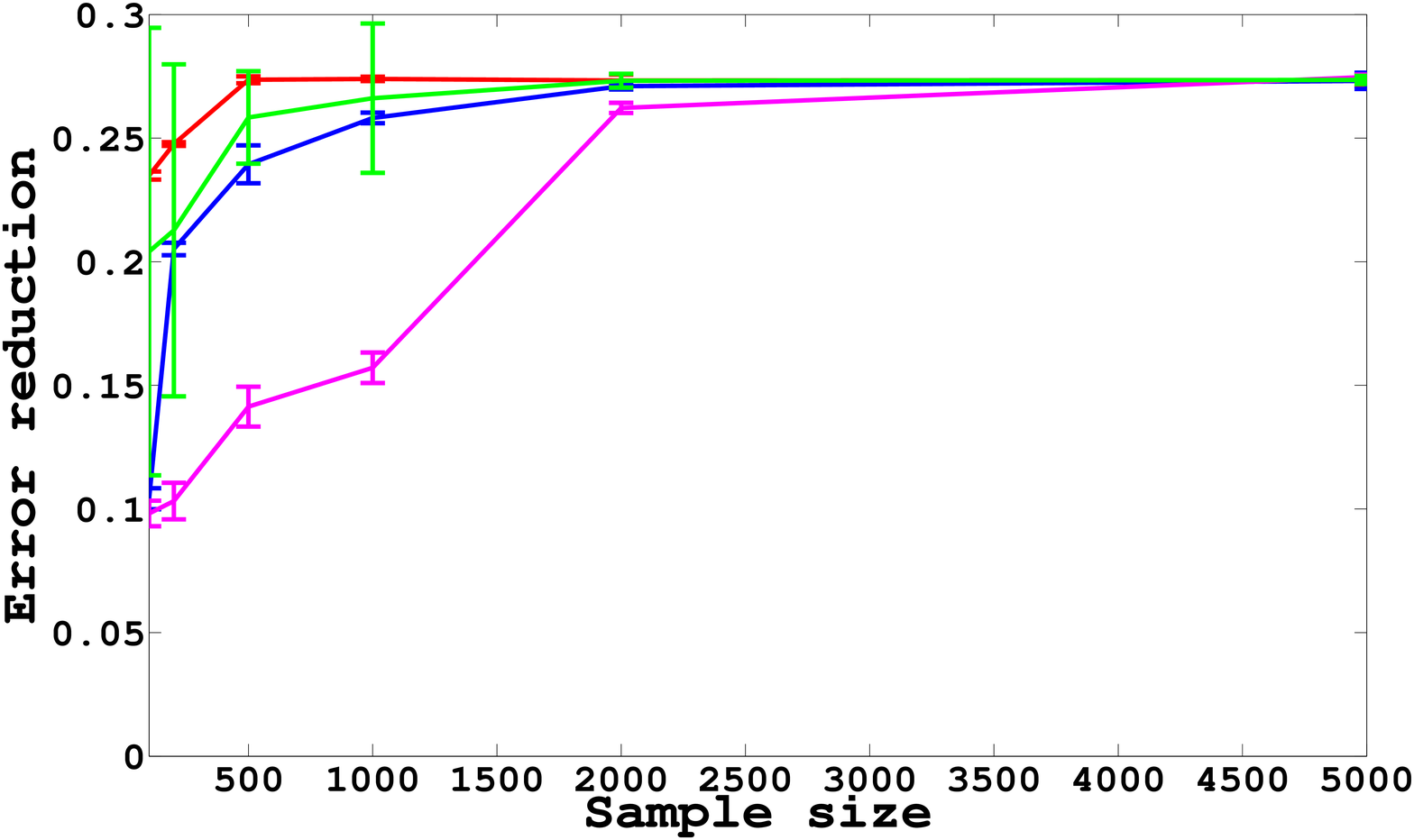}} &
\subfigure[\scriptsize NMI]{\label{fig:nmi_mnist_sampling_comparison}\includegraphics[width=4.2cm,height=3.2cm]{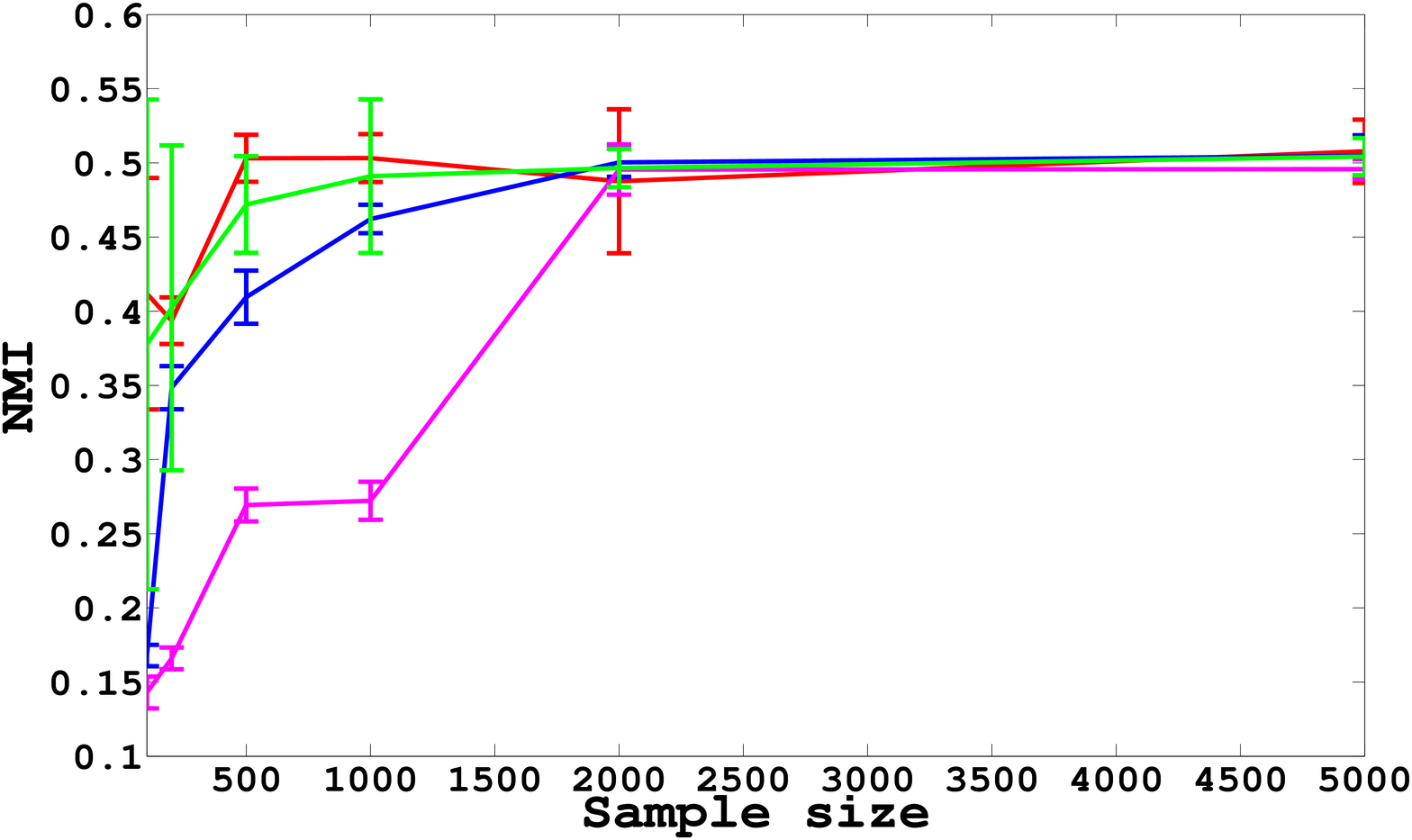}}
\end{tabular}
\caption{\footnotesize Comparison of accuracy of different sampling strategies on the Imagenet ((a) and (b)) and MNIST ((c) and (d)) data sets}
\label{fig:comp_sampling_strategies}
\end{figure*}
\subsection{Scalability}
Using the large Forest Cover Type and Network Intrusion data sets, we
demonstrate that the proposed algorithm 
is scalable to large data sets. The aKKm algorithm uses less than $40$ GB of
memory, thereby dramatically reducing the memory requirements of clustering.

\subsubsection{Datasets}
\begin{compactitem}
\item \textbf{Forest Cover Type:}  This data set~\cite{blackard1999comparative}
is composed of cartographic variables obtained from the US Geological Survey
(USGS) and the US Forest
Service (USFS) data. Each of the $581,012$ data points represents the attributes
of a $30 \times 30$
meter cell of the forest floor. There are a total of $12$ attributes, including qualitative measures
like soil type and wilderness area, and quantitative measures like slope,
elevation, and distance to hydrology. These $12$ attributes are represented using
$54$ features. The data are grouped into $7$ classes, each representing a
different forest cover type. The true cover type was determined from the USFS
Region 2 Resource Information System (RIS) data.

\item \textbf{Network Intrusion:} The Network Intrusion
data set~\cite{stolfo2000cost} contains
4,898,431 50-dimensional patterns representing TCP dump data from seven weeks
of local-area network traffic. The data are classified into 23 classes, one
class representing legitimate traffic and the remaining 22 classes
representing different types of illegitimate traffic.
\end{compactitem}

\subsubsection{Experimental setup} For these data sets, 
it is currently infeasible to
compute and store the full kernel on a single system due to memory and
computational time constraints. The aKKm algorithm alleviates this complexity issue.

We compare the performance of the aKKm algorithm on these data sets in terms of the
running time, error reduction 
and NMI, with that of the \textit{k}-means, 
the tKKm, and the nysSC algorithms. We found that the lKKm algorithm takes longer than $24$ hours to find the leaders for these large data sets, 
clearly demonstrating its non-scalability. Therefore, we eliminated this algorithm from the set of baseline algorithms.

We evaluate the efficiency of the aKKm algorithm for different sample sizes 
ranging from $100$ to $5,000$. On the Network Intrusion data set, the value of
$m$ is increased 
only up to $2,000$, as greater values of $m$ require more than $40$ GB memory.
On the Cover Type data set, we employ the RBF kernel to compute the pairwise
similarity, with the parameter $\sigma$ set to 
$0.35$. The $3$-degree polynomial kernel is 
employed for the Network Intrusion data set. The kernels and their parameters
are tuned to achieve optimal
performance. The number of clusters is again set equal to the true number of
classes in the data set.
\begin{figure*}
\centering
\begin{tabular}{cccc}
\multicolumn{4}{c}{\includegraphics[width=5cm,height=0.5cm]{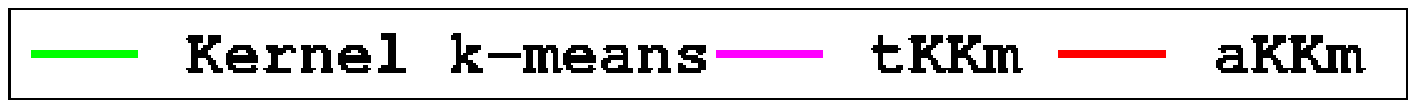}} \\
\subfigure[\scriptsize Imagenet]{\label{fig:error_red_imagenet}\includegraphics[width=4.2cm,height=3.2cm]{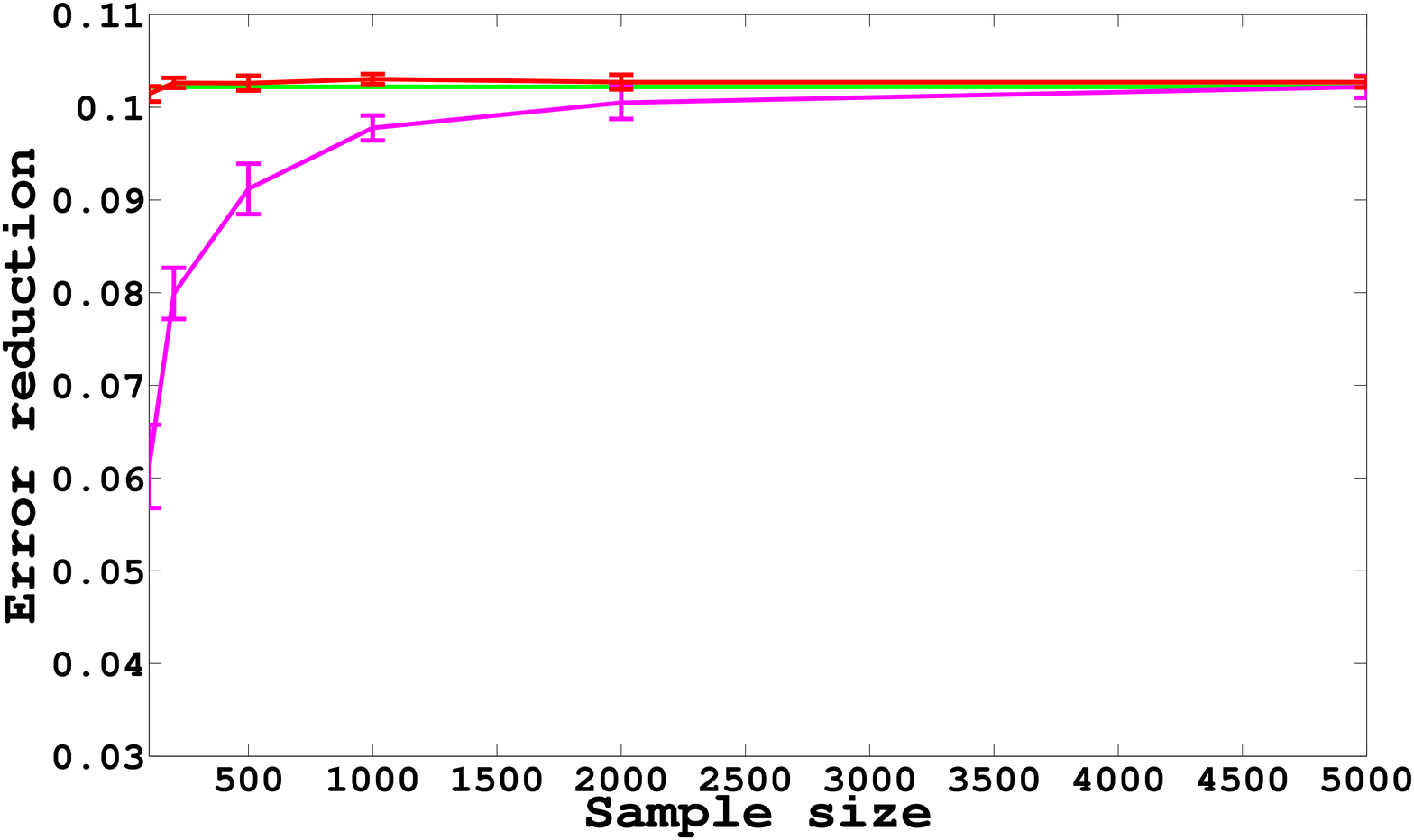}} &
\subfigure[\scriptsize MNIST]{\label{fig:error_red_mnist}\includegraphics[width=4.2cm,height=3.2cm]{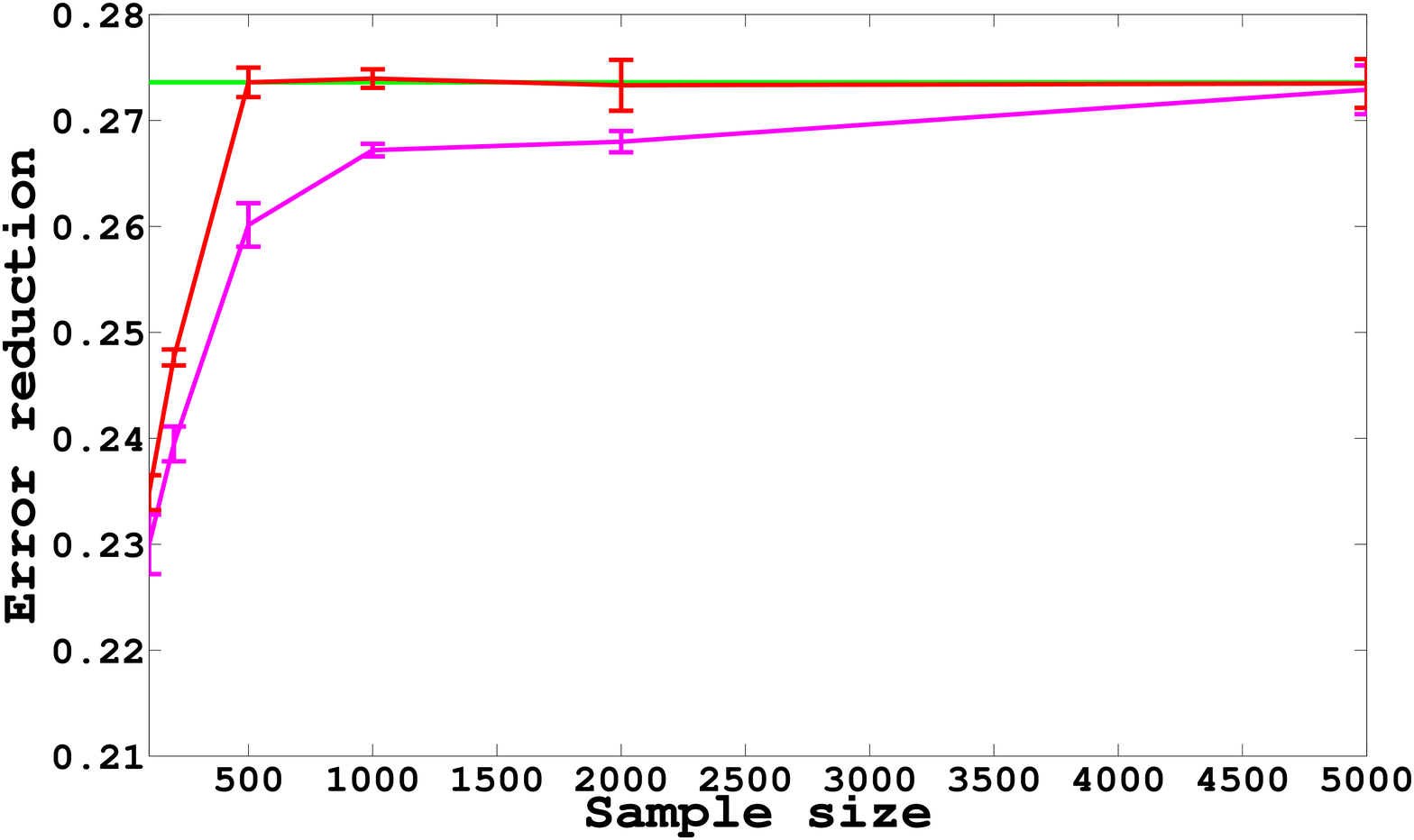}} &
\subfigure[\scriptsize Forest Cover Type]{\label{fig:error_red_covertype}\includegraphics[width=4.2cm,height=3.2cm]{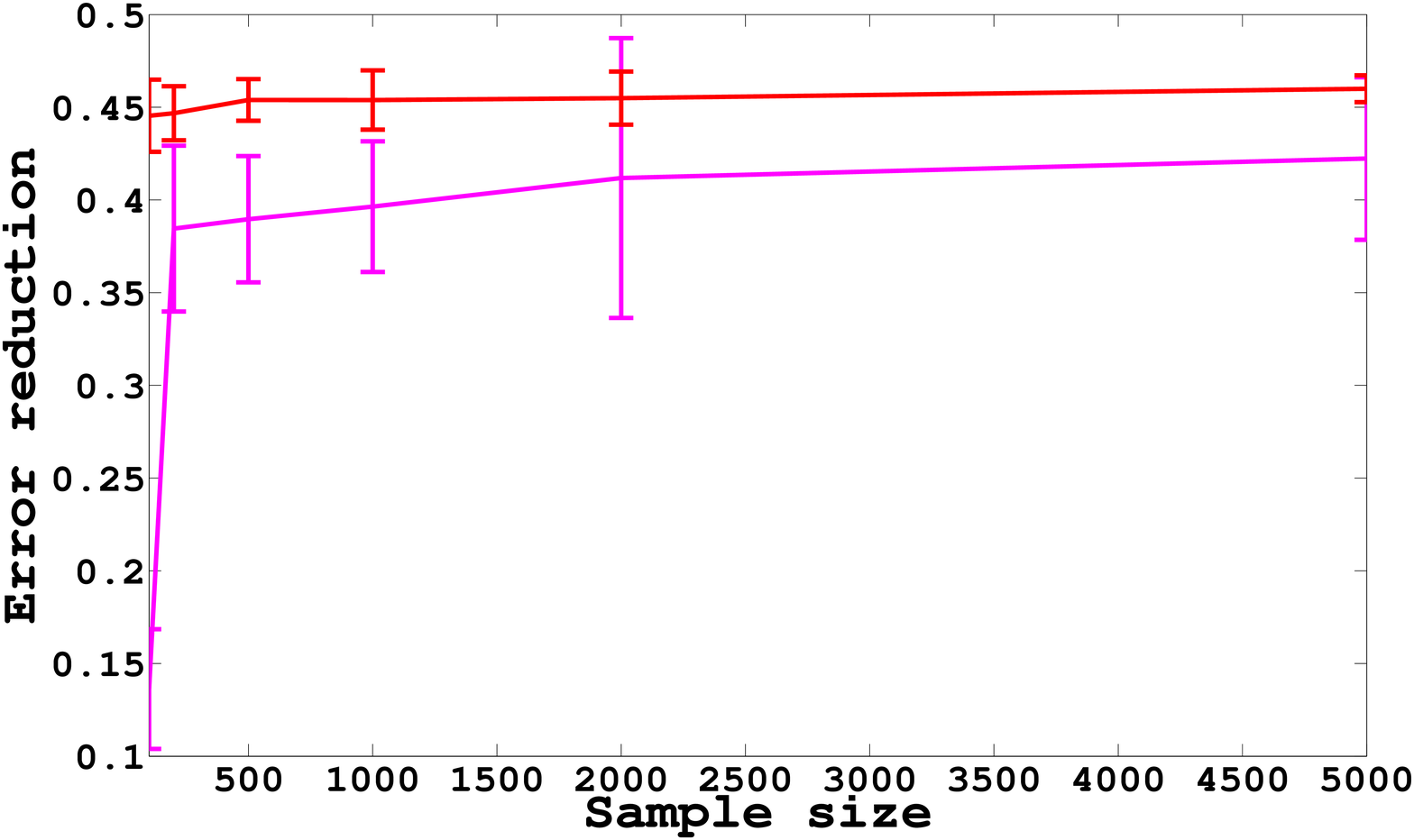}} &
\subfigure[\scriptsize Network Intrusion]{\label{fig:error_red_kddcup}\includegraphics[width=4.2cm,height=3.2cm]{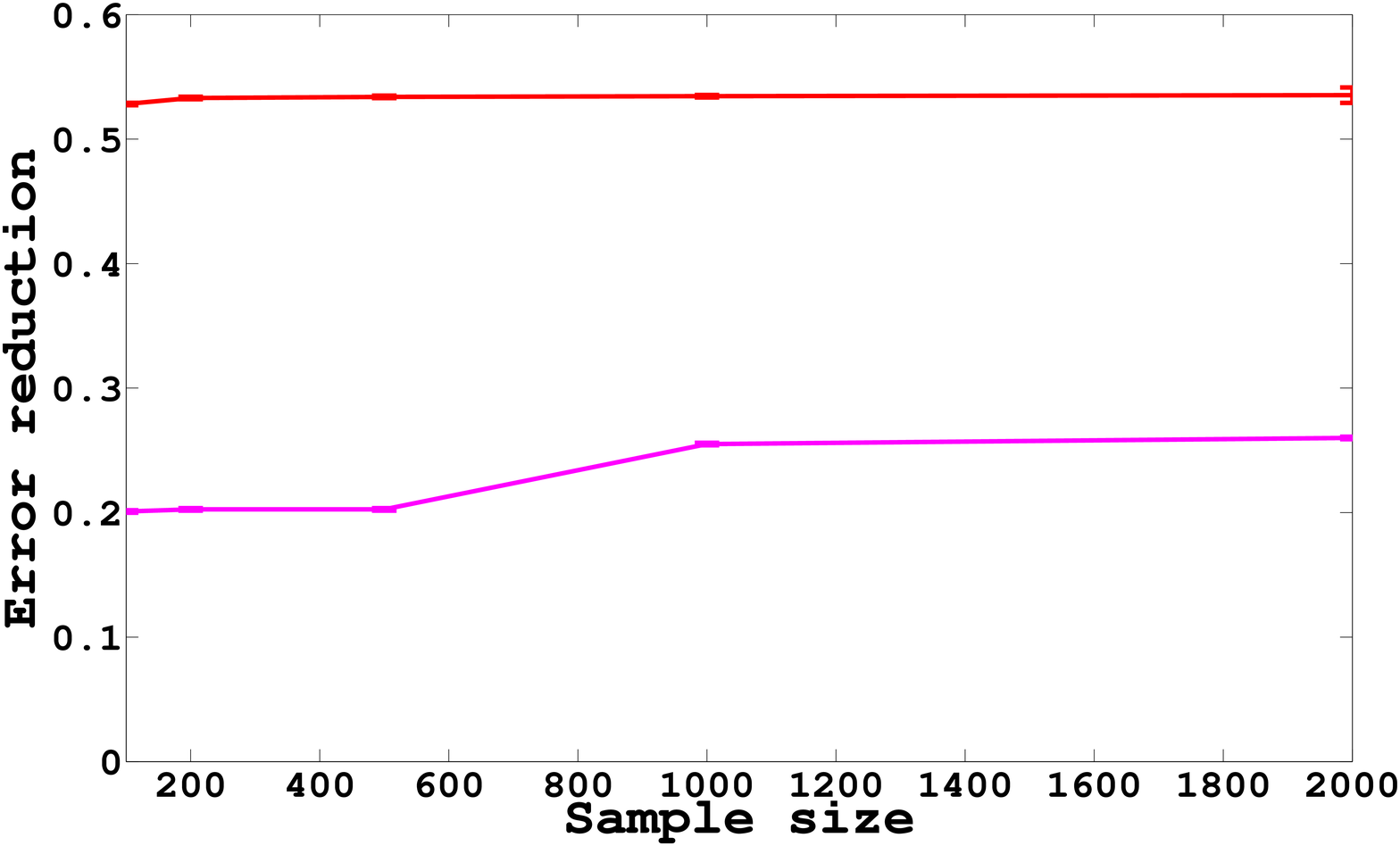}}
\end{tabular}
\caption{\footnotesize Clustering error reduction}
\label{fig_error_red}
\end{figure*}
\begin{figure*}
\centering
\begin{tabular}{cccc}
\multicolumn{4}{c}{\includegraphics[width=8cm,height=0.5cm]{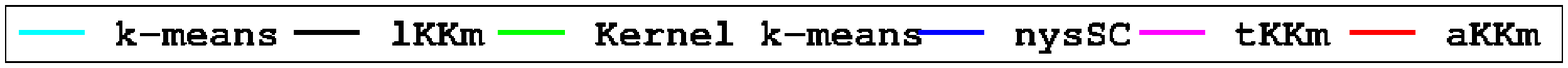}}\\
\subfigure[\scriptsize Imagenet]{\label{fig:nmi_imagenet}\includegraphics[width=4.2cm,height=3.2cm]{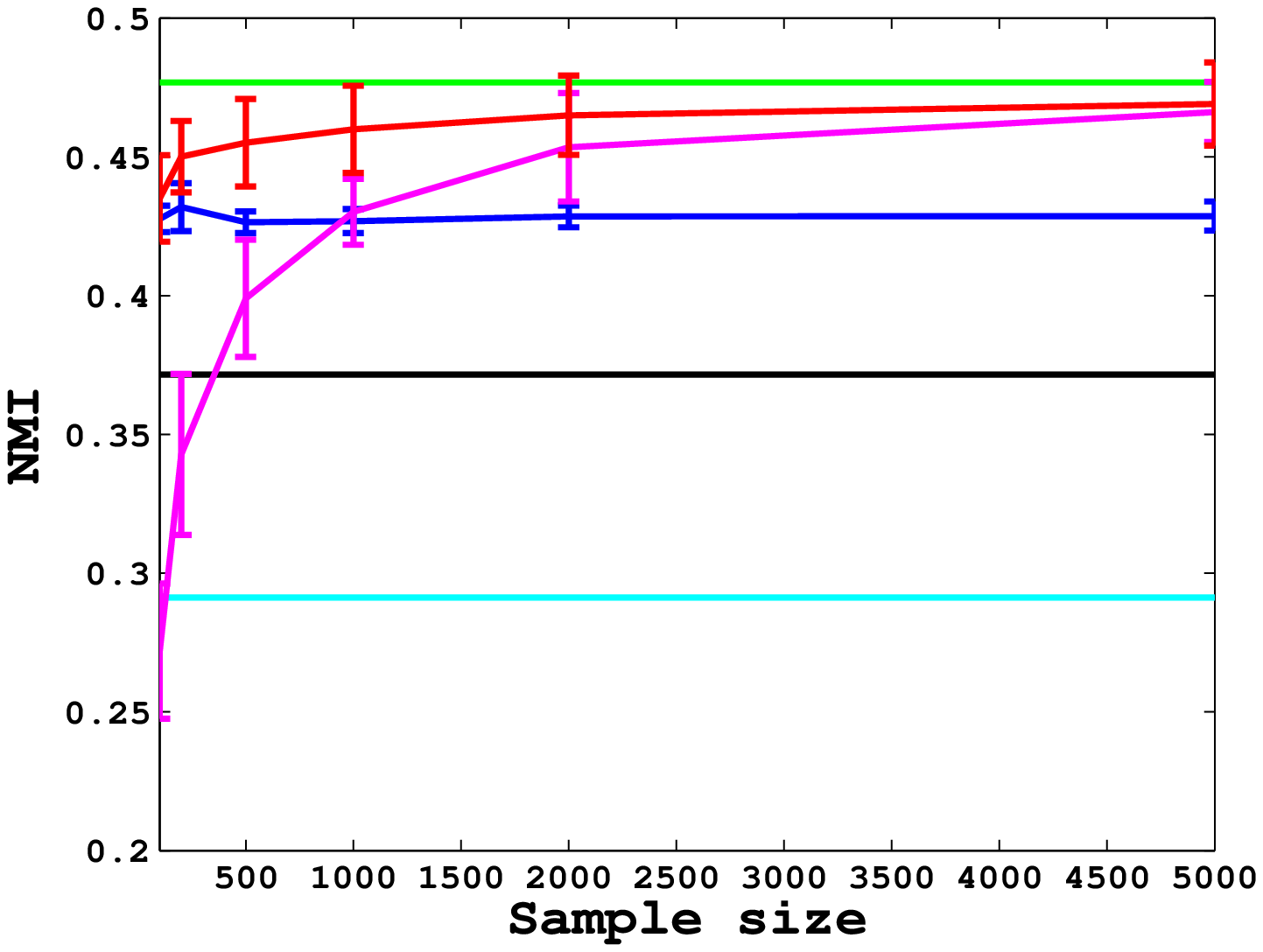}} &
\subfigure[\scriptsize MNIST]{\label{fig:nmi_mnist}\includegraphics[width=4.2cm,height=3.2cm]{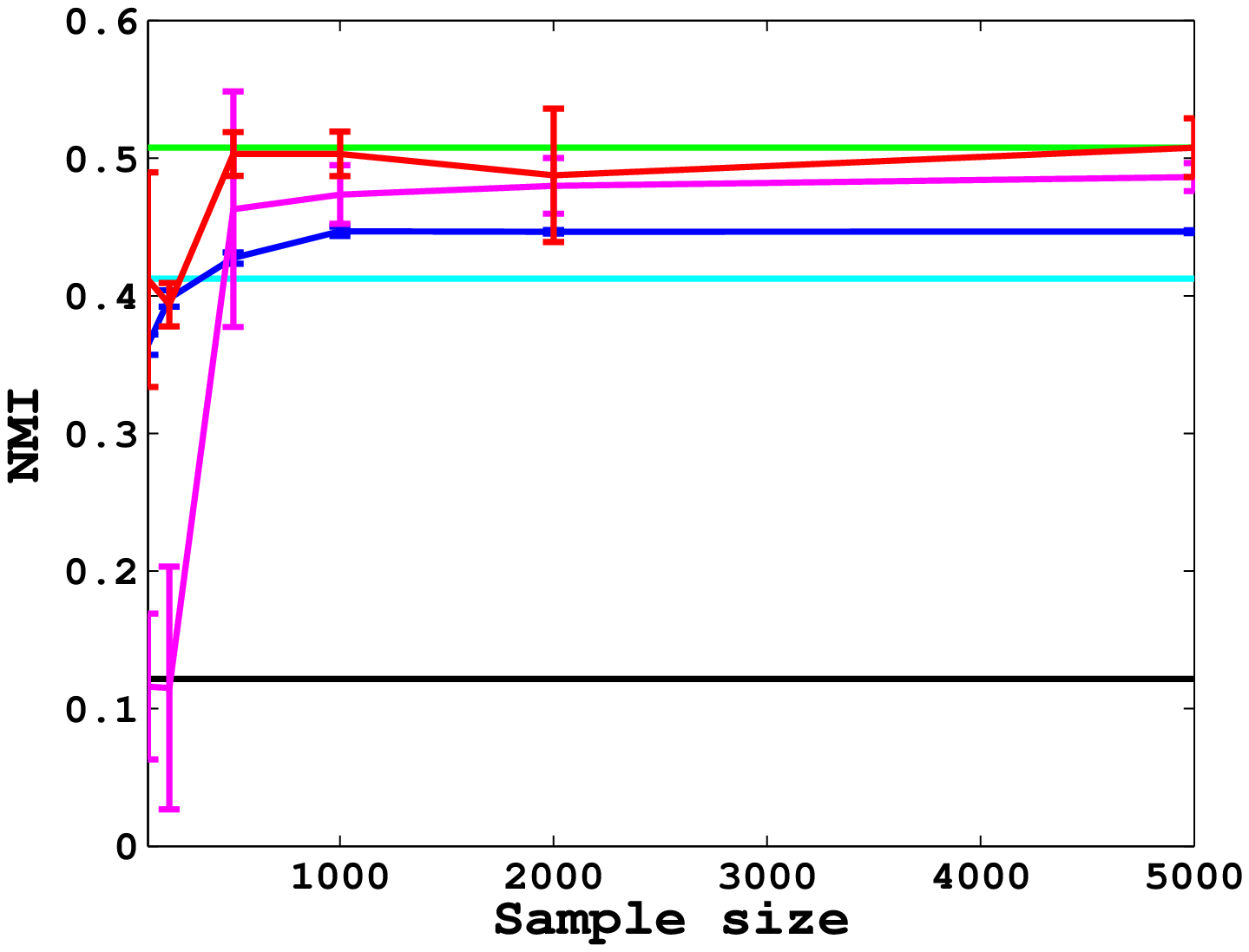}} &
\subfigure[\scriptsize Forest Cover Type]{\label{fig:nmi_covertype}\includegraphics[width=4.2cm,height=3.2cm]{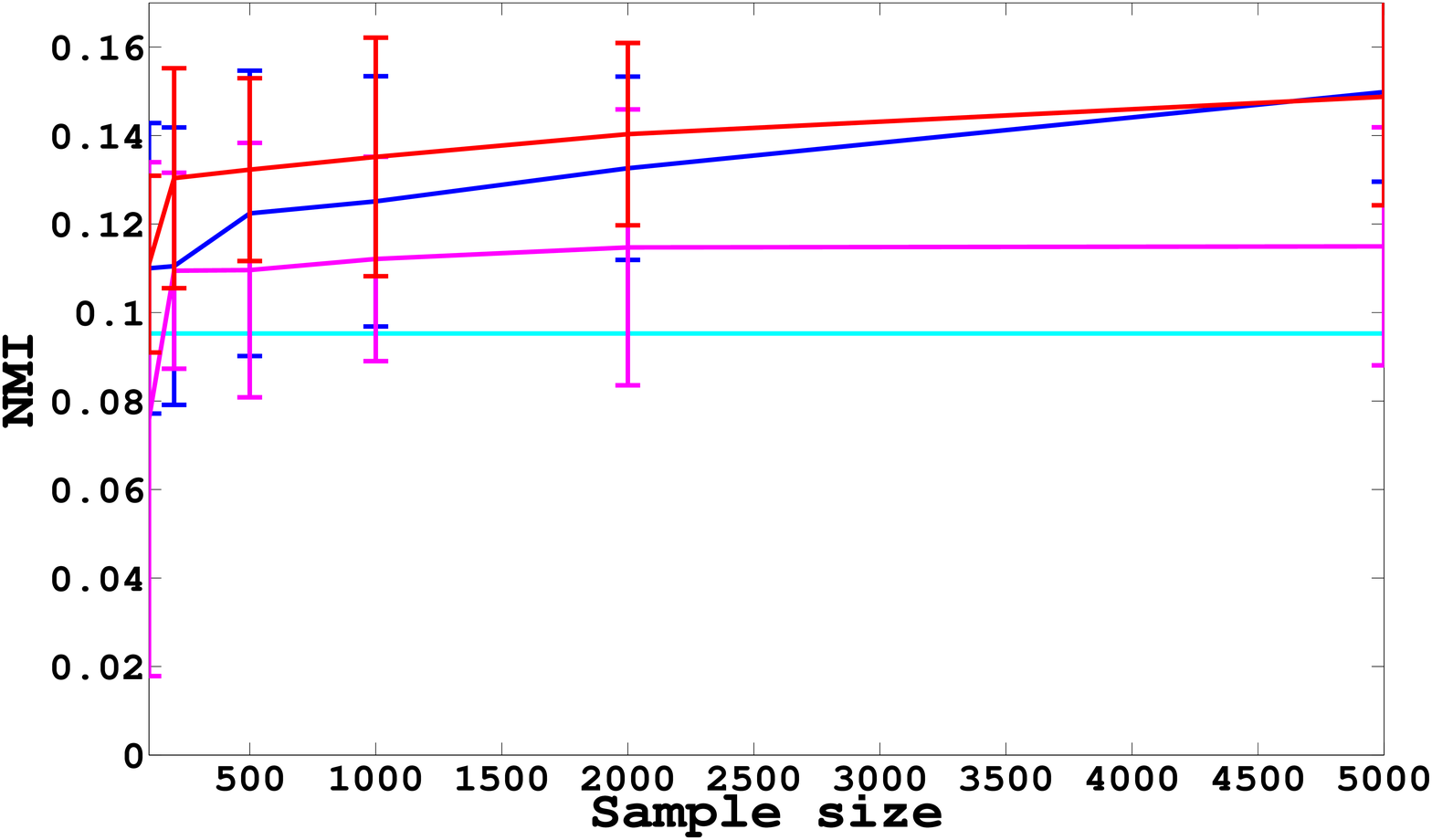}} &
\subfigure[\scriptsize Network Intrusion]{\label{fig:nmi_kddcup}\includegraphics[width=4.2cm,height=3.2cm]{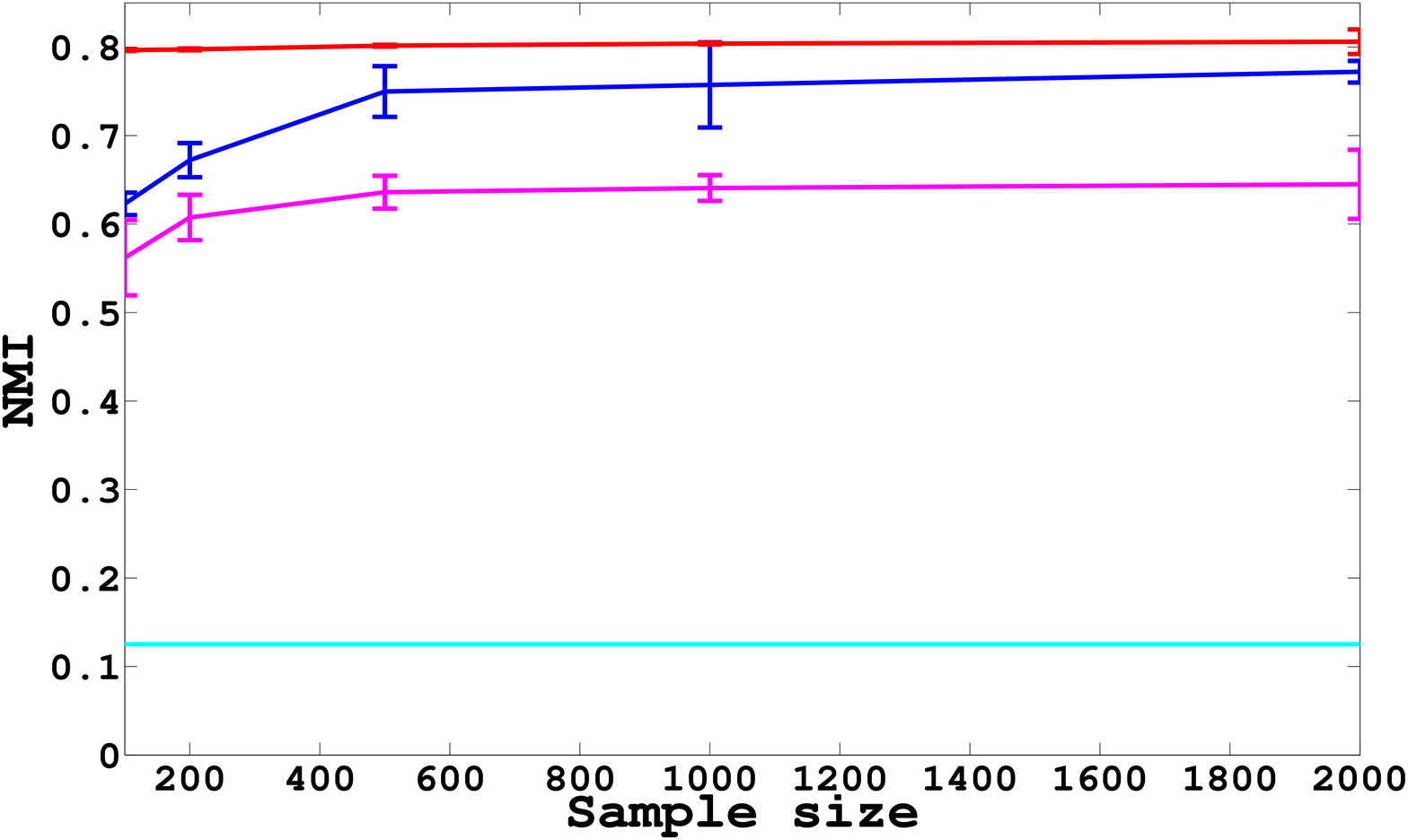}}
\end{tabular}
\caption{\footnotesize Normalized Mutual Information between the cluster labels and the true class labels}
\label{fig:nmi}
\end{figure*}

\subsubsection{Experimental results}
\textbf{Forest Cover Type:} We compare the running time of the algorithms in
Table~\ref{tbl:time_covertype}. 
As in the MNIST data set, kernel calculation time is minimal when compared to
clustering time and 
the aKKm algorithm is less efficient than the tKKm algorithm in terms of the 
clustering time. When compared to the nysSC algorithm, its
running time is higher 
when the sample size $m$ is small, but as the time taken by the nysSC 
algorithm increases cubically with $m$, the aKKm algorithm becomes more efficient as $m$ is
increased. 
It is faster than the \textit{k}-means algorithm when $m < 500$. 

Figs.~\ref{fig:error_red_covertype} and~\ref{fig:nmi_covertype} show the
effectiveness of our algorithm in terms of error reduction and NMI, respectively.
In Fig.~\ref{fig:error_red_covertype}, we observe that a much higher error
reduction is achieved by the aKKm algorithm than the tKKm algorithm even when $m=100$. Its NMI
values are also much higher than those of the 
baseline algorithms. The nysSC algorithm's NMI is similar to that
of the aKKm algorithm only when $m \geq 5,000$, when the 
nysSC algorithm is computationally more expensive.

\textbf{Network Intrusion:}  Table~\ref{tbl:time_kddcup} shows the running time
of all the algorithms.
As observed in the results for the earlier data sets, aKKm takes longer
than tKKm but is much faster than the nysSC algorithm. Fig.~\ref{fig:error_red_kddcup} shows
that aKKm achieves a better error 
reduction than tKKm. It also performs better than both the
tKKm and nysSC algorithms 
in terms of NMI, as shown in Fig.~\ref{fig:nmi_kddcup}.

These results demonstrate that the aKKm algorithm is an efficient technique for
clustering large data sets. 

\begin{table}
\centering
\scriptsize
\begin{tabular}{|c|c|}
\hline
\textbf{Data set}&\textbf{Running time (seconds)}\\
\hline
Imagenet&	0.71 ($\pm0.0.05$) \\
\hline
MNIST&	0.84 ($\pm0.02$) \\
\hline
Forest Cover Type &4.14 ($\pm0.02$) \\
\hline
Network Intrusion&	114.87 ($\pm1.30$)\\
\hline
\end{tabular}
\caption{\footnotesize Time taken by MCLA to combine $10$ partitions}
\label{tbl:time_ensemble}
\end{table}

\begin{figure*}
\centering
\begin{tabular}{cccc}
\multicolumn{4}{c}{\includegraphics[width=4cm,height=0.5cm]{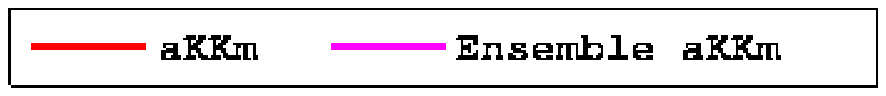}} \\
\subfigure[\scriptsize Imagenet]{\label{fig:nmi_imagenet_ensemble}\includegraphics[width=4.2cm,height=3.2cm]{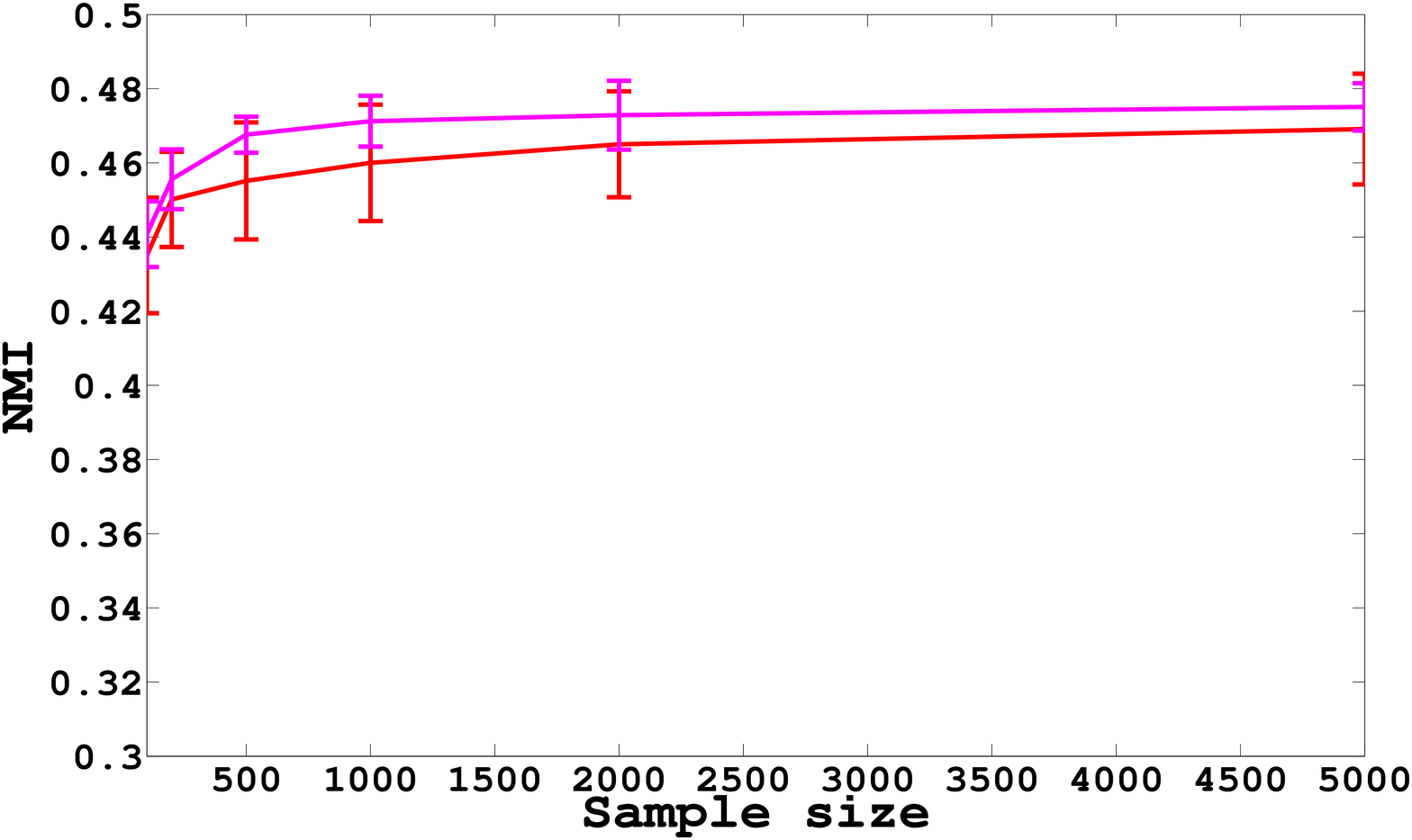}} &
\subfigure[\scriptsize MNIST]{\label{fig:nmi_mnist_ensemble}\includegraphics[width=4.2cm,height=3.2cm]{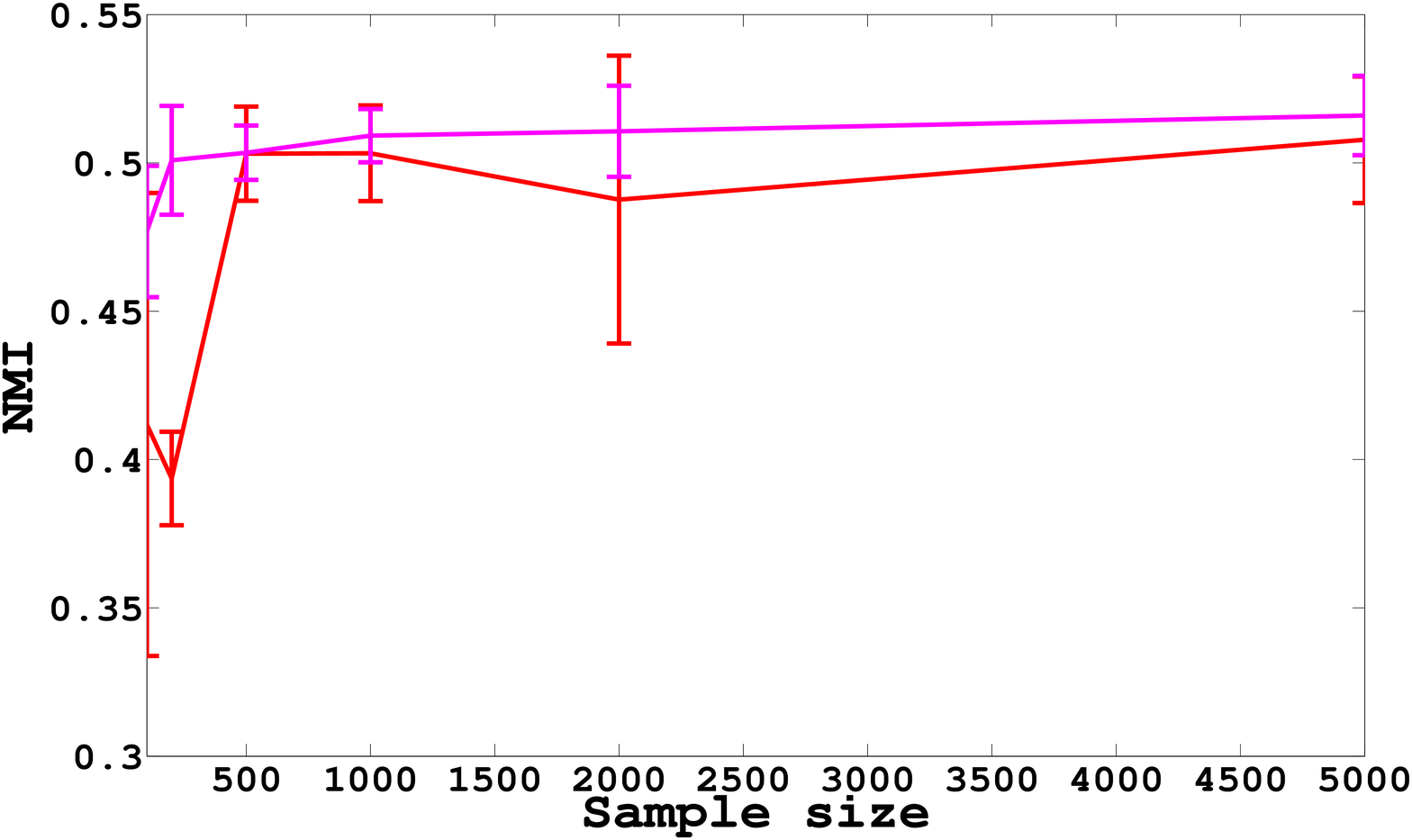}} &
\subfigure[\scriptsize Forest Cover Type]{\label{fig:nmi_covertype_ensemble}\includegraphics[width=4.2cm,height=3.2cm]{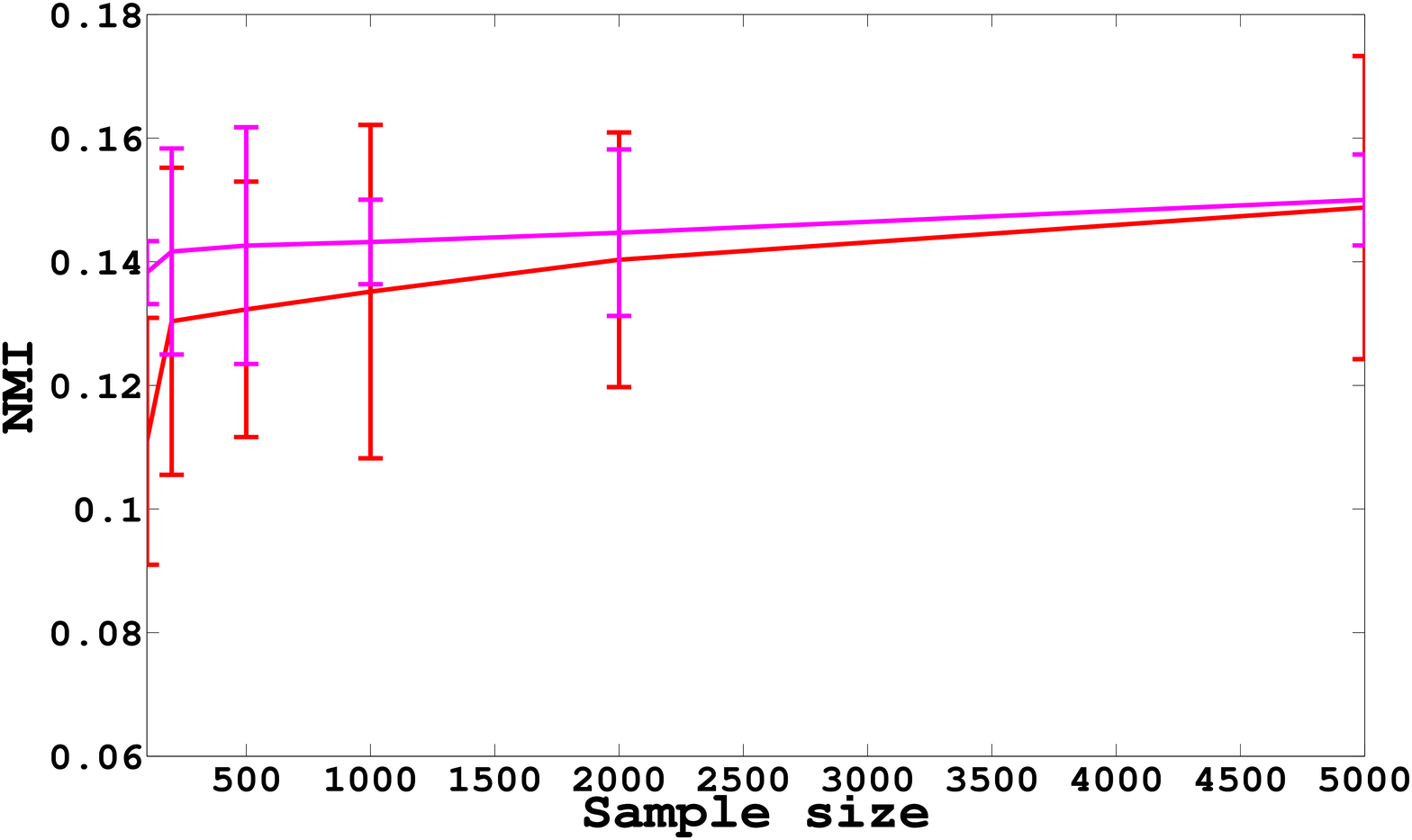}} &
\subfigure[\scriptsize Network Intrusion]{\label{fig:nmi_kddcup_ensemble}\includegraphics[width=4.2cm,height=3.2cm]{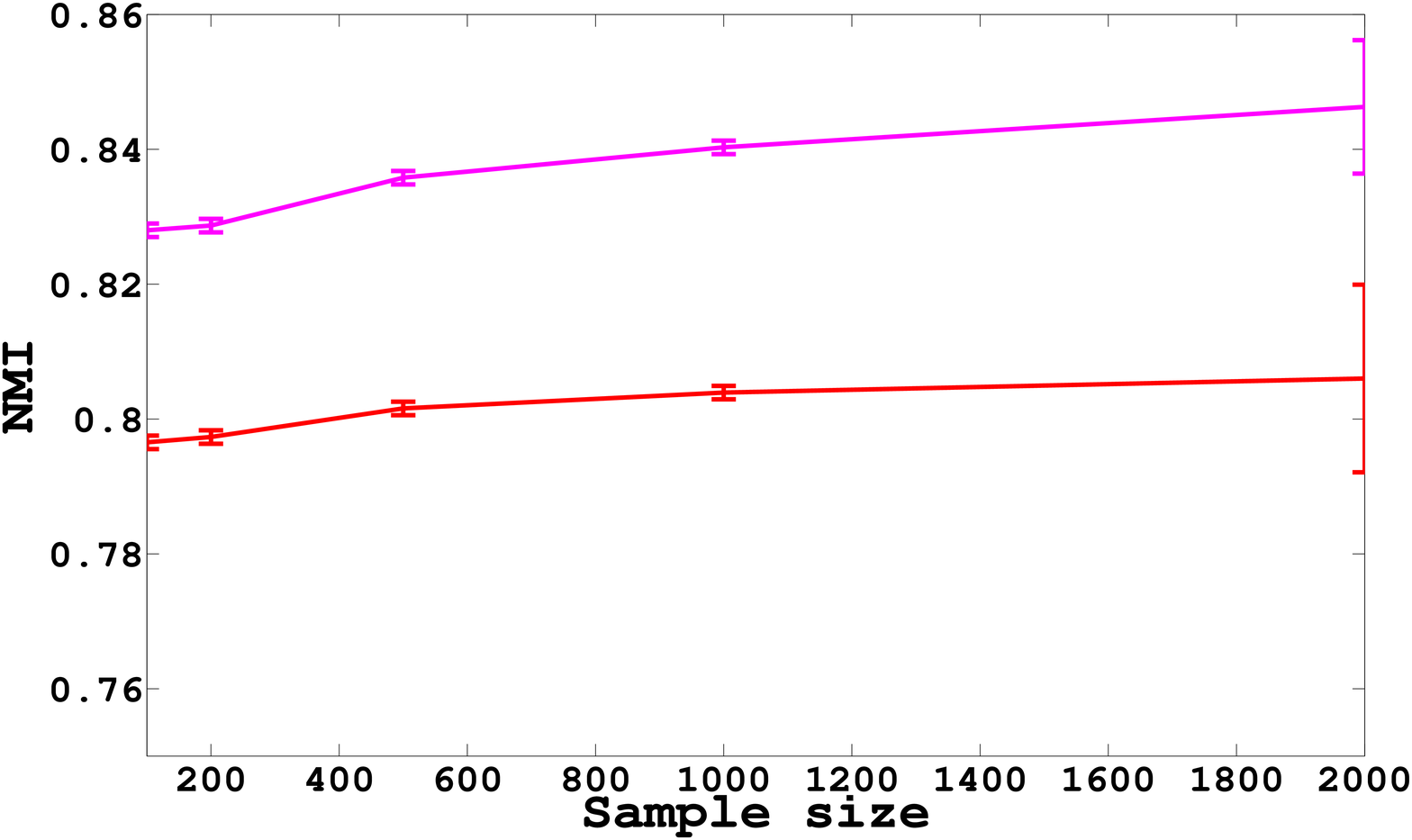}}
\end{tabular}
\caption{\footnotesize Normalized Mutual Information of the consensus partition labels with respect to the true class labels}
\label{fig:nmi_ensemble}
\end{figure*}

\subsection{Ensemble aKKm} We employ the ensemble aKKm algorithm to combine $10$ ensemble partitions. 
The times taken for combining the partitions
(averaged over $10$ runs), for each of the data sets, are shown in Table~\ref{tbl:time_ensemble}. We find that
these
times are small when compared to the clustering times and hence, do not
significantly impact the overall running time. Fig.~\ref{fig:nmi_ensemble} shows
the
improvement in NMI achieved through the use of ensembles on the four data sets. 
A significant improvement is observed, especially when the sample size $m$ is
small. For example, in Fig.~\ref{fig:nmi_mnist_ensemble}, an NMI of around $0.48$ is obtained 
on the MNIST data set, even for a small sample size of $m=100$. 
The NMI increases by about $15\%$ and becomes 
almost equal to the average NMI obtained for a sample size $m=1,000$. On the
Cover Type and Network 
Intrusion data sets, there are significant improvements in the NMI values. We
obtain a 
good clustering accuracy for small values of $m$, thereby
enhancing the efficiency of the approximate kernel \textit{k}-means algorithm.  

\section{Conclusions}
We have proposed an efficient approximation for the kernel \textit{k}-means
algorithm,
suitable for large data sets. The key idea is to avoid computing the full kernel
matrix by restricting the cluster centers to a subspace spanned by a small set
of randomly sampled data points. We show theoretically and empirically that the
proposed algorithm is efficient in terms of both computational complexity and
memory requirement, and is able to yield similar clustering results as the
kernel
\textit{k}-means algorithm using the full kernel matrix. In most cases, the
performance
of our algorithm is better than that of other popular large scale kernel
clustering algorithms. By integrating ensemble clustering methods with the proposed algorithm, its efficiency is further enhanced. 
In the future, we plan to further enhance the scalability of kernel clustering 
by devising more efficient kernel approximation techniques.
We also plan to extend these ideas to semi-supervised clustering.

\section*{Acknowledgements}
This research was supported by the Office of Naval Research (ONR Grant
N00014-11-1-0100). Havens is supported by the National Science Foundation 
under Grant \#1019343 to the Computing Research Association for the CI 
Fellows Project.
\begin{appendix}
\textbf{Proof of Theorem~\ref{thm:m_est}}\newline
To prove this theorem, we use the following results from~\cite{gittens2011spectral},~\cite{candes2007sparsity}, and~\cite{smale-2009-geometry}:
\begin{lemma}(Theorem $1$ from~\cite{gittens2011spectral})
\label{lem:gittens}
Let $A$ be a $n \times n$ positive semi-definite matrix and $S \in \{0,1\}^{n \times m}$ be a random sampling matrix. Let $A$ be partitioned as
\begin{equation*}
 A = \left[\begin{array}{cc}Z_1 & Z_2\end{array}\right]\left[\begin{array}{cc}\Sigma_1 & 0 \\0 & \Sigma_2\end{array}\right]
\left[
\begin{array}{c}
 Z_1^{\top}  \\
Z_2^{\top}
\end{array}
\right],
\end{equation*}
where $Z_1 \in \Re^{n \times C}$, $Z_2 \in \Re^{n \times (n-C)}$, $\Sigma_1 \in \Re^{C \times C}$ and $\Sigma_2 \in \Re^{n \times (n-C)}$.
Define $\Omega_1 = Z_1^{\top}S$ and $\Omega_2 = Z_2^{\top}S$. Assume $\Omega_1$ has full row rank. Then the spectral approximation error of the Nystrom extension 
of $A$ using $S$ as the column sampling matrix satisfies
\begin{eqnarray*}
 \left \| A - AS(S^{\top}AS)^{\dag}S^{\top}A\right\| &\leq& \left \| \Sigma_2\right \|_2 \left (1 + \left \|\Omega_2\Omega_1^{\dag}\right\|_2^2\right).
\end{eqnarray*}
\end{lemma}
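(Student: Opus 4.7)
The plan is to combine the deterministic structural bound in Lemma~2 (specialized to $A=K$ with $S \in \{0,1\}^{n\times m}$ the uniform-without-replacement column-sampling matrix implicit in the construction of $K_B$ and $\Kh$) with coherence-dependent matrix concentration to recover the advertised probabilistic estimate. Since $\|\Sigma_2\|_2 = \lambda_{C+1}$, Lemma~2 immediately gives
\[
\|K - K_B \Kh^{-1} K_B^\top\|_2 \leq \lambda_{C+1}\bigl(1 + \|\Omega_2 \Omega_1^\dag\|_2^2\bigr),
\]
so the theorem reduces to proving the tail bound $\|\Omega_2 \Omega_1^\dag\|_2^2 \leq 8\tau \ln(2/\delta)\sqrt{Cn/m}$ with probability at least $1-\delta$, under the stated sample-size hypothesis on $m$.

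I would split sub-multiplicatively, $\|\Omega_2 \Omega_1^\dag\|_2 \leq \|\Omega_2\|_2 / \sigma_{\min}(\Omega_1)$, and control the two factors by independent coherence-aware concentration arguments, joined via a union bound with failure budget $\delta/2$ apiece. For the denominator, expand $\Omega_1 \Omega_1^\top = \sum_{i=1}^{m} Z_1(s_i,:)^\top Z_1(s_i,:)$ as a sum of $m$ exchangeable rank-one PSD matrices with common mean $I_C/n$; the coherence definition \eqref{eqn:coherence} supplies a uniform spectral bound on each summand proportional to $\tau/n$. A matrix Chernoff inequality in the Candes-Recht/Tropp style then delivers $\sigma_{\min}^2(\Omega_1) \geq m/(2n)$ with the required probability, precisely under the hypothesis $m \geq \tau C\,\max(C_1 \ln p, C_2 \ln(3/\delta))$. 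For the numerator, $\E[\Omega_2 \Omega_2^\top] = (m/n)\,I_{n-C}$ and the row norms satisfy $\|Z_2(i,:)\|^2 \leq 1$ since rows of the orthogonal eigenvector matrix of $K$ are unit vectors; matrix Bernstein then concentrates $\|\Omega_2\|_2^2$ around $m/n$ up to a coherence-dependent fluctuation, and dividing by $\sigma_{\min}(\Omega_1)^2$ yields the claimed $\sqrt{Cn/m}$ factor.

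The main obstacle will be extracting exactly the $\sqrt{Cn/m}$ scaling (and the explicit constant $8$) rather than the weaker $n/m$ rate produced by the deterministic estimate $\|\Omega_2\|_2^2 \leq 1$ (which holds automatically because $\Omega_2^\top \Omega_2 = I_m - \Omega_1^\top \Omega_1 \preceq I_m$) combined with the Chernoff lower bound on $\sigma_{\min}(\Omega_1)$. Recovering the sharper rate forces one to retain the fluctuation term in the matrix Bernstein estimate for $\Omega_2 \Omega_2^\top$ and propagate its coherence-dependent variance proxy carefully through the quotient. A cleaner alternative route exploits the identity
\[
(\Omega_2 \Omega_1^\dag)^\top (\Omega_2 \Omega_1^\dag) = (\Omega_1 \Omega_1^\top)^{-1} - I_C,
\]
which follows from $\Omega_2^\top \Omega_2 = I_m - \Omega_1^\top \Omega_1$ (itself a consequence of $S^\top S = I_m$ and $Z_1 Z_1^\top + Z_2 Z_2^\top = I_n$) together with $\Omega_1^\dag = \Omega_1^\top (\Omega_1 \Omega_1^\top)^{-1}$; this collapses the entire probabilistic argument onto the $C\times C$ matrix $\Omega_1 \Omega_1^\top$, where a single matrix-Bernstein-type deviation bound can simultaneously deliver the coherence-based sample-complexity condition and the final $\sqrt{Cn/m}$ rate. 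I would run the computation along both routes and keep whichever most faithfully reproduces the stated constants before committing to the writeup.
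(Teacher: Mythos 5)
Your proposal does not actually prove the statement at hand. The statement is the \emph{deterministic} structural bound $\| A - AS(S^{\top}AS)^{\dag}S^{\top}A\| \leq \|\Sigma_2\|_2 \left(1 + \|\Omega_2\Omega_1^{\dag}\|_2^2\right)$, which holds for any fixed sampling matrix $S$ for which $\Omega_1$ has full row rank; no probability, coherence, or sample-size condition enters into it. Your writeup opens by invoking exactly this bound as a known fact (``Lemma~2 immediately gives\ldots'') and then devotes all of its effort to the downstream probabilistic estimate $\|\Omega_2\Omega_1^{\dag}\|_2^2 \lesssim \tau\ln(2/\delta)\sqrt{Cn/m}$, which is the content of the paper's Theorem~2, not of this lemma. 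In other words, you have assumed the statement and sketched a proof of one of its consequences. What is missing is the entire structural argument: one must relate the Nystrom residual to the eigendecomposition of $A$, e.g., by writing $A = Y^{\top}Y$ with $Y = \Sigma^{1/2}Z^{\top}$, observing that $AS(S^{\top}AS)^{\dag}S^{\top}A = Y^{\top}P\,Y$ where $P$ is the orthogonal projector onto the range of $YS$, and then bounding $\|Y^{\top}(I-P)Y\|$ by a deflation computation in the split coordinates $\Sigma_1^{1/2}\Omega_1$, $\Sigma_2^{1/2}\Omega_2$ --- this is precisely where the factor $1 + \|\Omega_2\Omega_1^{\dag}\|_2^2$ and the full-row-rank hypothesis on $\Omega_1$ are used. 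None of this appears in your proposal. (For context, the paper itself does not reprove this lemma either; it imports it verbatim from Gittens' work and only supplies a proof for the theorem built on top of it.)

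As a secondary remark, the material you did write is a reasonable outline for Theorem~2, and your identity $(\Omega_2\Omega_1^{\dag})^{\top}(\Omega_2\Omega_1^{\dag}) = (\Omega_1\Omega_1^{\top})^{-1} - I_C$ is correct (it follows from $\Omega_1^{\top}\Omega_1 + \Omega_2^{\top}\Omega_2 = S^{\top}S = I_m$ and $\Omega_1^{\dag} = \Omega_1^{\top}(\Omega_1\Omega_1^{\top})^{-1}$) and is arguably cleaner than the paper's route, which instead bounds $\|Z_2^{\top}SS^{\top}Z_1\|_2$ via a Hilbert-space Bernstein inequality and $\|(Z_1^{\top}SS^{\top}Z_1)^{-1}\|_2$ via a coherence-based bound before multiplying the two. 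But that is credit toward a different statement; as a proof of the lemma you were asked about, the proposal is circular and therefore vacuous.
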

\begin{lemma}(Theorem $1.2$ from~\cite{candes2007sparsity})
\label{lem:candes}
Let $Z \in \Re^{n \times n}$ include the eigenvectors of a positive semi-definite $A$ with coherence $\tau$, where coherence is defined in~\eqref{eqn:coherence}. 
Let $Z_1 \in \Re^{n \times C}$ represent the first $C$ columns of $Z$, containing the first $C$ eigenvectors of $A$, and $S \in \{0,1\}^{n \times m}$ represent the first 
$m$ columns of a random permutation matrix of size $n$. We have, with probability atleast $1-\delta$, 
\begin{eqnarray*}
\left \| \frac{1}{m}Z_1^{\top}SS^{\top}Z_1 - I\right \|_2 < \frac{1}{2},
\end{eqnarray*}
provided that $m \geq C\tau\max(C_1\ln k, C_2 \ln(3/\delta))$, for some fixed positive constants $C_1$ and $C_2$. 
\end{lemma}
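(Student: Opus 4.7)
The plan is to combine the Nyström reconstruction bound of Lemma~\ref{lem:gittens} with the coherence bound of Lemma~\ref{lem:candes}, and to control the remaining random-matrix fluctuation via a standard matrix concentration inequality. I would first identify the sampling matrix $S \in \{0,1\}^{n\times m}$ corresponding to the $m$ randomly drawn points $\Xh$, so that $K_B = KS$ and $\Kh = S^\top K S$, and then apply Lemma~\ref{lem:gittens} with $A = K$ and the partition $\Sigma_1,\Sigma_2,Z_1,Z_2$ specified in the theorem. Since $\|\Sigma_2\|_2 = \lambda_{C+1}$, this immediately reduces the theorem to establishing
\[
\|\Omega_2 \Omega_1^{\dagger}\|_2^{\,2} \;\le\; 8\tau \ln(2/\delta)\sqrt{Cn/m}
\]
with probability at least $1-\delta$, where $\Omega_1 = Z_1^\top S$ and $\Omega_2 = Z_2^\top S$.

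I would then invoke the submultiplicative bound $\|\Omega_2 \Omega_1^{\dagger}\|_2 \le \|\Omega_2\|_2 \cdot \|\Omega_1^{\dagger}\|_2$ and control the two factors separately at confidence $1-\delta/2$ each, combining by a union bound. For $\|\Omega_1^{\dagger}\|_2$, the hypothesis $m \ge \tau C \max(C_1\ln p,\, C_2\ln(3/\delta))$ is precisely the hypothesis of Lemma~\ref{lem:candes} applied to $Z_1$, yielding $\|m^{-1}\Omega_1\Omega_1^\top - I_C\|_2 < 1/2$ with probability $\ge 1-\delta/2$; this forces $\sigma_{\min}(\Omega_1)^2 \ge m/2$ and hence $\|\Omega_1^{\dagger}\|_2 \le \sqrt{2/m}$. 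For $\|\Omega_2\|_2$, I would write $\Omega_2\Omega_2^\top = \sum_{j=1}^m (Z_2^\top s_j)(Z_2^\top s_j)^\top$ as a sum of $m$ independent rank-one random matrices with mean $(m/n) I_{n-C}$, each summand having operator norm at most $\tau/n$ (using that the orthonormal eigenvector matrix $[Z_1,Z_2]$ has unit-norm rows whose squared max is controlled by the coherence $\tau$). A matrix Bernstein (or matrix Chernoff) inequality then delivers a high-probability bound of the form $\|\Omega_2\|_2^{\,2} \le 4\tau\ln(2/\delta)\sqrt{Cnm}$ after absorbing lower-order terms via the lower bound on $m$; multiplying by $\|\Omega_1^{\dagger}\|_2^{\,2} \le 2/m$ produces the claimed $8\tau \ln(2/\delta)\sqrt{Cn/m}$ rate.

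The main obstacle is the concentration step for $\|\Omega_2\|_2$: matching the exact leading constant and the specific $\sqrt{Cn/m}$ rate requires careful bookkeeping in the matrix Bernstein application, in particular balancing the variance proxy $\|\sum_j \E[X_j^2]\|_2$ against the uniform summand bound $\tau/n$ and using the sample-size hypothesis $m \ge \tau C \log(\cdot)$ to suppress the $\log$-heavy Bernstein term in favor of the $\sqrt{Cn/m}$ rate. A secondary bookkeeping issue is that the theorem states the coherence in terms of a single eigenvector matrix $Z$; since $\|Z_1^{(i)}\|^2,\|Z_2^{(i)}\|^2 \le \|[Z_1,Z_2]^{(i)}\|^2 \le \tau/n$, this single coherence upper bounds everything needed to invoke both Lemma~\ref{lem:candes} (on $Z_1$) and the concentration bound (on $Z_2$), so the two halves of the argument can be driven by the same parameter $\tau$.
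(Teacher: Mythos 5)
Your proposal does not actually address the statement in question. The statement is Lemma~\ref{lem:candes} itself --- the concentration estimate $\|\frac{1}{m}Z_1^{\top}SS^{\top}Z_1 - I\|_2 < 1/2$ for a uniformly sampled row-submatrix of an incoherent orthonormal matrix --- whereas what you have written is a proof of Theorem~\ref{thm:m_est}, and that proof \emph{invokes} Lemma~\ref{lem:candes} as a black box at its key step (``the hypothesis $m \ge \tau C\max(C_1\ln p, C_2\ln(3/\delta))$ is precisely the hypothesis of Lemma~\ref{lem:candes} applied to $Z_1$, yielding $\|m^{-1}\Omega_1\Omega_1^{\top}-I_C\|_2<1/2$''). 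Read as a proof of the lemma, this is circular; read as a proof of the theorem, it is aimed at the wrong target. For what it is worth, the paper itself gives no proof of this lemma either: it is imported verbatim as Theorem~1.2 of~\cite{candes2007sparsity}, where the argument rests on an operator (Ahlswede--Winter/matrix-Bernstein type) concentration inequality applied to the sum of rank-one matrices $\sum_{j\in\Delta} a_j a_j^{\top}$ formed from the sampled rows $a_j$ of $Z_1$, each of squared norm at most $\tau C/n$ by the coherence assumption.

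The ingredients for a genuine proof are in fact present in the second half of your write-up: you set up exactly the right kind of rank-one-sum concentration argument, with the coherence controlling the summand norms --- but you aim it at $\Omega_2 = Z_2^{\top}S$ to bound $\|\Omega_2\|_2$, rather than at $\Omega_1 = Z_1^{\top}S$ to bound $\|m^{-1}\Omega_1\Omega_1^{\top}-I\|_2$. Redirecting that matrix-Bernstein computation to $\sum_{j\in\Delta}\bigl(a_ja_j^{\top}-\E[a_ja_j^{\top}]\bigr)$ with $a_j$ the rows of $Z_1$, and verifying that the deviation falls below $1/2$ once $m\gtrsim \tau C\log(\cdot)$, is what a self-contained proof of the lemma requires. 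You would also need to confront the normalization: with $S$ the first $m$ columns of a random permutation matrix one has $\E[SS^{\top}]=(m/n)I$, so the natural centering of $\frac{1}{m}Z_1^{\top}SS^{\top}Z_1$ is $\frac{1}{n}I$ rather than $I$; the identity-centering in the statement implicitly assumes a rescaling of the rows of $Z_1$, which any honest proof must make explicit.
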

\begin{lemma}(Lemma $2$ from~\cite{smale-2009-geometry})
\label{lem:smale}
Let $\mathcal{H}$ be a Hilbert space and $\xi$ be a random variable on $(Z,\rho)$ with values in $H$. Assume $\|\xi\| \leq M < \infty$ almost surely. Denote
$\sigma^2(\xi) = \E(\|\xi\|^2)$. Let $\{z_i\}_{i=1}^m$ be independent random
drawers of $\rho$. For any $0 < \delta < 1$, with confidence $1 - \delta$,
\begin{equation*}
\label{eqn:smale_inequality}
    \left\|\frac{1}{m}\sum_{i=1}^m (\xi_i - \E[\xi_i]) \right\| \leq
\frac{2M\ln(2/\delta)}{m} + \sqrt{\frac{2\sigma^2(\xi)\ln(2/\delta)}{m}}.
\end{equation*}
\end{lemma}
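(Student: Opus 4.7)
The plan is to follow the three-lemma chain assembled in the appendix. First, apply Lemma~\ref{lem:gittens} (Gittens) with $A=K$, whose eigendecomposition splits as $K = Z_1\Sigma_1 Z_1^{\top} + Z_2\Sigma_2 Z_2^{\top}$. Identifying the Nystrom approximation $K_B\Kh^{-1}K_B^{\top}$ with $KS(S^{\top}KS)^{\dag}S^{\top}K$ for the $n\times m$ uniform column-sampling matrix $S$, the lemma delivers the deterministic reduction
\begin{equation*}
\|K - K_B\Kh^{-1}K_B^{\top}\|_2 \leq \|\Sigma_2\|_2\bigl(1 + \|\Omega_2\Omega_1^{\dag}\|_2^2\bigr),
\end{equation*}
with $\|\Sigma_2\|_2 = \lambda_{C+1}$ and $\Omega_j = Z_j^{\top}S$. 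The entire remaining task is to control $\|\Omega_2\Omega_1^{\dag}\|_2^2$; I do so by writing $\Omega_2\Omega_1^{\dag} = \Omega_2\Omega_1^{\top}(\Omega_1\Omega_1^{\top})^{-1}$ and handling the two factors separately.

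For the denominator factor, I invoke Lemma~\ref{lem:candes} (Candes). Under the stated hypothesis $m \geq \tau C\max(C_1\ln p, C_2\ln(3/\delta))$, with probability at least $1 - \delta/2$ (I allocate half of the failure budget here) one has $\|\tfrac{1}{m}\Omega_1\Omega_1^{\top} - I_C\|_2 < 1/2$. This is exactly why the theorem's lower bound on $m$ is needed: it makes $\Omega_1$ full row rank and yields $\sigma_{\min}(\Omega_1) \geq \sqrt{m/2}$, hence $\|(\Omega_1\Omega_1^{\top})^{-1}\|_2 \leq 2/m$.

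The numerator $\Omega_2\Omega_1^{\top}$ is handled with Lemma~\ref{lem:smale} (Smale). The key structural observation is that
\begin{equation*}
\Omega_2\Omega_1^{\top} = Z_2^{\top}SS^{\top}Z_1 = \sum_{i=1}^m Z_{2,(J_i)} Z_{1,(J_i)}^{\top}
\end{equation*}
is a sum of $m$ i.i.d.\ rank-$1$ random matrices (the $J_i$ being the sampled indices), and the mean of each summand vanishes because $\tfrac{1}{n} Z_2^{\top}Z_1 = 0$ by orthogonality of distinct eigenvectors. Viewing $\xi_i := Z_{2,(J_i)} Z_{1,(J_i)}^{\top}$ as an element of the Hilbert space of $(n-C)\times C$ matrices equipped with the Frobenius inner product, the coherence assumption $\tau = n\max_i\|Z_{(i)}\|^2$ delivers both a uniform almost-sure bound $\|\xi_i\|_F \leq M$ and a second-moment bound $\E\|\xi_i\|_F^2 \leq \sigma^2$, with $M$ and $\sigma^2$ expressed in terms of $\tau$, $C$, $n$ via $\|Z_{1,(j)}\|^2 \leq \tau/n$ and the identity $\sum_j \|Z_{1,(j)}\|^2 = C$. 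Lemma~\ref{lem:smale} then gives, with probability at least $1 - \delta/2$, a Bennett-type bound on $\tfrac{1}{m}\|\Omega_2\Omega_1^{\top}\|_F$ whose dominant contribution is $\sqrt{2\sigma^2\ln(2/\delta)/m}$.

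Combining the three pieces via $\|\Omega_2\Omega_1^{\dag}\|_2 \leq \|\Omega_2\Omega_1^{\top}\|_F \cdot \|(\Omega_1\Omega_1^{\top})^{-1}\|_2$ and a union bound (each bad event absorbing $\delta/2$) yields a bound of the announced shape $\lambda_{C+1}\bigl(1 + \text{const}\cdot \tau\ln(2/\delta)\sqrt{Cn/m}\bigr)$. The hardest part is the bookkeeping inside the Smale step: matching the precise constant $8$ and the exact factor $\sqrt{Cn/m}$ requires a careful choice of the coherence-based $M$ and $\sigma^2$, plus verifying that in the sample-size regime $m \geq \tau C\log(\cdot)$ guaranteed by the hypothesis, the variance term of Smale's inequality dominates the $M/m$ term, so that the sub-leading linear-in-$M$ contribution can be absorbed into the leading coefficient. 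By contrast, the Gittens reduction and the Candes full-rank step are direct invocations of their respective lemmas.
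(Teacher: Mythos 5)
The statement you were asked to prove is Lemma~\ref{lem:smale} itself: the Bennett-type concentration inequality for sums of i.i.d.\ Hilbert-space-valued random variables, quoted from Smale and Zhou. Your proposal does not prove this lemma; it proves (in outline) Theorem~\ref{thm:m_est}, the Nystrom approximation error bound, and in doing so it \emph{invokes} Lemma~\ref{lem:smale} as a black box. That is a category error --- you cannot use the statement you are supposed to establish as one of your three ingredients. To be fair, the paper itself gives no proof of this lemma either (it is imported verbatim as Lemma~2 of the cited reference), so there is no in-paper argument to compare against; but a genuine proof would have to establish the inequality
$\left\|\frac{1}{m}\sum_{i=1}^m (\xi_i - \E[\xi_i])\right\| \leq \frac{2M\ln(2/\delta)}{m} + \sqrt{\frac{2\sigma^2(\xi)\ln(2/\delta)}{m}}$
from first principles, e.g.\ by bounding the moments $\E\bigl\|\sum_i(\xi_i-\E[\xi_i])\bigr\|^{\ell}$ for all integers $\ell\geq 2$ using the almost-sure bound $\|\xi\|\leq M$ and the second moment $\sigma^2(\xi)$, and then running a Chernoff/Markov argument on the exponential moment (this is the route in Smale--Zhou; alternatively one can appeal to the Pinelis--Sakhanenko martingale inequality in $2$-smooth Banach spaces). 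None of the machinery in your write-up --- Gittens' deterministic Nystrom error reduction, the Candes coherence-based full-rank guarantee, the rank-one decomposition of $Z_2^{\top}SS^{\top}Z_1$ --- bears on that task.

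As an aside, read as a blind reconstruction of the proof of Theorem~\ref{thm:m_est}, your sketch matches the paper's appendix argument almost step for step (Gittens to reduce to $\|\Omega_2\Omega_1^{\dag}\|_2$, Candes to control $(\Omega_1\Omega_1^{\top})^{-1}$ under the sample-size hypothesis, Smale to concentrate the zero-mean sum $Z_2^{\top}SS^{\top}Z_1$, then a union bound), and your observation that the constant $8$ and the factor $\sqrt{Cn/m}$ require the variance term to dominate the $M/m$ term is a fair account of the only delicate bookkeeping involved. But that is an answer to a different question than the one posed.
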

\begin{proof}
Let $\mathbf{a_i}$ and $\mathbf{b_i}$ represent the $i^{th}$ rows of $Z_1$ and $Z_2$, respectively. Let $\Delta$ be the subset of rows of $Z_1$ and $Z_2$ selected by $S$. 
Using Lemma~\ref{lem:smale}, we have 
\begin{eqnarray}
\label{eqn:limit1}
\left \|Z_2^{\top}SS^{\top}Z_1\right \|_2 &=& \left \|\sum \limits_{k \in \Delta}\left (b_ka_k^{\top} - \E[b_ka_k^{\top}]\right )\right \|_2 \\ \nonumber
&\leq& 2M \ln(2/\delta) + \sqrt{2m\sigma^2\ln(2/\delta)},
\end{eqnarray}
where $M = \max \limits_j \left \| b_j a_j^{\top}\right \|_2 \leq \max \limits_j 
\sqrt{\left | b_j \right |^2 \left | a_j \right |^2} \leq \tau\sqrt{C/n}$ and $\sigma^2 = \E\left [\left \| b_j a_j^{\top}\right \|_2\right] \leq \tau C/n$.
\newline Substituting $\Omega_2\Omega_1^{\dag} = Z_2^{\top}SS^{\top}Z_1(Z_1^{\top}SS^{\top}Z_1)^{-1}$ in the result of Lemma~\ref{lem:gittens}, we have with probability $1-2\delta$,
\begin{eqnarray}
\label{eqn:limit2}
&& \left \| A - AS(S^{\top}AS)^{\dag}S^{\top}A\right \| \\ \nonumber
&\leq& \lambda_{C+1}\left (1 + \left \|Z_2^{\top}SS^{\top}Z_1\right \|_2^2 \left\|(Z_1^{\top}SS^{\top}Z_1)^{-1}\right\|_2^2\right)
\end{eqnarray}
We obtain the result in the theorem by combining the result of Lemma~\ref{lem:candes} with equations~\eqref{eqn:limit1} and~\eqref{eqn:limit2}, and substituting $A$, $AS$ and $S^{\top}AS$ 
with $K$, $K_B$ and $\Kh$ respectively.
\end{proof}
\end{appendix}

\bibliographystyle{IEEEtran}
\bibliography{draft_arxiv}
\end{document}